\def\eqref#1{equation~\ref{#1}}
\def\1{\bm{1}}
\DeclareMathAlphabet{\mathsfit}{\encodingdefault}{\sfdefault}{m}{sl}
\SetMathAlphabet{\mathsfit}{bold}{\encodingdefault}{\sfdefault}{bx}{n}
\newcommand {\copytext}[1]{{\color{black}{}#1}\normalfont}
\newcommand {\fengRebuttal}[1]{{\color{black}{}#1}\normalfont}
\newcommand {\changbinRebuttal}[1]{{\color{black}{}#1}\normalfont}
\newcommand {\termOne}[1]{{\color{red}{}#1}\normalfont}
\newcommand {\termTwo}[1]{{\color{blue}{}#1}\normalfont}
\newtheorem*{thm*}{Theorem}
\newtheorem{prop}{Proposition}
\newcommand\CustomizedCaption[1]{%
  \captionsetup{font=scriptsize}%
  \caption{#1}}
\definecolor{azure(colorwheel)}{rgb}{0.0, 0.5, 1.0}
\definecolor{BrickRed}{HTML}{B6321C}
\definecolor{BlueViolet}{HTML}{324ab2} 
\definecolor{gray(x11gray)}{rgb}{0.75, 0.75, 0.75}
\definecolor{lgray}{rgb}{0.83, 0.83, 0.83}
\title{Hyper Evidential Deep Learning to Quantify Composite Classification Uncertainty}
\author{Changbin Li$^1$, Kangshuo Li$^1$, Yuzhe Ou$^1$, Lance M. Kaplan$^2$, Audun Jøsang$^3$, \\
\textbf{Jin-Hee Cho$^4$, Dong Hyun Jeong$^5$, Feng Chen$^1$} \\
Department of Computer Science, The University of Texas at Dallas$^1$, \\
US Army Research Laboratory$^2$, University of Oslo$^3$,  Virginia Tech$^4$, \\
University of the District of Columbia$^5$\\
\texttt{\{changbin.li,kangshuo.li,yuzhe.ou,feng.chen\}@utdallas.edu} \\
\texttt{lance.m.kaplan.civ@army.mil},
\texttt{audun.josang@mn.uio.no},\\
\texttt{djeong@udc.edu},
\texttt{jicho@vt.edu}
}
\begin{document}

\maketitle

\begin{abstract}
Deep neural networks (DNNs) have been shown to perform well on exclusive, multi-class classification tasks. However, when different classes have similar visual features, it becomes challenging for human annotators to differentiate them. 
This scenario necessitates the use of {composite class labels}.
In this paper, we propose a novel framework called {\em Hyper-Evidential Neural Network} (HENN) that explicitly models predictive uncertainty due to {composite class labels} in training data in the context of the belief theory called {\em Subjective Logic} (SL).
By placing a grouped Dirichlet distribution on the class probabilities, we treat predictions of a neural network as parameters of hyper-subjective opinions and learn the network that collects both single and composite evidence leading to these hyper-opinions by a deterministic DNN from data.  
We introduce a new uncertainty type called \textit{vagueness} originally designed for hyper-opinions in SL to quantify composite classification uncertainty for DNNs.
Our results demonstrate that HENN outperforms its state-of-the-art counterparts based on four image datasets. The code and datasets are available at: \url{https://github.com/Hugo101/HyperEvidentialNN}.
\end{abstract}

\section{Introduction}
\label{c:intro}

\fengRebuttal{In various applications, particularly those dependent on data from low-quality sensors or high-quality data with insufficiently distinct features to separate some individual classes, the resulting data often exhibits significant vagueness and ambiguity~\citep{allison2001missing, ng2011dirichlet}. For example, in security surveillance, grainy images from store cameras may not provide clear enough resolution to accurately distinguish between different individuals or activities, necessitating the use of composite class labels to address this uncertainty~\citep{allison2001missing}.
Similarly, in the field of medical imaging, a radiograph displaying features suggestive of multiple possible diagnoses may require composite labels to capture this uncertainty~\citep{allison2001missing} effectively.}
When different classes have similar visual features in image datasets, it becomes challenging for human annotators to differentiate them. 
An ambiguous image, such as a blurry one where an annotator cannot distinguish between a husky and a wolf, may be labeled with both classes: $\{$husky, wolf$\}$. 
The composite label implies that the image belongs to husky or wolf, but not both. When training data consists of {composite class labels}, existing DNN methods
face the following challenges: (a) how to train a DNN model based on a training set with composite labels; (b) how to train a DNN to predict the composite labels that human annotators could provide; and (c) how to quantify the predictive uncertainty of a DNN due to the evidence of composite labels collected from the training set.

In current literature, partial label learning~\citep{feng2020provably, hong2023longtailed} 
has been proposed to address the first challenge. 
It aims to train a DNN that can disambiguate the partially-labeled training samples and predict singleton class labels for testing data. 
To address the second challenge, conformal prediction~\citep{vovk2005algorithmic,romano2020classification,angelopoulos2020sets} 
is typically considered in safety-critical applications
(e.g., computer vision based medical diagnostics)
and aims to provide a composite set that covers the true class label (e.g., the true diagnosis) with high probability (e.g., 90\%). 
A composite set generated by a conformal prediction method is due to high entropy in the predicted class probabilities rather than {composite class labels} in the training set. 

To the best of our knowledge, limited work has been conducted to address the third challenge. For predictive uncertainty quantification, several types/sources of predictive uncertainty have been studied in deep learning: model uncertainty (mutual information between model parameters and the predicted class probabilities~\citep{depeweg2018decomposition,NEURIPS2018_3ea2db50}), data uncertainty (entropy of the predicted
class probabilities~\citep{Gal2016Uncertainty}), confidence (the largest predicted class probability~\citep{hendrycks2017a}), vacuity (uncertainty due to lack of evidence~\citep{josang2016subjective,shi2020multifaceted}), and dissonance (due to conflicting evidence~\citep{zhao2020uncertainty}). However, it lacks an effective uncertainty measure (here we name \textit{vagueness}) that can quantify the uncertainty associated with predictions of a DNN due to {composite class labels} in the training set. For example, if the prediction (e.g., a singleton class or {a composite class}) of a DNN for a given input sample is based on evidence collected from training samples mostly labeled to composite sets, the vagueness should be high. If the collected evidence is from training samples mostly labeled to singleton classes, the vagueness should be low. 

We propose a new framework called \textit{Hyper Evidential Neural Network} (HENN) that explicitly models the predictive uncertainty of a DNN due to {composite class labels} in the training set. HENN is designed based on the theory of Subjective Logic~\citep{josang2016subjective} and aims to predict the evidence parameters of a hyper-opinion regarding the classification of the input sample. A hyper-opinion defines a belief mass distribution on the composite sets of singleton classes and an uncertainty mass and can be equivalently represented by a grouped Dirichlet distribution (GDD). 
We introduce a new uncertainty measure based on hyper-opinions, originally designed in SL~\citep{josang2016subjective, josang2018uncertainty}, to quantify the \textit{vagueness} of a DNN.  \textbf{Our contributions} are three-fold: (1) We propose a novel framework (HENN) that can quantify \textit{vagueness}, a new uncertainty type for measuring the predictive uncertainty of a DNN due to {composite class labels} in the training set. (2) We propose a new loss function, uncertainty partial cross entropy (UPCE), for HENN training. UPCE is a generalization of the well-known \changbinRebuttal{uncertainty} cross-entropy~\changbinRebuttal{(UCE)\citep{sensoy2018evidential,NEURIPS2019_78efce20}} 
designed for singleton class labels. We provide a theoretical analysis of UPCE and propose a regularization term to future improve the effectiveness of UPCE for HENN learning. (3) We conduct extensive empirical analyses on four image datasets to demonstrate the effectiveness of the HENN in comparison with five competitive baselines.

\vspace{-3mm}

\section{Related Work}
\label{c:related_work}
\vspace{-1.5mm}
\textbf{Evidential Neural Networks} (ENNs) 
\citep{sensoy2018evidential, Sensoy_Kaplan_Cerutti_Saleki_2020, ulmer2023prior} are deterministic neural networks that predict subjective opinions (Dirichlet distributions, equivalently) about the classifications of the input samples. The predicted subjective opinions can be used to quantify predictive uncertainties, such as vacuity (due to lack of evidence) and dissonance (due to conflicting evidence). 
PriorNet \citep{NEURIPS2018_3ea2db50, NEURIPS2019_7dd2ae7d} and PosteriorNet \citep{NEURIPS2020_0eac690d} are in this category. While \citet{bengs2022pitfalls} investigated the flaw of second-order uncertainty estimation of ENNs because of the lack of ground truth of target distribution, many applications~\citep{xie2023dirichletbased, sun2023evidential, park2023active, sapkota2023adaptive} show the usefulness of ENNs in recent years. 

\textbf{Partial label learning} aims to train a DNN that can disambiguate the partially-labeled training samples and predict singleton class labels for testing data. 
Average-based methods~\citep{cour2011learning} treat each candidate label as equally important during training.
Conversely, identification-based approaches~\citep{feng2020provably, xu2021instancedependent, wang2022solar, qiao2023decompositional, yan2023mutual, yan2023partial, hong2023longtailed} aim to disambiguate the effect of noisy labels and maximize outputs based on the most likely ``ground-truth'' label. 
\textbf{Soft label learning} aims to aggregate labels collected from multiple annotators to create probabilistic or ``soft" labels for training data and learn a DNN for singleton class predictions based on the soft labels in the training data~\citep{Peterson_2019_ICCV,collins2022eliciting}. However, both partial and soft-label learning methods are limited to singleton-class predictions but not composite set predictions. Their learned models do not provide uncertainties associated with singleton-class predictions due to \changbinRebuttal{composite class labels} in the training set.


\textbf{Composite set prediction. }
{E-CNN}~\citep{tong2021evidential} could do set prediction for any possible combinations among all singleton classes based on Dempster-Shafer theory.
{RAPS}~\citep{angelopoulos2020sets} is a state-of-the-art conformal prediction method that gives more stable predictive sets by regularizing the small scores of unlikely classes after Platt scaling. However, these methods predict composite sets for data instances with large probabilities for multiple classes. This means that their locations in the representation space are near the decision boundary of the DNN.  In contrast, HENN predicts composite sets for data instances near the training instances with composite labels. 
These two methods are considered baselines in our empirical study. 

\vspace{-2mm}
\section{Hyper-Opinions and Evidential Uncertainty Measures}
\label{c:prelim}

\vspace{-1.5mm}
\subsection{Hyper-Opinions in Subjective Logic and GDD}
\label{sec: hyper_opinions_SL}
\vspace{-2mm}
In the Dempster–Shafer Theory of Evidence (DST)~\citep{shafer}, class probabilities in the Bayesian theory are generalized to belief masses in subjective opinions. It assigns belief masses to subsets of a ground set of exclusive possible states or classes (called ``\textit{domain}''). One can then express `\textit{I do not know}' by assigning all belief masses to the whole domain as an opinion for the truth over possible classes. SL formalizes the DST's notion of belief assignments using a \textit{hyper-opinion}.  More specifically, let $\mathbb{Y}=\{1, ..., K\}$ denote the class domain with the cardinality $K$. Let $\mathscr{R}(\mathbb{Y})$  denote the reduced power set of $\mathbb{Y}$ (called ``\textit{hyper-domain}''), which is the set of the power set of $\mathbb{Y}$ that excludes $\{\mathbb{Y}\}$ and $\{\emptyset\}$. Let $\mathscr{C}(\mathbb{Y})$ denote the set of composite sets: $\mathscr{C}(\mathbb{Y}) = \mathscr{R}(\mathbb{Y})\setminus \{\{1\}, \cdots, \{K\}\}$. 
A hyper-opinion $\omega = ({\bf b}, u)$ assigns a belief mass $b_{S}$ to each element (singleton class or composite set) $S \in \mathscr{R}(\mathbb{Y})$ 
and provides an uncertainty mass of $u$ called \textit{vacuity}. These mass values are all non-negative and sum up to one, i.e., 
\vspace{-1ex}
\begin{equation}
\small 
    u + \sum\nolimits_{{S} \in \mathscr{R}(\mathbb{Y})} b_S = 1.\label{eqn:hyper-opinion-def}
    \vspace{-1ex}
\end{equation}
A belief mass $b_S$ is computed using the evidence for each element $S \in \mathscr{R}(\mathbb{Y})$.
$S$ represents a singleton class if it has a single class element 
(e.g., $S = \{1\}$); otherwise, it represents a composite set (e.g., $S = \{1, 2\}$). Let $e_S\ge 0$ be the evidence derived for $S$, then the belief $b_S$ and the uncertainty mass $u$ are computed as:
\vspace{-2ex}
\begin{equation}
\small 
    b_S = \frac{e_S}{T}, \quad \text{and} \quad u= \frac{K}{T},\label{eqn:hyper-opinion-estimation}
\end{equation}
where $T = \sum\nolimits_{S \in \mathscr{R}(\mathbb{Y})} e_S + K$. 
The uncertainty mass $u$ is inversely proportional to the total evidence: $\sum\nolimits_{S \in \mathscr{R}(\mathbb{Y})} e_S$. When the total evidence is 0, the belief mass for each $S$ needs to be 0, and the uncertainty mass $u$ is 1.
In contrast to Bayesian modeling terms, we define ``\textit{evidence}'' as a measure of the accumulated support from \fengRebuttal{training samples}, indicating that \fengRebuttal{the input sample} should be categorized into a particular singleton class or composite set. \fengRebuttal{The accumulated support can be interpreted as the weighted aggregated number of training samples that support this class or composite set. Unlike a simple count of samples, evidence is typically weighted. This means that not all samples contribute equally to the evidence. For instance, some samples might be more informative or reliable than others, and the network learns to weigh their contribution to the evidence accordingly.} \fengRebuttal{Fig.~\ref{fig:vis_overview} shows examples of high uncertainties for different types and their corresponding probability density plots for 3-class classification.}


A hyper-opinion can be equivalently represented by a hyper-Dirichlet distribution of the class-probability vector ${\bf p}\in \Delta_K$, where $\Delta_K=\{\mathbf{p}|\sum_{k=1}^{K}p_k=1 \text{ and } p_k\in [0,1]\}$ is the $K$-dimensional simplex. It is characterized by class-specific concentration parameters $\bm{\alpha} = [\alpha_1, \cdots, \alpha_{K}]$ and set-specific concentration parameters ${\bf c} = [c_{S}]_{S \in \mathscr{C}(\mathbb{Y})}$. The probability density function (pdf) for possible values of the class-probability vector ${\bf p}$ is given by
\vspace{-1ex}
\begin{equation}
\small
    \mathtt{HyperDir}(\mathbf{p}|\bm{\alpha}, {\bf c}) = Z_{h}^{-1}\prod_{k=1}^K {p}_{k}^{\alpha_k-1}\prod_{S\in \mathscr{C}(\mathbb{Y})} \Big(\sum_{k\in {S}} {p}_{k}\Big)^{c_S}, \ \  \text{for}\ \  \mathbf{p}\in \Delta_K,
    \vspace{-1ex}
\end{equation}
where $Z_{h}$ is the normalization constant that has no analytical form. The concentration parameters of a hyper-Dirichlet distribution $\mathtt{HyperDir}(\mathbf{p}|\bm{\alpha}, {\bf c})$ can be mapped to the evidence parameters of a hyper-opinion as follows: $\alpha_k = e_{k} + 1$ for $k=1, \cdots, K$ and $c_S = e_S, \forall S \in \mathscr{C}(\mathbb{Y})$.

This paper considers an important special instance of Hyper-Dirichlet distribution: the grouped Dirichlet distribution (GDD), as it offers the practical appeal of an analytical normalization factor that can be easily calculated. GDD assumes that the composite sets in $\mathscr{C}(\mathbb{Y})$ 
represent a partition of the ground set of singleton classes, i.e., $\bm{\mathcal{S}}=\{\mathcal{S}_{1},...,\mathcal{S}_{\eta}\}$, where  $\cup_{j=1}^{\eta}\mathcal{S}_j=\mathbb{Y}$ and $\mathcal{S}_{i} \cap \mathcal{S}_{j} = \emptyset$,\text{ }  $\forall i,j\in\{1,...,\eta\}$, and $i\neq j$.  Let $c_j$ denote $c_{\mathcal{S}_j}$.  The pdf of GDD has the following form: 
\vspace{-1ex}
\begin{equation}
\footnotesize
\label{eq:GDD}
    \mathtt{GDD}(\mathbf{p}|\bm{\alpha},{\bf c}) = Z^{-1}\prod_{k=1}^K {p}_{k}^{\alpha_{k}-1}\prod_{j=1}^{\eta} \Big(\sum_{l\in \mathcal{S}_{j}} {p}_{l}\Big)^{c_{j}}, \text{  for  } \mathbf{p}\in \Delta_K,
    \vspace{-1ex}
\end{equation}
where $Z = \Big[\prod_{j=1}^{\eta} B\left(\left\{\alpha_{l}\right\}_{l\in \mathcal{S}_{j}}\right) \Big] B\left(\left\{\beta_j\right\}_{j=1}^{\eta}\right)$, $\beta_j=\sum_{l\in\mathcal{S}_j}\alpha_l + c_j$, and $B(\cdot)$ is \textit{beta} function.  We aim to design and train evidential neural networks that can effectively predict the hyper-opinions about the uncertainty-aware classification of the input sample. As discussed in the following subsection, the predicted hyper-opinions can quantify \textit{vagueness} and other uncertainty types.



\textbf{Relations with multinomial opinions and Dirichlet distribution.} 
{A hyper-opinion is a generalized version of the \textit{multinomial opinion} that assigns belief masses to singleton classes but not to composite sets~\citep{josang2018uncertainty}. In particular, if there is no evidence for the composite sets in $\mathscr{C}(\mathbb{Y})$, then $e_S=0$, $\forall S \in \mathscr{C}(\mathbb{Y}).$ It follows that ${\bf c} = {\bf 0}$ and the resulting hyper-opinion only assigns belief masses to singleton classes, and the corresponding Hyper Dirichlet distribution $\mathtt{HyperDir}(\bm{\alpha}, {\bf 0})$ is equivalent to the Dirichlet distribution $\mathtt{Dir}(\bm{\alpha})$. 

\vspace{-1ex}
\subsection{Vagueness and Other Evidential Uncertainty Measures by Hyper-Opinions}
\label{sec:belief_vagueness}
\vspace{-1ex}

SL explicitly represents second-order probabilistic uncertainty through a hyper-opinion consisting of a belief mass distribution on $\mathscr{R}(\mathbb{Y})$ and uncertainty mass. A hyper-opinion can be used to quantify different types of uncertainty, such as vagueness (due to composite evidence), vacuity (due to lack of evidence), and dissonance (due to conflicting evidence). 
The \textit{vagueness} uncertainty measure (also named total vague belief mass) of a hyper-opinion can be estimated as:
\vspace{-1ex}
\begin{eqnarray} \label{eq:vague}
\small 
    vag(\omega) = \sum\nolimits_{S \in \mathscr{C}(\mathbb{Y})} b_S. 
\end{eqnarray}
\copytext{An opinion is totally vague when $vag(\omega) = 1$, and is partially vague when $0<vag(\omega)<1$. An opinion has mono-vagueness when only a single composite set has (vague) belief mass assigned to it. On the other hand, an opinion has pluri-vagueness when several composite sets have (vague) belief masses assigned to them.

The \textit{vacuity} uncertainty corresponds to the uncertainty mass $u$ in a hyper-opinion and is calculated as $vac(\omega) = K/T$ in Eq.~\ref{eqn:hyper-opinion-estimation}. The \textit{dissonance} of a hyper-opinion can be derived from the same amount of conflicting evidence for different singleton classes or composite sets (see Eq.\ref{eqn:dissonance} in App.) for its estimation based on the hyper-opinion). 
The vagueness $vag(\omega)$ is different from the vacuity $vac(\omega)$ 
in that vagueness results from existing evidence of composite sets that fail to discriminate between specific singleton classes, but vacuity reflects the lack of evidence for any singleton classes and composite sets. A totally vacuous opinion does not contain any vagueness by definition.} The vagueness $vag(\omega)$ is different from the dissonance $diss(\omega)$ 
in that vagueness is due to evidence on composite sets, whereas dissonance reflects conflicting evidence collected from different singleton classes or composite sets.
It is possible that an opinion has a high vagueness (e.g., $vag(\omega) = 1$) but a low dissonance (e.g., $diss(\omega) = 0$). \copytext{\textit{Hyper-opinions can contain vagueness, whereas multinomial opinions never contain vagueness. The ability to express vagueness is thus the main aspect that makes hyper-opinions different from multinomial opinions.}}
\vspace{-1ex}

\begin{table}[htbp]
\tiny
\vspace{-0.1cm}
    \centering
    \caption{An example of hyper-opinion with low vacuity and dissonance but high vagueness.}\label{tab:bm_vague_ex}
    \vspace{-2ex}
    \resizebox{\columnwidth}{!}{
        \begin{tabular}{r|c|cccccc|c|ccc}
          \toprule 
          \multicolumn{2}{c|}{\backslashbox{$\omega$}{$S\in \mathscr{R}(\mathbb{Y})$}}  & \{1\} & \{2\} & \{3\} & \{1,2\} & \{1,3\} & \{2,3\} &$u$ & $vac(\omega)$ & $diss(\omega)$ &$vag(\omega)$\\
          \midrule 
          \multirow{2}{*}{1} &Evidence $e_{{S}}$        & 3   & 0 & 0 & 0 & 0 & 24   & \multirow{2}{*}{0.1}   & \multirow{2}{*}{0.1} & \multirow{2}{*}{0.2}  & \multirow{2}{*}{0.8}\\
          &Belief Mass $b_{{S}}$     & 0.1 & 0 & 0 & 0 & 0 & 0.8  & &  & & \\
          \hline
          \multirow{2}{*}{2} &Evidence $e_{{S}}$        & 3   & 12 & 12 & 0 & 0 & 0  & \multirow{2}{*}{0.1}   & \multirow{2}{*}{0.1} & \multirow{2}{*}{0.744}  & \multirow{2}{*}{0}\\
          &Belief Mass $b_{{S}}$     & 0.1 & 0.4 & 0.4 & 0 & 0 & 0  &  &  &  & \\
          \bottomrule 
        \end{tabular}
    }
    \vspace{-2ex}
\end{table}
Tab.~\ref{tab:bm_vague_ex} provides two examples ($\mathbb{Y}=\{1,2,3\},\ K=3$). The first example of hyper-opinion reflects high vagueness. There is high evidence observed on the composite set $\{2,3\}$, and it causes high vagueness. 
There is also conflicting evidence between $\{1\}$ and $\{2,3\}$ that contributes to dissonance. However, the evidence on $\{2,3\}$ dominates the evidence on $\{1\}$, the dissonance is low. The total evidence is large for $K=3$, and it results in low vacuity.  The second example is a hyper-opinion with low vagueness and high dissonance, where the evidence in classes 2 and 3 is equally distributed on singletons instead of a composite set and becomes the conflicting evidence between the two classes. 

\vspace{1mm}
\vspace{-1ex}

\begin{figure}[t!] 
    \begin{subfigure}[b]{0.35\textwidth} 
        \begin{subfigure}[t]{.4\textwidth}
            \includegraphics[width=\textwidth]{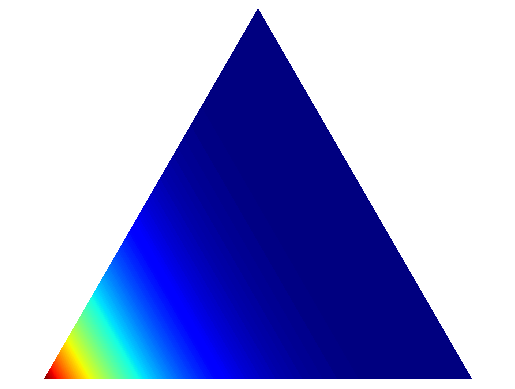}
            \CustomizedCaption{Confident}
            \label{fig_vis_confident}
        \end{subfigure}\hfill%
        \begin{subfigure}[t]{.4\textwidth}
            \includegraphics[width=\textwidth]{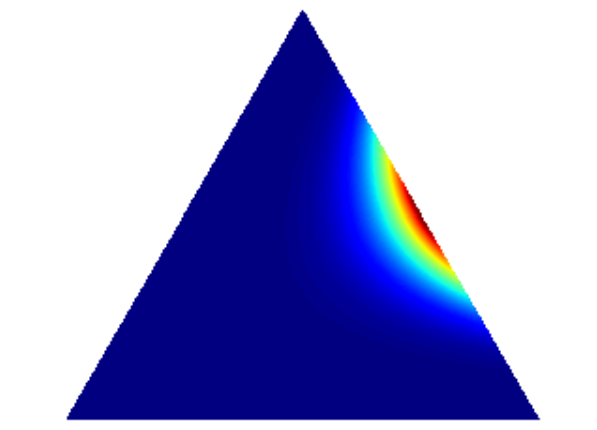}
            \CustomizedCaption{Dissonance}
             \label{fig_vis_dissonance}
        \end{subfigure}
        \medskip
        \begin{subfigure}[t]{.4\textwidth}
            \includegraphics[width=\textwidth]{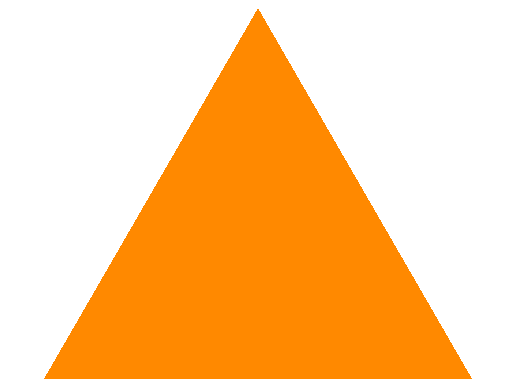}
            \CustomizedCaption{Vacuity}
             \label{fig_vis_vacuity}
        \end{subfigure}\hfill%
        \begin{subfigure}[t]{.4\textwidth}
            \includegraphics[width=\textwidth]{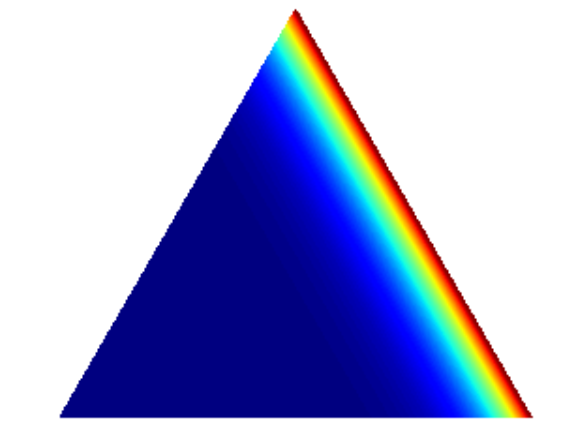}
            \CustomizedCaption{Vagueness}
             \label{fig_vis_vagueness}
        \end{subfigure}
        \label{fig:visual}
        \caption*{\small{Examples of different uncertainties.}}
    \end{subfigure}\hfill%
    \begin{subfigure}[b]{0.60\textwidth}
        \includegraphics[width=\textwidth]{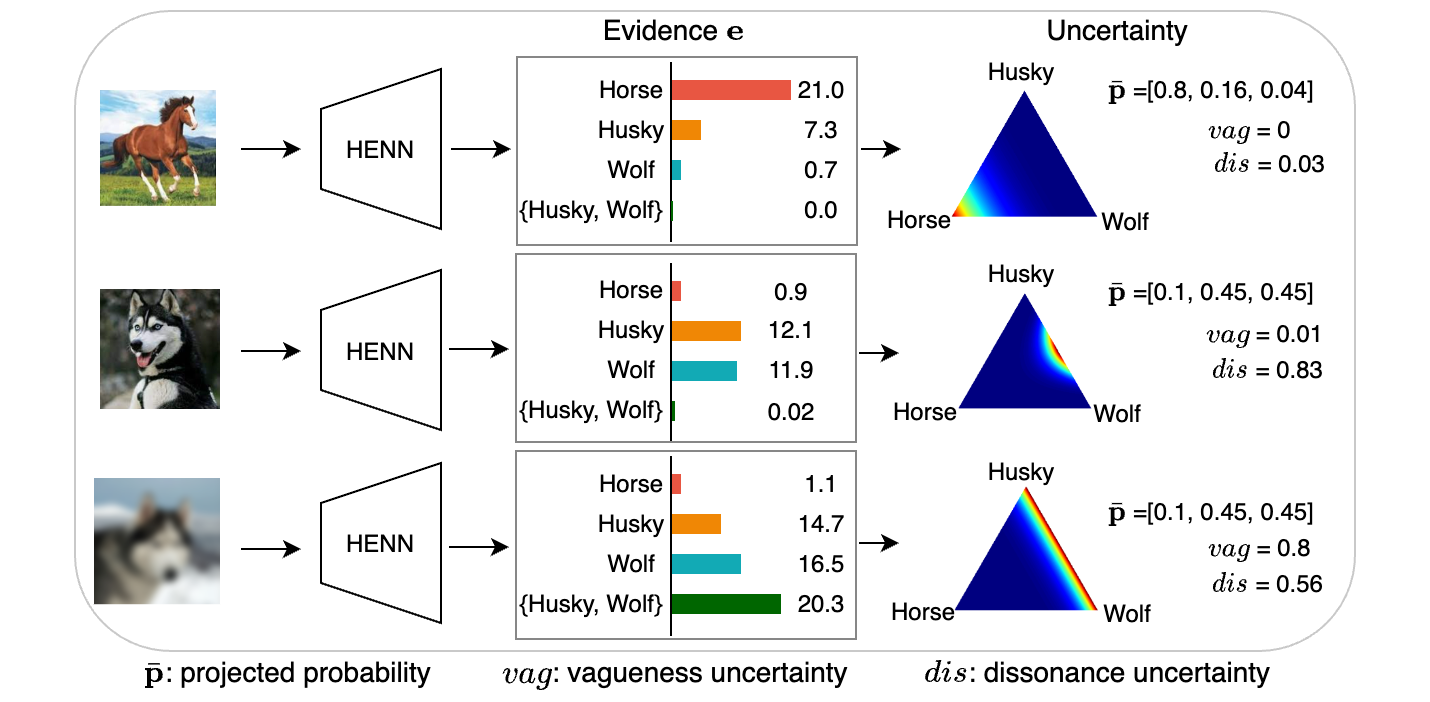}
        \caption*{\small{Different predictive uncertainties from HENN.}}
    \end{subfigure}
    \vspace{-2mm}
\caption{\small{\textit{Left}: Different probability densities corresponding to specific uncertainty type for 3-class classification (Brighter colors mean higher density). Each corner represents a class. (a) A confident prediction. (b) Conflicting evidence exists for two classes (\textit{dissonance} or \textit{data} uncertainty). (c) Uniform Dirichlet distribution with no evidence for known classes (i.e., OOD inputs) (\textit{vacuity} uncertainty). (d) There is enough evidence to exclude one class but still fail to determine the final prediction from the rest of the classes. \textit{Right}: The first example shows a confident prediction w/o \textit{vagueness} and low \textit{dissonance}. The other two examples have the same projected probabilities but different sources of uncertainties. One is caused by conflicting evidence (\textit{dissonance}), and the other one is caused by vague evidence only for the final decision from the set \{Husky, Wolf\} (\textit{vagueness}). 
Fig.(d) is drawn by grouped Dirichlet distribution, not ordinary Dirichlet distribution.}}
\label{fig:vis_overview}
\vspace{-5mm}
\end{figure}

\section{Hyper Evidential Neural Network}
\vspace{-2ex}
In this section, we will present a novel hyper-evidential neural network (HENN) that predicts a hyper-opinion about the classification of the input feature vector ${\bf x}$. 
The predicted hyper-opinion can be used to quantify different types of predictive uncertainty, such as vagueness, vacuity, and dissonance, as discussed in Section~\ref{sec:belief_vagueness}. We consider the scenario where the composite sets form a partition of the ground set of singleton classes, $\bm{\mathcal{S}}=\{\mathcal{S}_{1},...,\mathcal{S}_{\eta}\}$, and the hyper-opinion can be equivalently represented by a grouped Dirichlet distribution. Formally, the HENN is defined as a function 
$f(\cdot, \bm{\theta}): \mathbbm{R}^D \rightarrow \mathbbm{R}_{+}^{K + \eta}$, mapping an input $\mathbf{x}\in \mathbbm{R}^D$ to the evidence vector $\mathbf{e}\in \mathbbm{R}^{K + \eta}$, where $\bm{\theta}$ are the network parameters, $\mathbf{e} = [e_1, \cdots, e_K, e_{\mathcal{S}_1}, \cdots, e_{\mathcal{S}_\eta}]$ and $e_k$ and $e_{\mathcal{S}_i}$  refer to the predicted evidence values of the singleton class $k$ and the composite set $\mathcal{S}_i$, respectively. 
The architecture of HENNs for classification is similar to classical neural networks. The only difference is that the softmax layer is replaced with an activation layer (e.g., Softplus or ReLU) 
to ascertain non-negative and unbounded output that is considered as the evidence vector for the predicted hyper-opinion (or grouped Dirichlet distribution, equivalently). 
Based on the predicted evidence vector, we can then predict the singleton class or composite set that has the largest evidence: 
\vspace{-1ex}
\begin{equation}
\label{eq:comp_pred}
\small 
    m = {\arg\max}_{i \in \{1, 2, ..., K+\eta\}}  e_i
\end{equation}
If $m \in \{1, \cdots, K\}$, then the prediction is a singleton class; otherwise, it is the composite set $\mathcal{S}_{m-K}\in \bm{\mathcal{S}}$. We can also transform the evidence vector ${\bf e}$ to a grouped Dirichlet distribution $\mathtt{GDD}({\bf p} | \bm{\alpha}, {\bf c})$ based on the mapping between the parameters $(\bm{\alpha}, {\bf c})$ and ${\bf e}$: $\bm{\alpha} = [e_1+1, \cdots, e_K+1]$ and ${\bf c} = [e_{\mathcal{S}_1}, \cdots, e_{\mathcal{S}_\eta}]$. The relations between this distribution $\mathtt{GDD}(\bm{\alpha}, {\bf c})$, the class probability vector ${\bf p}$, and the class label $y$ have the form: 
\begin{equation}
\small 
y\sim \mathtt{Cat}({\bf p}), \quad {\bf p}\sim \mathtt{GDD}(\boldsymbol{\alpha},\bm{c}), \quad {\bf e} = f(\mathbf{x}; \bm{\theta}), 
\end{equation}
where $\mathtt{Cat}({\bf p})$ is a categorical distribution on the class variable $y$. 
The expectation of the class-probability vector ${\bf p}$ has the form: 
\vspace{-2ex}
\begin{equation}
\label{eq:exp_GDD}
\small 
\bar{\bf p} := \mathbb{E}_{{\bf p}\sim \mathtt{GDD}({\bf p} | \bm{\alpha}, {\bf c})}[{\bf p}],\quad  \bar{p}_k = \mathbb{E} [p_k] = \frac{\alpha_k}{\beta_0} \bigg(\sum_{j=1}^{\eta} \frac{\beta_{j}}{\alpha_{\mathcal{S}_j}} \cdot \mathbbm{1}(k \in \mathcal{S}_j) \bigg)\ \  \text{for}\ \ k\in \{1, \cdots, K\},
\end{equation}
where $\beta_0 = \sum_{j=1}^{\eta}\beta_j$, $\alpha_{\mathcal{S}_j}=\sum_{l\in \mathcal{S}_j}\alpha_l$, $\beta_j=\alpha_{\mathcal{S}_j}+c_j$. Then, use $\bar{\bf p}$ as the projected class probability vector, we can also predict the singleton class with the largest projected class probability: 
\begin{equation}
\label{eq:single_pred}
\small 
    y = {\arg\max}_{k \in \{1, 2, ..., K\}}  \bar{p}_k. 
\end{equation}


\vspace{-1ex}
\textbf{Relations with ENNs:} ENNs (a.k.a prior networks~\citep{NEURIPS2018_3ea2db50}) and their variants (e.g., posterior networks~\citep{NEURIPS2020_0eac690d}) are deterministic neural networks that predict the multinomial opinion (Dirichlet distribution, equivalently) about the singleton class label of the input sample. In comparison, HENNs are deterministic neural networks that predict the hyper-opinion (GDD, equivalently) about the classification of the input sample into a singleton class or composite class label. As discussed in Section~\ref{sec:belief_vagueness}, hyper-opinion has the ability to express the vagueness uncertainty (due to evidence collected from composite labels in the training data), but multinomial opinions never contain vagueness. HENNs are designed to handle composite labels in the training set and can quantify the composite classification uncertainty using the vagueness measure, whereas ENNs can not. In addition, as multinomial opinion is a special instance of hyper-opinion, by setting the evidence on composite sets to zero, HENNs include ENNs as a special instance.  
\vspace{-1ex}

\subsection{The Loss function and Regularization for HENN Learning}
\label{model-training}

Let $\mathcal{D}=\{(\mathbf{x}^{(i)}, \mathbf{\tilde{y}}^{(i)})\}_{i=1}^{N}$ denote a training set, where ${\bf x}$ refers to the input and ${\mathbf{\tilde{y}}}\in \{0, 1\}^K$ refers to the binary vector representation of a singleton class label or composite set label. For instance, $\mathbf{\tilde{y}}^{(i)} = [0, 1, 1, 0]^\top$ represents a composite set label $\{2, 3\}$ for the sample $i$ and $\mathbf{\tilde{y}}^{(i)} = [0, 0, 1, 0]^\top$ represents a singleton class label $3$. In the related task of partial label learning~\citep{cour2011learning}, the partial cross-entropy (PCE) is used as a loss function for learning a softmax-based NN based on composite set (or called partial) labels: 
\vspace{-2ex}
\begin{equation} \label{eq:pce}
\small 
    \texttt{PCE}({\bf p}, \mathbf{\tilde{y}}) = -\log (\sum\nolimits_{k=1}^{K} \tilde{y}_k  p_k), 
\end{equation}
where ${\bf p}$ refers to the class probability vector predicted by the softmax layer of the NN. When ${\mathbf{\tilde{y}}}$ is a singleton class label, the PCE loss becomes equivalent to the standard cross-entropy (CE) loss: $\texttt{CE}({\bf p}, \mathbf{\tilde{y}}) = -\sum_{k=1}^K \tilde{y}_k\log p_k$. As the output of a HENN is a GDD of ${\bf p}$, we propose a new loss function, namely,  Uncertainty Partial Cross Entropy (UPCE), to learn the parameters of a HENN:
\vspace{-0.5ex}
\begin{equation} \label{eq:upce}
\small 
    \texttt{UPCE}({\bf x}, \mathbf{\tilde{y}}; \bm{\theta}) = \mathbb{E}_{{\bf p}\sim \mathtt{GDD}({\bf p}|\bm{\alpha}, {\bf c})} [\texttt{PCE}({\bf p}, \mathbf{\tilde{y}})],
\end{equation}
where $\bm{\theta}$ refers to the network parameters of the HENN. We note that if $\mathbf{\tilde{y}}$ is a singleton class label, and we replace GDD with the Dirichlet distribution, then the UPCE loss becomes equivalent to the default UCE loss used in learning ENNs: \fengRebuttal{$\texttt{UCE}({\bf x}, \mathbf{\tilde{y}}) = \mathbb{E}_{{\bf p}\sim \mathtt{Dir}({\bf p}|\bm{\alpha})} [\texttt{CE}({\bf p}, \mathbf{\tilde{y}})]$.} 
 Our proposed UPCE loss has the following analytical form (see our Proposition~\ref{prop:upce_form} in App.~\ref{app_uace_GDD} for derivations): 
\vspace{-1mm}
\begin{equation}
\footnotesize
\label{eq:uace_GDD}
\begin{aligned}
    \texttt{UPCE}({\bf x}, \mathbf{\tilde{y}}; \bm{\theta}) & = \Big[\psi(\beta_{0}^{(i)}) -\psi(\beta_{\text{IC}}^{(i)}) \Big] \mathbbm{1}(\|\mathbf{\tilde{y}}^{(i)}\|_1 > 1) + \\
    & \ \ \ \ \ \Big[  \Bigl( \psi(\beta_{0}^{(i)})-\psi(\alpha_{\text{IS}}^{(i)}) \Bigr) -  \sum_{j=1}^{\eta}\Bigl( \psi(\beta_j^{(i)}) - \psi(\alpha_{\mathcal{S}_j}^{(i)})\Bigr) \mathbbm{1}(y^{(i)} \in \mathcal{S}_j) \Big] \mathbbm{1}(\|\mathbf{\tilde{y}}^{(i)}\|_1 = 1),
\end{aligned}
\end{equation}
where the first term corresponds to composite example and the second term refers to singleton example respectively. In particular, $\psi(\cdot)$ is \textit{digamma} function, $\beta_{0}^{(i)}=\sum_{j=1}^{\eta}\beta_j^{(i)}=\|\bm{\alpha}^{(i)}\|+\|\bm{c}^{(i)}\|$ denotes the sum of all positive strength parameters for the $i$-th sample, $\alpha_{\mathcal{S}_j}^{(i)}=\sum_{l\in \mathcal{S}_j}\alpha_l^{(i)}$ is the sum of strength parameters corresponding all singleton classes in the partition $\mathcal{S}_j$.
For simplicity, we let $\beta_{\text{IC}}$ represent $\text{IC}$-th $\beta$ corresponding to one of a composite label in the list $\{\mathcal{S}_1,..., \mathcal{S}_{\eta}\}$ which contains the singleton ground truth, and $\alpha_{\text{IS}}$ denote ${\text{IS}}$-th $\alpha$ corresponding to the singleton target. 

Our proposed UPCE loss function has the lower bound (see Proposition~\ref{prop:lowerBound_upce} in App.~\ref{app:theory_bounds}): 
\vspace{-1mm}
\begin{equation}
\small 
\texttt{UPCE}({\bf x}, \mathbf{\tilde{y}}; \bm{\theta})\ge  \texttt{PCE}(\mathbb{E}_{{\bf p}\sim \mathtt{GDD}({\bf p}|\bm{\alpha}, {\bf c})}[{\bf p}], \mathbf{\tilde{y}}; \bm{\theta}). 
\end{equation}
It follows that the minimization of the UPCE loss ensures the minimization of the PCE loss between the projected class-probability vector $\mathbb{E}_{\mathtt{GDD}({\bf p}|\bm{\alpha}, {\bf c})}[{\bf p}]$ and the composite set label $\mathbf{\tilde{y}}$. This result indicates a favorable property of UPCE: The HENN with the UPCE loss function is optimized to output high projected probabilities for the classes belonging to the composite set label but low projected probabilities for the other classes. 

However, as shown in Proposition~\ref{prop:limit}, we observed an issue with the UPCE loss function in differentiating the evidence values of singleton classes and composite sets. In particular, when $\mathbf{\tilde{y}}$ is a composite set label, the learned HENN based on UPCE tends to predict large evidence values for both the composite set label  $\mathbf{\tilde{y}}$ and for all the singleton classes belonging to the composite set. Similarly, when $\mathbf{\tilde{y}}$ is a singleton class label, the learned HENN based on UPCE tends to predict large evidence values for this singleton class and all the composite set that contains this singleton class as an element.  

\begin{restatable}[Properties of the empirical UPCE risk function]{prop}{FirstProp} 
\label{prop:limit}
Assume that the universal approximation property (UAP) holds for a HENN, i.e., the network can learn an arbitrary mapping function from the input feature vector ${\bf x}$ to the evidence vector ${\bf e}$. Then,  the empirical UPCE risk function $R(f) = \frac{1}{N} \sum\nolimits_{i=1}^N \texttt{UPCE}({\bf x}^{(i)}, \mathbf{\tilde{y}}^{(i)}; \bm{\theta})$ approaches the infimum $0$ if the solution $\bm{\theta}^\star$ satisfies the following properties, with ${\bf e} = f({\bf x}; \bm{\theta}^\star)$: (1)  $\forall ({\bf x}, \mathbf{\tilde{y}}) \in \mathcal{D}$, where $\mathbf{\tilde{y}}$ denotes a singleton class label, $k \in [K]$, the predicted evidence values $e_k\rightarrow +\infty$ and $e_{\mathcal{S}_i} \rightarrow +\infty, \forall \mathcal{S}_i\in \bm{\mathcal{S}}$, such that $k\in \mathcal{S}_i$; and (2) $\forall ({\bf x}, \mathbf{\tilde{y}}) \in \mathcal{D}$, where $\mathbf{\tilde{y}}$ denotes a composite set label $\mathcal{S}_i$, the predicted evidence values $e_{\mathcal{S}_i}\rightarrow +\infty$ and $e_{k} \rightarrow +\infty, \forall k \in \mathcal{S}_i$. 
\end{restatable}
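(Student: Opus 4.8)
The plan is to evaluate the empirical risk using the closed form of $\texttt{UPCE}$ in Eq.~\ref{eq:uace_GDD} together with two elementary properties of the digamma function $\psi$: it is strictly increasing on $(0,\infty)$, and $\psi(x)=\ln x+O(1/x)$ as $x\to\infty$, so $\psi(x)-\psi(y)\to 0$ whenever $x,y\to+\infty$ with $x/y\to 1$. There are three things to verify: (a) $R(f)\ge 0$ for every $\bm\theta$, so $0$ is a lower bound; (b) along any configuration realizing the limiting behavior in (1)--(2), each per-sample term of $R(f)$ tends to $0$, so $R(f)\to 0$; and (c) $R(f)>0$ for every finite $\bm\theta$, so the value $0$ is approached but never attained. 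For (a): $\texttt{UPCE}=\mathbb{E}_{{\bf p}\sim\mathtt{GDD}}[\texttt{PCE}({\bf p},\tilde{\bf y})]$ and $\texttt{PCE}=-\log(\sum_k\tilde y_kp_k)\ge 0$ since $\sum_k\tilde y_kp_k\le\sum_kp_k=1$; averaging preserves this. For (c): when $\tilde{\bf y}$ is composite the term is $\psi(\beta_0)-\psi(\beta_{\mathrm{IC}})>0$ because $\beta_0=\sum_{j=1}^\eta\beta_j>\beta_{\mathrm{IC}}$ (here $\eta\ge 2$ and all $\beta_j>0$), and when $\tilde{\bf y}$ is singleton, Proposition~\ref{prop:lowerBound_upce} lower-bounds the term by $\texttt{PCE}(\bar{\bf p},\tilde{\bf y})=-\log\bar p_k>0$, the strict inequality coming from $\bar p_k=\tfrac{\alpha_k}{\alpha_{\mathcal S_{j^\star}}}\cdot\tfrac{\beta_{j^\star}}{\beta_0}<1$ (the block $\mathcal S_{j^\star}$ is a genuine composite set, so $\alpha_k<\alpha_{\mathcal S_{j^\star}}$, and $\eta\ge 2$ gives $\beta_{j^\star}<\beta_0$).

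The core computation is step (b). If $({\bf x},\tilde{\bf y})$ carries the composite label $\mathcal S_i\in\bm{\mathcal S}$, Eq.~\ref{eq:uace_GDD} reduces to $\psi(\beta_0)-\psi(\beta_i)$ with $\beta_i=\sum_{l\in\mathcal S_i}(e_l+1)+e_{\mathcal S_i}$. Condition (2) forces $e_{\mathcal S_i}\to\infty$ and $e_l\to\infty$ for $l\in\mathcal S_i$, hence $\beta_i\to\infty$; because $\bm{\mathcal S}$ is a partition, the evidence coordinates entering $\beta_0-\beta_i=\sum_{j\ne i}\beta_j$ are precisely the ones not named in (2), so keeping those bounded gives $\beta_0/\beta_i\to 1$ and the term vanishes. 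If instead $\tilde{\bf y}$ is the singleton label $y=k$, lying in the unique block $\mathcal S_{j^\star}$, the indicator $\mathbbm{1}(y\in\mathcal S_j)$ in Eq.~\ref{eq:uace_GDD} selects $j^\star$ only, leaving $[\psi(\beta_0)-\psi(\alpha_k)]-[\psi(\beta_{j^\star})-\psi(\alpha_{\mathcal S_{j^\star}})]=\ln\!\big(\tfrac{\beta_0}{\alpha_k}\cdot\tfrac{\alpha_{\mathcal S_{j^\star}}}{\beta_{j^\star}}\big)+o(1)$. Condition (1) makes $\alpha_k=e_k+1\to\infty$ and $e_{\mathcal S_{j^\star}}\to\infty$; with the remaining coordinates bounded, $\alpha_{\mathcal S_{j^\star}}=\alpha_k+\sum_{l\in\mathcal S_{j^\star}\setminus\{k\}}(e_l+1)$ yields $\alpha_{\mathcal S_{j^\star}}/\alpha_k\to 1$, while $\beta_{j^\star}=\alpha_{\mathcal S_{j^\star}}+e_{\mathcal S_{j^\star}}\to\infty$ with $\beta_0-\beta_{j^\star}$ bounded gives $\beta_0/\beta_{j^\star}\to 1$, so the bracket $\to 0$. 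Summing the finitely many vanishing terms, $R(f)\to 0$.

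The step I expect to be the real obstacle is making (b) airtight, because conditions (1)--(2) pin down only some coordinates of ${\bf e}=f({\bf x};\bm\theta^\star)$; read literally, the implication would fail if the remaining evidences were allowed to diverge faster than the named ones. The natural reading, which I would make explicit, is that the statement asserts the existence of a minimizing sequence with these properties: invoke the UAP to construct, for each $t$, a network that outputs the prescribed diverging values (say $=t$) on the named coordinates and $0$ on the rest (this uses the mild assumption that the training inputs are distinct, or at least not assigned conflicting labels), and then the partition structure guarantees the unnamed evidences never enter the numerators $\beta_0-\beta_i$ and $\beta_0-\beta_{j^\star}$, so the ``ratio $\to 1$'' steps above hold verbatim along the sequence. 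The only remaining technicality, controlling the $O(1/x)$ remainder in $\psi(x)=\ln x+O(1/x)$, is immediate because every digamma argument in play tends to $+\infty$.
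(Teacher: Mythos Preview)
Your proof is correct and takes a more direct route than the paper's. The paper establishes the same limits by computing the partial derivatives $\partial\,\texttt{UPCE}/\partial\alpha_\nu$ and $\partial\,\texttt{UPCE}/\partial c_\nu$ and signing each one via the monotone decrease of the trigamma function $\psi_1$ on $(0,\infty)$; this case analysis identifies precisely which evidence coordinates can be driven to $+\infty$ while the loss decreases, and in particular exposes the non-uniqueness of minimizers (e.g.\ under a singleton label $k$, the derivative in $c_{j^\star}$ for the block containing $k$ is also negative) that motivates the KL regularizer in Proposition~\ref{prop:effect}. You instead bypass the derivative machinery and evaluate the limits directly from the asymptotic $\psi(x)=\ln x+O(1/x)$, rewriting each per-sample term as $\ln$ of a ratio of concentration parameters plus $o(1)$ and checking the ratio tends to $1$; this is more elementary and entirely sufficient for the proposition as stated, though it does not by itself surface the non-uniqueness that the paper emphasizes. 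Your treatment of the unnamed evidence coordinates---reading the statement as asserting the existence of a minimizing sequence and invoking UAP to pin those coordinates to zero---matches what the paper does implicitly (``with fixed finite values of ${\bf c}$ and $\bm{\alpha}$ except for\ldots''), and your strict-positivity argument via the Jensen lower bound of Proposition~\ref{prop:lowerBound_upce} is arguably cleaner than the paper's appeal to monotonicity of $\psi$, which for the singleton-label term does not immediately settle the sign of the difference of the two digamma brackets.
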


To address the previous issue, we propose the following KL-divergence regularization term (see App.~\ref{loss_kl} for derivations) to make the evidence output more flat: 
\vspace{-1ex}
\begin{equation} \label{eq:kl_reg_main_paper}
\small 
    \texttt{Reg}({\bf x}, \mathbf{\tilde{y}}; \bm{\theta}) =  \texttt{KL}\big[ \mathtt{GDD}(\mathbf{p}|\bar{\bm{\alpha}}, \bar{\bm{c}}) \| \mathtt{GDD}(\mathbf{p}|\bm{1}^K, \bm{0}^\eta) \big],
\end{equation}
 \fengRebuttal{where $\texttt{KL}(\cdot)$ is KL-divergence, and $\bar{\bm{\alpha}} = \tilde{\bf y} + (1 - \tilde{\bf y}) \odot \bm{\alpha}$ and $\bar{\bf c} = (1 - \tilde{\bf y}) \odot {\bf c}$ are the GDD parameters after the removal of ground-truth parameters from the predicted parameters $\bm{\alpha}$ and ${\bf c}$.  
This regularization term is designed to enforce misleading evidence from the false single and composite classes in $\bm{\alpha}$ and ${\bf c}$ to be as small as possible. 
The regularized UPCE loss function has the form:
\vspace{-1ex}
\begin{equation}
\label{eq:regularized-loss}
\footnotesize
\mathcal{L}(\bm{\theta})=\frac{1}{N} \sum\nolimits_{i=1}^N \left(\texttt{UPCE}({\bf x}^{(i)}, \mathbf{\tilde{y}}^{(i)}; \bm{\theta}) + \lambda \cdot \texttt{Reg}({\bf x}^{(i)}, \mathbf{\tilde{y}}^{(i)}; \bm{\theta})\right),
\end{equation}
where $\lambda$ is the tradeoff coefficient.  As indicated in Proposition~\ref{prop:effect} below, the HENN, when trained using the aforementioned regularized UPCE loss, tends to predict high evidence for the ground-truth singleton class/composite set while predicting low evidence for other elements.} Stochastic gradient descent (i.e., Adam) is adopted to optimize the regularized loss function. The pseudocode is shown in App.(Algo.~\ref{algo_HENN}).

\begin{restatable} [Effectiveness of the regularization term $\texttt{Reg}({\bf x}, \mathbf{\tilde{y}}; \bm{\theta})$]{prop}{SecondProp} 
\label{prop:effect}

Following the UAP assumption, the regularized empirical UPCE risk defined in Eq.~(\ref{eq:regularized-loss}) approaches the infimum 0 if the solution $\bm{\theta}^\star$ satisfies the following properties: 
1) $\forall ({\bf x}, \mathbf{\tilde{y}}) \in \mathcal{D}$, where $\mathbf{\tilde{y}}$ is a singleton class label $k \in [K]$, the predicted evidence values $e_k\rightarrow +\infty$ and $e_{t} \rightarrow 0, \forall t \in \bm{\mathcal{S}} \cup [K] \setminus k $; and 2) $\forall ({\bf x}, \mathbf{\tilde{y}}) \in \mathcal{D}$, where $\mathbf{\tilde{y}}$ denotes a composite set label $\mathcal{S}_i$, the predicted evidence values $e_{\mathcal{S}_i}\rightarrow +\infty$ and $e_{t} \rightarrow 0,\ \forall t \in \bm{\mathcal{S}} \cup [K] \setminus \mathcal{S}_i $.

\end{restatable}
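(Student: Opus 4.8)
The strategy is to extend the argument behind Proposition~\ref{prop:limit}, now tracking the regularizer $\texttt{Reg}$ alongside $\texttt{UPCE}$. First I would record two elementary lower bounds. From $\texttt{PCE}({\bf p},\tilde{\bf y})=-\log(\sum_k\tilde y_k p_k)\ge 0$ on $\Delta_K$ (equivalently, from the closed form in Eq.~(\ref{eq:uace_GDD}) and strict monotonicity of $\psi$, using $\beta_0\ge\beta_j$ and $\alpha_{\mathcal{S}_j}\ge\alpha_{\mathrm{IS}}$) we get $\texttt{UPCE}({\bf x},\tilde{\bf y};\bm\theta)\ge 0$, and $\texttt{Reg}({\bf x},\tilde{\bf y};\bm\theta)\ge 0$ since it is a KL divergence. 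Hence the regularized empirical risk of Eq.~(\ref{eq:regularized-loss}) is nonnegative, so its infimum is at least $0$, and it suffices to produce a sequence of parameters along which it tends to $0$. By the UAP assumption, on the finite set $\mathcal{D}$ we may pick, for each $t\in\mathbb{N}$, a network $f(\cdot;\bm\theta^{(t)})$ whose evidence outputs on the training points realize the prescribed limits: on a singleton example with label $k$, $e_k^{(t)}\to+\infty$ while every other $e^{(t)}$ (singleton or composite) $\to 0$; on a composite example with label $\mathcal{S}_i$, $e_{\mathcal{S}_i}^{(t)}\to+\infty$ while every other $e^{(t)}\to 0$. It remains to verify that each of the $2N$ summands $\texttt{UPCE}({\bf x}^{(i)},\tilde{\bf y}^{(i)};\bm\theta^{(t)})$ and $\texttt{Reg}({\bf x}^{(i)},\tilde{\bf y}^{(i)};\bm\theta^{(t)})$ vanishes in the limit.

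For the $\texttt{UPCE}$ part I would substitute the prescribed asymptotics into Eq.~(\ref{eq:uace_GDD}) and use the digamma facts $\psi(x)\to+\infty$ as $x\to+\infty$ and $\psi(x+a)-\psi(x)\to 0$ as $x\to+\infty$ for fixed $a$. For a composite example $\mathcal{S}_i$, Eq.~(\ref{eq:uace_GDD}) reduces to $\psi(\beta_0)-\psi(\beta_i)$ with $\beta_i=\alpha_{\mathcal{S}_i}+c_i$; since $c_i=e_{\mathcal{S}_i}\to+\infty$, each $\alpha_l=e_l+1\to 1$, and for $j\ne i$ the quantity $\beta_j\to|\mathcal{S}_j|$ stays bounded (all false evidences $\to 0$), we get $\beta_0-\beta_i=\sum_{j\ne i}\beta_j$ bounded while $\beta_i\to+\infty$, so $\psi(\beta_0)-\psi(\beta_i)\to 0$. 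For a singleton example $k$, which by the partition hypothesis lies in a unique group $\mathcal{S}_{j^\star}$, the indicator in Eq.~(\ref{eq:uace_GDD}) selects that single term and the loss collapses to $[\psi(\beta_0)-\psi(\beta_{j^\star})]+[\psi(\alpha_{\mathcal{S}_{j^\star}})-\psi(\alpha_k)]$; since $\alpha_k=e_k+1\to+\infty$, all other $\alpha_l\to 1$, and all $c_j\to 0$, the differences $\alpha_{\mathcal{S}_{j^\star}}-\alpha_k$, $\beta_{j^\star}-\alpha_k$, and $\beta_0-\beta_{j^\star}$ are bounded, so both brackets tend to $0$. Averaging over the finitely many points gives $\frac1N\sum_i\texttt{UPCE}({\bf x}^{(i)},\tilde{\bf y}^{(i)};\bm\theta^{(t)})\to 0$.

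For the regularizer I would show that the cleaned parameters converge to those of the uniform GDD, i.e., $\bar{\bm\alpha}=\tilde{\bf y}+(\bm 1-\tilde{\bf y})\odot\bm\alpha\to\bm 1^K$ and $\bar{\bf c}=(\bm 1-\tilde{\bf y})\odot{\bf c}\to\bm 0^\eta$: the ground-truth coordinates of $\bar{\bm\alpha}$ and $\bar{\bf c}$ are exactly $1$ and $0$ by construction, and every remaining coordinate is a false-evidence value plus the appropriate offset ($+1$ for the $\alpha$-type, $+0$ for the $c$-type), which converges to $1$ or $0$ precisely because all false evidences $\to 0$ along $\bm\theta^{(t)}$. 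Since $\mathtt{GDD}({\bf p}\,|\,\bm 1^K,\bm 0^\eta)$ is the uniform density on $\Delta_K$ and the KL divergence between two GDDs over the same partition admits a closed form in $\log B$ and $\psi$ (derived in App.~\ref{loss_kl}, via the analytical normalizer $Z$ of Eq.~(\ref{eq:GDD}), in analogy with the Dirichlet case), the map $(\bar{\bm\alpha},\bar{\bf c})\mapsto\texttt{KL}[\mathtt{GDD}({\bf p}|\bar{\bm\alpha},\bar{\bf c})\,\|\,\mathtt{GDD}({\bf p}|\bm 1^K,\bm 0^\eta)]$ is continuous on $\{\bar{\bm\alpha}\ge\bm 1,\ \bar{\bf c}\ge\bm 0\}$ (all arguments of $\log B$ and $\psi$ are bounded away from $0$), hence converges to its value at $(\bm 1^K,\bm 0^\eta)$, which is $0$. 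Combining the two previous paragraphs, the regularized empirical risk tends to $0$ along $\{\bm\theta^{(t)}\}$, which by the first paragraph is its infimum.

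The step I expect to be the main obstacle is the regularizer: one must argue that $\texttt{KL}$ between two GDDs is continuous up to the boundary point $(\bm 1^K,\bm 0^\eta)$ — equivalently, that the entropy and cross-entropy integrals stay well behaved as $\bar{\bm\alpha}\downarrow\bm 1$ and $\bar{\bf c}\downarrow\bm 0$. The clean route is to bypass integral estimates by writing the KL explicitly through $Z$, reducing it to a finite combination of $\log B$ and $\psi$ at arguments that are always $\ge 1$ (since $\bar\alpha_k\ge 1$ and $\bar\beta_j\ge|\mathcal{S}_j|\ge 1$), on which continuity is immediate; this closed form is exactly what App.~\ref{loss_kl} computes. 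A minor point to handle carefully is that the collapse of the singleton case of Eq.~(\ref{eq:uace_GDD}) to a two-bracket expression uses the partition hypothesis $\cup_j\mathcal{S}_j=\mathbb{Y}$, $\mathcal{S}_i\cap\mathcal{S}_j=\emptyset$, which forces $\mathbbm{1}(y\in\mathcal{S}_j)$ to pick out a single group.
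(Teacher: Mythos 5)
Your proposal is correct and reaches the same conclusion by essentially the same strategy as the paper: show that the prescribed evidence profile simultaneously drives the $\texttt{UPCE}$ term and the KL regularizer to their common infimum $0$. The difference is in execution. The paper's proof leans on the machinery already built for Proposition~\ref{prop:limit} (the UAP-based exchange of $\inf$ and the empirical mean), then characterizes the $\arg\min$ set of $\texttt{KL}(\bar{\bm\alpha},\bar{\bf c})$ and argues that it overlaps with the feasible set of $\texttt{UPCE}$ minimizers, taking the intersection as the answer. You instead verify the claim constructively: you substitute the prescribed asymptotics into the closed form of Eq.~(\ref{eq:uace_GDD}) and use $\psi(x+a_t)-\psi(x_t)\to 0$ for bounded $a_t$ and $x_t\to\infty$ to show the $\texttt{UPCE}$ summands vanish, and you show $(\bar{\bm\alpha},\bar{\bf c})\to(\bm 1^K,\bm 0^\eta)$ with continuity of the KL closed form on $\{\bar{\bm\alpha}\ge\bm 1,\ \bar{\bf c}\ge\bm 0\}$. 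Your version buys two things the paper's write-up glosses over: an explicit check that driving only the ground-truth evidence to infinity (with all false evidence to zero) still sends $\texttt{UPCE}\to 0$, and a justification that the KL is continuous up to the boundary point $(\bm 1^K,\bm 0^\eta)$ — the paper simply asserts that the divergence reaches $0$ there. Both arguments are sound; yours is the more self-contained of the two.
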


The proofs of the Propositions are shown in App.~\ref{proof_th1} and \ref{proof_th2}, respectively. 
This is consistent with the fact that the Dirichlet distribution is a special instance of GDD. As a generalized framework of ENN, the HENN learned based on UPCE will perform similarly to the ENN learned based on UCE when trained based on the same dataset with only singleton class labels.

\fengRebuttal{\textbf{Limitations and discussions.} In essence, the proposed HENN is the GDD extension of evidential deep learning~\citep{ulmer2023prior} that is based upon Dirichlet distributions. The propositions demonstrate the need for the KL regularization term in the cost function so that only evidence for the corresponding ground truth class can grow large. While the assumption of UAP in the above propositions may not hold in practice, the analysis does demonstrate how UPCE requires the KL regularization term to moderate the evidence. An ablation study is discussed in Section~\ref{sect:empirical-results} to empirically demonstrate the need for the regularization term.
}

\vspace{-2ex}

\section{Experiments}
\label{c:experiments}

\vspace{-1ex}
\subsection{Experimental setup}
\vspace{-1ex}
\textbf{Datasets \& Preprocessing}. 
\textbf{TinyImageNet}~\citep{feitiny}, \textbf{Living17}~\citep{santurkar2021breeds}, \textbf{Nonliving26}~\citep{santurkar2021breeds}, and 
\textbf{CIFAR100}~\citep{krizhevsky2009learning} are used in the experiments.
Each dataset has class hierarchy relation. For example,  
CIFAR100 has 100 subclasses which are grouped into 20 disjointing superclasses.
Superclasses are utilized to generate {composite class labels} because of their semantic and visual similarities. We first select a fixed number of {composite class labels}, denoted as $M$. Then several random subclasses for each selected superclass will be chosen.
A subset of the selected images will be blurred by the Gaussian Blurring operation~\citep{richardwebster2018psyphy} to generate vague images, and \changbinRebuttal{the corresponding set of categories/subclasses of these vague images will be the new label (composite instead of singleton) of these vague images to build the dataset.} 
Detail is presented in App.~\ref{app:reproduce}.

\textbf{Baselines}.
\textbf{DNN} is the traditional deep neural network model. 
\textbf{ENN}~\citep{sensoy2018evidential} is the evidential network that only deals with traditional singleton domains as DNN does. We also use UCE loss and KL regularizer for a fair comparison for ENN.
\changbinRebuttal{In practice, it is necessary to set a threshold value of predicted conditional class probabilities to generate set predictions for DNNs and ENNs. (see App.~\ref{app:implementation}).}
\textbf{E-CNN}~\citep{tong2021evidential} could do set prediction for any possible combinations among all singleton classes based on DST.
\textbf{RAPS}~\citep{angelopoulos2020sets} leverages conformal prediction to generate a prediction set to ensure the size of the predicted set is as small as possible.
{\textbf{PiCO}~\citep{wang2022pico} applies contrastive learning into partial label learning problem.}

\textbf{Implementation}. 
Both HENN and ENN use Softplus as the activation layer. 
Since HENN is model-agnostic, we consider three pre-trained backbones: EfficientNet-b3~\citep{tan2019efficientnet}, ResNet50~\citep{He2015DeepRL} and VGG16~\citep{simonyan2014very} for HENN model and all other baselines for a fair comparison.
Model agnoistic property experiments are represented in App.~\ref{app:model_agnostic} due to the space limit.
To generate composite examples for baselines, we create duplicate training examples with different singleton labels in the composite set. 
We adopt grid search based on a held-out validation set to select the best hyperparameters for each competitive method. Please refer to App.~\ref{app:hyper} for details. 

\begin{table*}[!ht]
\scriptsize
\vspace{-1.5mm}
    \centering
    \caption{Results (\%) based on Gaussian kernel size: 3$\times$3 on CIFAR100 and tinyImageNet. {\small (The average of three runs is provided, and the confidence interval is included in the App. due to space limitations.)}}\label{tab:cifar_tiny_ker3}
    \vspace{-2mm}
    \begin{tabular}{l|l|ccc|ccc|ccc}
      \toprule 
    & & \multicolumn{3}{c|}{\textbf{tinyImageNet}} & \multicolumn{3}{c|}{\textbf{living17}} & \multicolumn{3}{c}{\textbf{nonliving26}} \\
   $M$ & \textbf{Methods} & OverJS & CompJS  & Acc& OverJS & CompJS & Acc & OverJS & CompJS & Acc \\
      \midrule 
    & DNN~\citep{tan2019efficientnet}        & 83.4 &	66.9  &	79.8 & 88.1 &  81.0  &	83.3  & 85.6 &	62.0  &	82.9  \\
    & ENN~\citep{sensoy2018evidential}       & 75.9 &	63.5  & 80.7 & 88.0 &  72.3  &	84.5  & 85.0 &	52.9  &	84.5 \\
 10 & E-CNN~\citep{tong2021evidential}       & 33.4 &   31.1  & 68.2 & 30.5 &  36.8  &  65.7  & 28.3 &  35.8  & 60.6 \\     
    & RAPS~\citep{angelopoulos2020sets}      & 73.1 &   43.6  & 79.8 & 86.4 &  61.3  &  83.3  & 82.7 &  46.3  & 82.9 \\
    & {PiCO}~\citep{wang2022pico}            & 57.2 &   35.6  & 64.3 & 62.5 &  43.7  &  65.2  & 61.8 &  42.6  & 64.8 \\
         \rowcolor{lgray} \cellcolor{white} & HENN (ours)  & \textbf{84.4}	& \textbf{93.4} &	\textbf{82.5}  & \textbf{88.8} &	\textbf{96.5}  &	\textbf{85.6} & \textbf{86.9} &	\textbf{96.8} &	\textbf{85.4}  \\
        \midrule
    & DNN~\citep{tan2019efficientnet}        & 84.3          &  67.3  &	 79.5  & 88.1 &	 84.8  & 80.2  & 85.6 &	68.9 &	81.5  \\
   &  ENN~\citep{sensoy2018evidential}       & 83.5          & 60.7   &  81.2  & 88.0 &  78.3  & 82.4  & 85.4 & 62.6 &	 82.9  \\
15 &  E-CNN~\citep{tong2021evidential}       & 32.5          & 33.3   &  68.4  & 31.6 &  37.3  & 65.5  & 29.8 & 35.1 &  60.1  \\
   &  RAPS~\citep{angelopoulos2020sets}      & 68.1          & 45.6   &  79.5  & 85.5 &  66.5  & 80.2  & 83.8 & 56.1 &  81.5  \\
   &  {PiCO}~\citep{wang2022pico}            & 56.8          & 35.3   &  64.6  & 61.4 &  43.1  & 64.8  & 61.5 & 42.5 &  64.6  \\
    \rowcolor{lgray} \cellcolor{white} &  HENN (ours) & \textbf{84.6}	& \textbf{90.6} &	\textbf{81.6}  & \textbf{88.8} &	\textbf{96.6} &	\textbf{85.7}  & \textbf{86.9} &	\textbf{96.2} &	\textbf{84.1} \\
      \bottomrule 
    \end{tabular}
    \vspace{-1ex}
\end{table*}

\textbf{Evaluation Metric}.  
Jaccard Similarity (JS)~\citep{zaffalon2012evaluating} is used to evaluate a model's performance in predicting a set of classes:
$\texttt{JS}(y, \hat{y}) = \frac{|y\cap \hat{y}|}{|y\cup \hat{y}|}$, where $\hat{y}$ is the predicted set of classes and $y$ is ground-truth set of classes. Either $\hat{y}$, $y$ or both can be a single class or a set of two or more classes. A model identifies a datapoint as composite if two or more classes are predicted and singleton otherwise. We compare HENN's performance with baselines in terms of different average JS. 
\underline{{OverJS:}} averaged JSs of \textit{all} test samples, $\frac{1}{N_t}\sum_{i=1}^{N_t}\texttt{JS}(y^{(i)}, \hat{y}^{(i)})$. \underline{{CompJS:}} averaged JSs of composite samples the model identifies, $\frac{1}{N_c}\sum_{i=1}^{N_c}\texttt{JS}(y^{(i)}, \hat{y}^{(i)})$, where $\texttt{len}(\hat{y}^{(i)})>1$.
Here $N_c=\sum_{i=1}^{N_t} \mathbbm{1}(\texttt{len}(\hat{y}^{(i)})>1)$ denotes the number of examples which are predicted as composite sets.
Accuracy is used to evaluate the projected singleton label prediction (\underline{{Acc}}).
The Area Under the Receiver Operating Characteristic (\underline{{AUROC}}, the larger the better) is to measure the different uncertainties in discriminating between true composite and true singleton samples. 

\vspace{-2ex}

\subsection{Experimental Results}\label{sect:empirical-results}
\vspace{-1ex}
 For each dataset, we consider different numbers of {composite class labels} during training: $M$= 10, 15, 20, and multiple Gaussian kernel sizes of blurring operation:3$\times$3, 5$\times$5, 7$\times$7. Due to the space limit, some additional experiments including CIFAR100 are presented in App.~\ref{app:additional_exp}.

\textbf{Classification.}
Tab.~\ref{tab:cifar_tiny_ker3} shows the results of composite predictions on tinyImageNet, living17 and nonliving26 in terms of OverJS, CompJS (for composite prediction) and accuracy (for singleton prediction) based on Gaussian kernel size 3$\times$3. HENN outperforms other baselines in terms of OverJS (over 1\% for tinyImageNet) and CompJS. In particular, the improvement of CompJS is significant (over 15-20\% for three datasets). This validates that HENN is not only able to recognize vague images, but also differentiate different vague images. 
In particular, both RAPS and E-CNN underperform HENN in terms of compJS. This is because RAPS and E-CNN are inclined to make set predictions if they are unsure about the final prediction. 
In addition to the vague images, there might be other difficult (not vague) images in which E-CNN and RAPS cannot make a single decision. Therefore, compared to DNN, more examples will be wrongly predicted as composite sets.
On the other hand, 
Tab.~\ref{tab:cifar_tiny_ker3} also demonstrate the efficacy of HENN in singleton prediction (Eq.~\ref{eq:single_pred}) in terms of Acc. 
The improvement is over 2-5\% for three datasets.
PiCO performs worse than HENN because the composite labels in its setting are randomly flipped, and it might not be able to deal with blurring images during training. 
In summary, HENN can generate high-quality composite prediction and accurate singleton label classification. It is practical to consider a limited number of composite sets due to the majority of clearly labeled data. So our experiments do not encounter the combinatorial complexity issue ($2^{K}$).


\textbf{Analysis of confusion between multiple classes. }
The ROC curves depicted in Fig.~\ref{fig:roc_cifar_tiny} showcase performances of various uncertainty indicators, namely \textit{vagueness} of HENN, \textit{vacuity} and \textit{dissonance} of ENN, and \textit{entropy} of DNN, in identifying confusion between multiple classes when some samples have composite labels, and some have singleton labels (see more in Fig.~\ref{app_fig:roc_cifar_other} and Fig.~\ref{app_fig:roc_tiny_other} in App.). 
For both datasets, HENN's \textit{vagueness} outperforms the other uncertainties, as indicated by its larger AUC score and smallest error region, making it a highly effective discriminator between composite and singleton samples and a successful indicator of confusion between a set of classes when there is composite evidence. RAPS and E-CNN, however, do not provide any measurement to evaluate these uncertainties. ENN's \textit{vacuity}, similar to \textit{epistemic} uncertainty, which is more useful for OOD detection (Fig.~\ref{fig:vis_overview}) and is not suitable to our case.
\begin{wrapfigure}{r}{0.5\textwidth}
\centering
\begin{subfigure}[b]{0.24\textwidth}
\includegraphics[width=3.3cm]{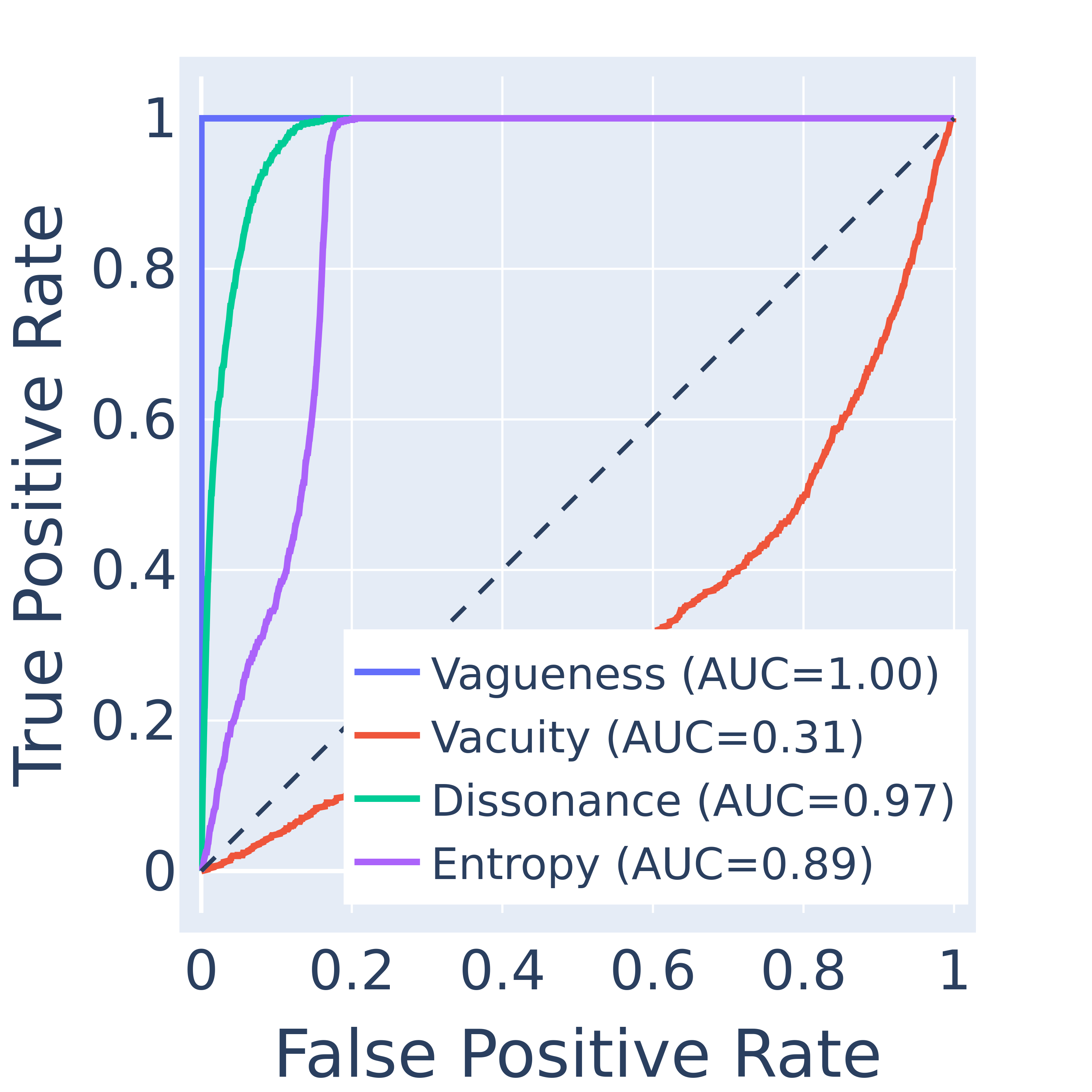}
\caption{CIFAR100 $M$=15}
\label{fig:roc_cifar100_1}
\end{subfigure}
\begin{subfigure}[b]{0.24\textwidth}
\includegraphics[width=3.3cm]{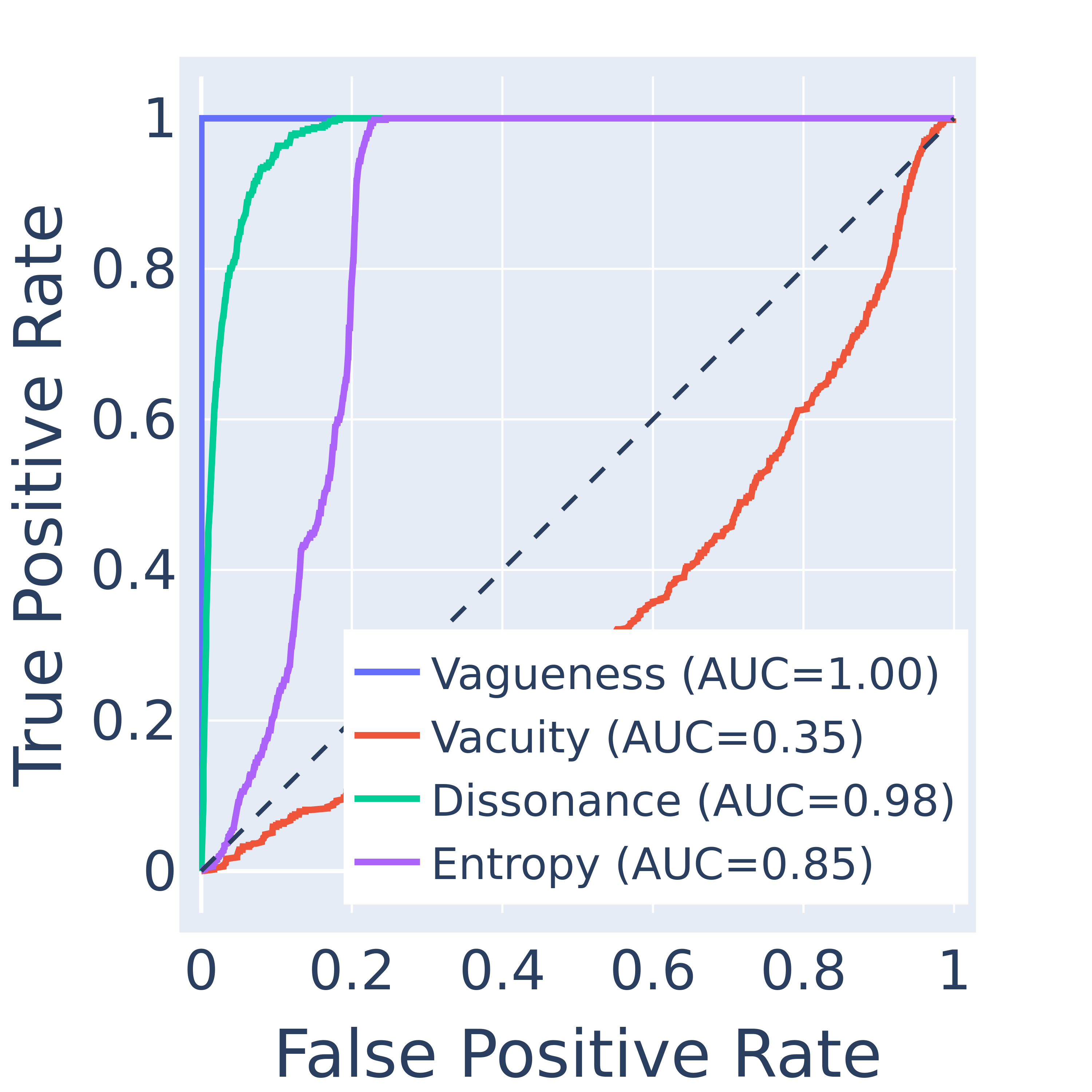}
\caption{tinyImageNet $M$=15}
\label{fig:roc_tiny_1}
\end{subfigure}
\caption{\small{ROC curves of separating composite and singleton examples among different measurements: \textit{vagueness} of HENN, \textit{vacuity} and \textit{dissonance} of ENN, and \textit{entropy} of DNN on based on kernel size 7$\times$7.}}
\label{fig:roc_cifar_tiny}
\vspace{-5mm}
\end{wrapfigure}
While \textit{dissonance} and \textit{entropy} are better than \textit{vacuity}, they are still inferior to \textit{vagueness}. A data point with high \textit{dissonance} is usually located in the decision boundary, and a point with high \textit{vagueness} can also be close to the decision boundary. However, the verse does not apply for \textit{vagueness}, because \textit{vagueness} could also be decided by the labeling bias of the annotators, but not purely by their closeness to the decision boundary between the associated singleton classes. 
For instance, an annotator who has extensive knowledge about different cat breeds
(i.e., Tabby, Egyptian, Persian), 
will still annotate them as singletons, even if they are near decision boundaries. However, this annotator may give composite labels for other animal breeds, such as dog breeds
(i.e., Husky, Malamute, Samoyed), 
that he may not be knowledgeable about. For this reason, a data point with high dissonance may likely have low vagueness.

\begin{wraptable}{r}{0.5\textwidth}
\scriptsize
\vspace{-1ex}
    \centering
    \caption{Model performance of different regularization on $M$=15, and kernel size: 7$\times$7 on nonliving26.}
    \label{tab:diff_regularization}
    \begin{tabular}{l|ccc}
      \toprule 
    \textbf{Methods} & OverJS & CompJS  & Acc \\
      \midrule 
        HENN-only-UPCE   & 78.25 &	62.89 &	84.96  \\
        HENN-KL-Dir      & 86.66 &	94.15 &	82.13  \\
        HENN-Ent         & 86.68 &  94.44 &	86.33  \\
        HENN-KL          & 86.93 &  94.78 &	85.19  \\
      \bottomrule 
    \end{tabular}
\end{wraptable}
\textbf{Effect of regularization.}
To show the effectiveness of KL divergence regularization, we compare different regularizations and UPCE loss without any regularization in Tab.~\ref{tab:diff_regularization}.HENN-KL refers to the HENN with the proposed regularization (Eq.~\ref{eq:kl_reg_main_paper}). HENN-Ent refers to the HENN with the entropy of GDD as the regularization $\texttt{Reg}=-\texttt{H}\big[ \mathtt{GDD}(\mathbf{p}|\bm{\alpha}, \bm{c})\big]$ (see App.~\ref{entropy_GDD}). HENN-KL-Dir refers to the HENN using KL-divergence only for singleton classes $\texttt{Reg}=\texttt{KL}\big[ \mathtt{Dir}(\mathbf{p}|\bm{\alpha})\big]$. Generally, the comparison of their performances is HENN-KL$\approx$HENN-Ent $>$ HENN-KL-Dir $>$ HENN-only-UPCE. App.~\ref{app:regularizer_coefficient} illustrates the coefficient effect of the KL regularizer.

\begin{wrapfigure}{r}{0.5\textwidth}
\centering
\begin{subfigure}[b]{0.24\textwidth}
\includegraphics[width=3.3cm]{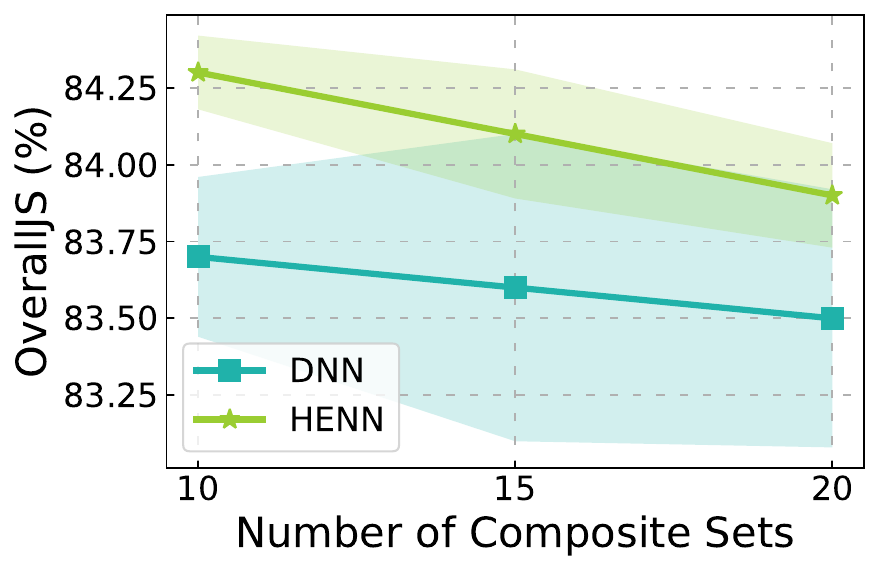}
\caption{OverJS}
\label{fig:over_tiny}
\end{subfigure}
\begin{subfigure}[b]{0.24\textwidth}
\includegraphics[width=3.3cm]{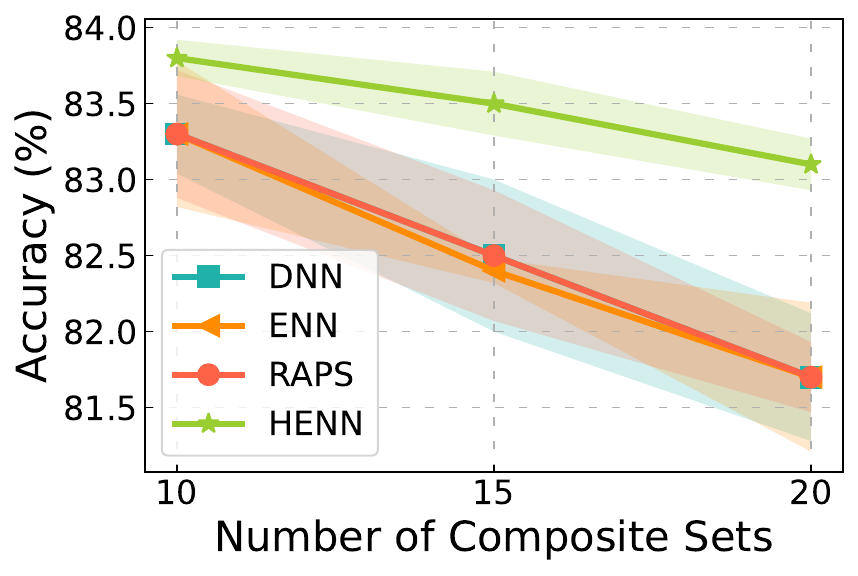}
\caption{Accuracy}
\label{fig:acc_tiny}
\end{subfigure}
\caption{\small{OverJS and Accuracy trends vs. the number of composite labels in tinyImageNet.}}
\label{fig:over_acc_tiny}
\vspace{-5mm}
\end{wrapfigure}
\textbf{Effect of varying numbers of composite labels.} To investigate the effect of ratio of {composite class labels} during training, we vary $M=\{10,15,20\}$ in experiments. Fig.~\ref{fig:over_acc_tiny} shows OverallJS and Accuracy regarding the number of composite sets. Regularized HENN outperforms other baselines for these two metrics. In particular, with the increase of number of composite sets, the gap between HENN and baselines is enlarging in terms of accuracy (Fig.~\ref{fig:acc_tiny}), which demonstrates the advantage of HENN.

\vspace{-2ex}


\section{Conclusion}
\vspace{-2mm}

In this work, we propose a novel hyper-evidential network framework (HENN) designed to predict hyper-opinions and quantify predictive classification uncertainty caused by composite class labels (introduced as \textit{vagueness}) by utilizing composite training examples. This framework is capable of identifying either a singleton class or a composite set with the highest belief, and it can predict the singleton class with the greatest projected class probability. Extensive empirical findings show that HENN outperforms other competitive methods, demonstrating its effectiveness and potentiality. 
\subsubsection*{Acknowledgments}
We thank Raisaat Atifa Rashid for her contribution to the initial dataset preprocessing.
We thank the anonymous reviewers for the stimulating discussion and for helping improve the paper.
This work is supported by the National Science Foundation (NSF) under Grant No DMS-2220574, FAI-2147375, IIS-2107450, IIS-2107451, and IIS-2107449.

\bibliography{iclr2024_conference}
\bibliographystyle{iclr2024_conference}

\newpage
\appendix

\section{Notations}
\label{app:notations}
For clear interpretation, we list main notations used in this paper and their corresponding explanation, as shown in Table~\ref{tab:notation}.

{\renewcommand{\arraystretch}{1.2} 
\begin{table}[ht!]
\caption{Important Notations and Descriptions} 
\label{tab:notation}
    \centering
\vskip 0.15in    
    \begin{center}
    \begin{small} 
    \begin{tabular}{ll}
        \toprule
        \bfseries{Notation} & \bfseries{Description}  \\
        \midrule
      $\mathbb{Y}$ & Domain of singleton elements or classes\\
      $\mathscr{R}(\mathbb{Y})$ & Hyper-domain of $\mathbb{Y}$\\
      $\mathscr{C}(\mathbb{Y})$ & Domain of composite classes\\
      $\mathcal{D}=\{(\mathbf{x}^{(i)}, \mathbf{y}^{(i)})\}_{i=1}^{N}$ & Training data with size $N$\\
      $B(\cdot)$, $\Gamma(\cdot)$, $\psi(\cdot)$,$\psi_1(\cdot)$  & Beta function, Gamma function, Digamma function, trigamma function\\
      $\mathtt{Dir}(\mathcal{\bm{\alpha}})$ & Dirichlet distribution with strength $\bm{\alpha}$  \\
      $\mathtt{GDD}(\bm{\mathcal{\alpha}}, \mathbf{c})$ & Grouped Dirichlet distribution with strength $\bm{\alpha}$ and composite evidence $\mathbf{c}$\\
      $K, M$ & Total number of singleton (composite) classes\\
      $Z$ & Normalizing constant of the Grouped Dirichlet distribution\\
      $\Delta_K$ & $K$-dimensional simplex, \textit{i.e.}, $\Delta_K:=\{\mathbf{p}|\mathbf{p}=[p_1,\cdots, p_K]\in [0,1]^K\text{ and } \|\mathbf{p}\|_1=1\}$\\
      $y$ & Singleton ground truth label\\
      $b_S$ & Vague belief mass of value $S$ in $\mathscr{R}(\mathbb{Y})$\\
      $u$ & Vacuity of evidence in a hyper-opinion\\
      $\eta$ & Total number of partitions\\
      $\kappa$ & Total number of elements in $\mathscr{R}(\mathbb{Y})$, \textit{i.e.}, the total no. of singleton and composite classes\\
      $\epsilon$ & A small error\\
      $\omega$ & Hyper-opinion of a random hyper-variable $y\in\mathscr{R}(\mathbb{Y})$\\
      $\mathbf{x}^{(i)}$ & The feature vector of the $i$-th sample\\
      $\mathbf{\tilde{y}}$ & Binary vector over $\{0, 1\}^K$\\
      $\bm{b}=[b_1,.., b_K, b_{\mathcal{S}_1}, .., b_{\mathcal{S}_\eta}]^{\intercal}$ & Belief mass distribution over $\mathscr{R}(\mathbb{Y})$\\
      $\mathbf{e}=[ e_1,...,e_\kappa ]^{\intercal}$ & Observed evidence vector over $\mathscr{R}(\mathbb{Y})$, $\mathbf{e}=[ e_1,\cdots,e_K, e_{K+1}, \cdots, e_\kappa ]^{\intercal}$\\
      $\mathbf{p}=[ p_1,...,p_K]^{\intercal}$ & Class probability vector over $\mathbb{Y}$\\
      $\bm{\alpha}=[ \alpha_1,...,\alpha_K ]^{\intercal}$ & Strength vector of a Dirichlet distribution or the singleton part in grouped Dirichlet distribution\\
      $S$ & An element as a set in hyper-domain (singleton or composite)\\
      $\bm{\mathcal{S}}=\{\mathcal{S}_{1}, ...,\mathcal{S}_{\eta}\}$ & The set of partitions\\
      $\mathcal{S}_j$ & $j$-th composite set in GDD\\
      $\mathbf{c}=[ c_{1},...,c_{\eta}]^{\intercal}$ & Evidence vector for the partitions in $\bm{\mathcal{S}}$\\
      $f(\mathbf{x}^{(i)};\bm{\theta})$ & HENN parameterized by $\bm{\theta}$ that takes $\mathbf{x}^{(i)}$ as input\\
      $\text{IS}$ & Singleton ground-truth index\\
      $\text{IC}$ & Composite ground-truth index\\
      $\mathcal{L}(\bm{\theta})$ & Uncertainty loss function w.r.t. parameters $\bm{\theta}$ \\
      $\texttt{UPCE}(\bm{\theta})$ & UPCE loss\\
      $\texttt{Reg}(\bm{\theta})$ & KL-divergence regularizer\\
      $\bm{\rho}(\bm{\alpha})$ & Natural parameter based on $\bm{\alpha}$ (only in Appendix)\\
      $\bm{\gamma}(\mathbf{c})$ & Natural parameter based on $\mathbf{c}$ (only in Appendix)\\
      $\bm{u}(\mathbf{p})$ & Sufficient statistic of natural parameter $\bm{\rho}(\bm{\alpha})$ (only in Appendix)\\
      $\bm{v}(\mathbf{p})$ & Sufficient statistic of natural parameter $\bm{\gamma}(\mathbf{c})$ (only in Appendix)\\
    
    \bottomrule
    \end{tabular}
    \end{small}
\end{center}

\end{table}
}

\section{Derivatives of Loss Function}
\label{app:loss_derivative}
\subsection{Expectation of GDD}

\begin{thm*}
\label{book}
Let $\mathbf{x} \sim \mathtt{GDD}_{n,2}(\bm{\alpha}, \mathbf{c})$ with 2 partitions, $\mathbf{x} \in \Delta_n$, where $\Delta_n$ denotes the $n$-dimensional simplex, $\bm{\alpha} = (\alpha_1, \cdots, \alpha_n)^{\intercal}$ is the strength parameter, and $\mathbf{c} = (c_1, c_2)^{\intercal}$ is the composite evidence parameter. Let $\mathcal{S}_1, \mathcal{S}_2$ denote the 2 partitions. The moment of $x_i$ is given by


\begin{equation}
    \mathbb{E}(x_i) = \frac{\alpha_i}{\beta_{12}}\left( \frac{\beta_1}{{\alpha}_{\mathcal{S}_1}} \cdot \mathbbm{1}(i \in \mathcal{S}_1) + \frac{\beta_2}{{\alpha}_{\mathcal{S}_2}} \cdot \mathbbm{1}(i \in \mathcal{S}_2) \right)
\end{equation}
where $\alpha_{\mathcal{S}_1}=\sum_{l\in \mathcal{S}_1}\alpha_l$, $\alpha_{\mathcal{S}_2}=\sum_{l\in \mathcal{S}_2}\alpha_l$,  $\beta_1, \beta_2$, and $\beta_{12}$ are defined as $\beta_1 = {\alpha}_{\mathcal{S}_1} + c_1, \beta_2 = {\alpha}_{\mathcal{S}_2} + c_2, \beta_{12} = \beta_1 + \beta_2$, and $\mathbbm{1}(\cdot)$ denotes the indicator function. 
\end{thm*}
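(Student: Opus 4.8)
The plan is to compute the moment $\mathbb{E}(x_i)$ directly from the density of $\mathtt{GDD}_{n,2}(\bm{\alpha}, \mathbf{c})$ by recognizing the integrand, after inserting the extra factor $x_i$, as an unnormalized GDD with shifted parameters, so that the integral evaluates to the ratio of the two normalizing constants. Concretely, from Eq.~\eqref{eq:GDD} with $\eta = 2$, the density is
\begin{equation}
\mathtt{GDD}(\mathbf{x}\mid\bm{\alpha},\mathbf{c}) = Z^{-1}\prod_{k=1}^n x_k^{\alpha_k - 1}\Big(\sum_{l\in\mathcal{S}_1} x_l\Big)^{c_1}\Big(\sum_{l\in\mathcal{S}_2} x_l\Big)^{c_2},
\end{equation}
with $Z = B(\{\alpha_l\}_{l\in\mathcal{S}_1})\,B(\{\alpha_l\}_{l\in\mathcal{S}_2})\,B(\beta_1,\beta_2)$ and $\beta_j = \alpha_{\mathcal{S}_j} + c_j$. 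To evaluate $\mathbb{E}(x_i) = \int_{\Delta_n} x_i\,\mathtt{GDD}(\mathbf{x}\mid\bm{\alpha},\mathbf{c})\,d\mathbf{x}$, multiplying by $x_i$ simply increments $\alpha_i$ by $1$: the integrand becomes $Z^{-1}$ times an unnormalized $\mathtt{GDD}(\mathbf{x}\mid\bm{\alpha}', \mathbf{c})$ density, where $\bm{\alpha}' = \bm{\alpha} + \mathbf{e}_i$ (the $i$-th standard basis vector). Hence $\mathbb{E}(x_i) = Z(\bm{\alpha}',\mathbf{c})/Z(\bm{\alpha},\mathbf{c})$.

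First I would treat the case $i\in\mathcal{S}_1$ (the case $i\in\mathcal{S}_2$ is symmetric, and since $\mathcal{S}_1,\mathcal{S}_2$ partition $[n]$ exactly one holds). Incrementing $\alpha_i$ by $1$ changes $B(\{\alpha_l\}_{l\in\mathcal{S}_1})$ to $B(\{\alpha_l\}_{l\in\mathcal{S}_1} + \mathbf{e}_i)$, leaves $B(\{\alpha_l\}_{l\in\mathcal{S}_2})$ unchanged, changes $\beta_1$ to $\beta_1 + 1$ (since $\alpha_{\mathcal{S}_1}$ increases by $1$ while $c_1$ is fixed), and leaves $\beta_2$ unchanged. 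So
\begin{equation}
\mathbb{E}(x_i) = \frac{B(\{\alpha_l\}_{l\in\mathcal{S}_1} + \mathbf{e}_i)}{B(\{\alpha_l\}_{l\in\mathcal{S}_1})}\cdot\frac{B(\beta_1+1,\beta_2)}{B(\beta_1,\beta_2)}.
\end{equation}
Now I would expand each Beta-function ratio using $B(\bm{\gamma}) = \prod_l\Gamma(\gamma_l)/\Gamma(\sum_l\gamma_l)$ together with $\Gamma(t+1) = t\,\Gamma(t)$. The first ratio gives $\Gamma(\alpha_i+1)\Gamma(\alpha_{\mathcal{S}_1})/\big(\Gamma(\alpha_i)\Gamma(\alpha_{\mathcal{S}_1}+1)\big) = \alpha_i/\alpha_{\mathcal{S}_1}$, and the second gives $\Gamma(\beta_1+1)\Gamma(\beta_1+\beta_2)/\big(\Gamma(\beta_1)\Gamma(\beta_1+\beta_2+1)\big) = \beta_1/\beta_{12}$. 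Multiplying, $\mathbb{E}(x_i) = (\alpha_i/\alpha_{\mathcal{S}_1})(\beta_1/\beta_{12})$, which matches the claimed formula restricted to the term with the indicator $\mathbbm{1}(i\in\mathcal{S}_1)$; the $\mathcal{S}_2$ case yields $(\alpha_i/\alpha_{\mathcal{S}_2})(\beta_2/\beta_{12})$, and combining the two mutually exclusive cases via indicators gives the stated result.

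The only point requiring care — the mild ``obstacle'' — is justifying the manipulation of the integral: one must verify that the finite integral defining $\mathbb{E}(x_i)$ can be reorganized as $Z^{-1}$ times a valid (integrable) GDD density with parameters $\bm{\alpha}'$, which is immediate because incrementing one $\alpha_i$ preserves positivity of all shape parameters, so $\mathtt{GDD}(\mathbf{x}\mid\bm{\alpha}',\mathbf{c})$ is a genuine probability density with finite normalizer $Z(\bm{\alpha}',\mathbf{c})$ given by the analytic formula; dominated convergence / Tonelli is not even needed since everything is a single absolutely convergent integral over the compact simplex with a positive integrand. Beyond that, the proof is just bookkeeping with the $\Gamma$-recursion, so I would keep the write-up short and emphasize the parameter-shift identity $\int_{\Delta_n} x_i\,\mathtt{GDD}(\mathbf{x}\mid\bm{\alpha},\mathbf{c})\,d\mathbf{x} = Z(\bm{\alpha}+\mathbf{e}_i,\mathbf{c})/Z(\bm{\alpha},\mathbf{c})$ as the conceptual core.
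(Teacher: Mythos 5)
Your proposal is correct, and it supplies a derivation where the paper gives none: the paper simply cites this moment formula from the book of \citet{ng2011dirichlet} and does not reprove it. Your argument is the standard one for distributions whose normalizer is known in closed form — the parameter-shift identity $\int_{\Delta_n} x_i\,\mathtt{GDD}(\mathbf{x}\mid\bm{\alpha},\mathbf{c})\,d\mathbf{x} = Z(\bm{\alpha}+\mathbf{e}_i,\mathbf{c})/Z(\bm{\alpha},\mathbf{c})$ followed by the $\Gamma(t+1)=t\,\Gamma(t)$ recursion — and the bookkeeping is right: for $i\in\mathcal{S}_1$ only the factor $B(\{\alpha_l\}_{l\in\mathcal{S}_1})$ and the $\beta_1$ argument of $B(\beta_1,\beta_2)$ shift, yielding $(\alpha_i/\alpha_{\mathcal{S}_1})(\beta_1/\beta_{12})$, with the $\mathcal{S}_2$ case symmetric. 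The one thing your proof implicitly leans on is the closed-form expression for $Z$ itself (valid for all positive shape parameters, hence for the shifted ones); since the paper states that normalizer as part of the definition of the GDD in Eq.~\ref{eq:GDD}, this is a legitimate starting point, and no further justification is needed.
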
 

According to the above Theorem which is from the book of Dirichlet and Related Distributions~\citep{ng2011dirichlet}, analogy from two partitions to multiple partitions, we can get Eq.~\ref{eq:exp_GDD} in the main paper:
\begin{equation*}
\mathbb{E} [p_k] = \frac{\alpha_k}{\beta_0} \bigg(\sum_{j=1}^{\eta} \frac{\beta_{j}}{\alpha_{\mathcal{S}_j}} \cdot \mathbbm{1}(k \in \mathcal{S}_j) \bigg)    
\end{equation*}
where $\beta_0 = \sum_{j=1}^{\eta}\beta_j$,  $\alpha_{\mathcal{S}_j}=\sum_{l\in \mathcal{S}_j}\alpha_l$, and $\beta_j = {\alpha}_{\mathcal{S}_j} + c_j$.

\subsection{UPCE loss of GDD}
\label{app_uace_GDD}

\setcounter{prop}{0}
\renewcommand{\theprop}{A\arabic{prop}}
\begin{prop}  [Analytical form of UPCE]
\label{prop:upce_form}
    Given the $i$-th sample $(\mathbf{x}^{(i)}, \tilde{\mathbf{y}}^{(i)})\in \mathcal{D}$, and a HENN $f(\cdot; \bm{\theta})$, the Uncertainty Partial Cross Entropy (UPCE) loss for this sample can be formulated as the following analytical form:
    \begin{equation}
    \small 
\begin{split}
    \texttt{UPCE}({\bf x}^{(i)}, \mathbf{\tilde{y}}^{(i)}; \bm{\theta})  &= \mathbb{E}_{\mathbf{p} \sim \mathtt{GDD}(\mathbf{p}|\bm{\alpha}^{(i)}, \mathbf{c}^{(i)})}(-\log \sum_{k=1}^{K} \tilde{y}_k p_{k}) \\
    & = \Big[\psi(\sum_{j=1}^{\eta}\beta_j^{(i)}) -\psi(\beta_{\text{IC}}^{(i)}) \Big] \mathbbm{1}(\|\tilde{\mathbf{y}}^{(i)}\|_1 > 1) + \nonumber \\
    & \ \ \ \ \ \Big[  \Bigl( \psi(\sum_{j=1}^{\eta}\beta_j^{(i)})-\psi(\alpha_{\text{IS}}^{(i)}) \Bigr) -  \sum_{j=1}^{\eta}\Bigl( \psi(\beta_j^{(i)}) - \psi(\sum_{l\in \mathcal{S}_j}\alpha_l^{(i)})\Bigr) \cdot \mathbbm{1}(\tilde{\mathbf{y}}^{(i)} \in \mathcal{S}_j) \Big] \mathbbm{1}(\|\tilde{\mathbf{y}}^{(i)}\|_1 = 1)
\end{split}
\end{equation}
where $\beta_j=\sum_{l\in \mathcal{S}_j}\alpha_l + c_j$.

\end{prop}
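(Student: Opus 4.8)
\textbf{Proof proposal for Proposition~\ref{prop:upce_form}.}

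The plan is to compute the expectation $\mathbb{E}_{\mathbf{p}\sim \mathtt{GDD}(\mathbf{p}|\bm{\alpha},\mathbf{c})}\big[-\log \sum_{k=1}^K \tilde y_k p_k\big]$ by splitting into the two cases dictated by whether $\tilde{\mathbf{y}}$ is a composite label ($\|\tilde{\mathbf{y}}\|_1>1$) or a singleton label ($\|\tilde{\mathbf{y}}\|_1=1$), and in each case expressing the relevant quantity as a derivative of the log-normalizer of an appropriately reparametrized GDD. First I would handle the composite case: if $\tilde{\mathbf{y}}$ is the indicator of the partition block $\mathcal{S}_{\text{IC}}$, then $\sum_k \tilde y_k p_k = \sum_{l\in \mathcal{S}_{\text{IC}}} p_l$, which is precisely one of the grouped factors appearing in the GDD density (Eq.~\ref{eq:GDD}). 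Hence $\mathbb{E}[-\log \sum_{l\in \mathcal{S}_{\text{IC}}} p_l] = -\partial_{c_{\text{IC}}} \log Z$ evaluated at the current parameters, since differentiating $Z^{-1}\int \prod_k p_k^{\alpha_k-1}\prod_j (\sum_{l\in\mathcal{S}_j}p_l)^{c_j}\,d\mathbf{p}=1$ with respect to $c_{\text{IC}}$ pulls down exactly the factor $\log(\sum_{l\in\mathcal{S}_{\text{IC}}}p_l)$. Using the closed form $Z = \big[\prod_{j} B(\{\alpha_l\}_{l\in\mathcal{S}_j})\big] B(\{\beta_j\}_{j=1}^\eta)$ with $\beta_j=\alpha_{\mathcal{S}_j}+c_j$, and the identity $\partial_x \log \Gamma(x)=\psi(x)$, we get $\partial_{c_{\text{IC}}}\log Z = \psi(\beta_{\text{IC}}) - \psi(\sum_j \beta_j) = \psi(\beta_{\text{IC}})-\psi(\beta_0)$, so $\mathbb{E}[-\log\sum_{l\in\mathcal{S}_{\text{IC}}}p_l] = \psi(\beta_0)-\psi(\beta_{\text{IC}})$, matching the first term.

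Next I would handle the singleton case: if $\tilde{\mathbf{y}} = \mathbf{e}_{\text{IS}}$, then $\sum_k \tilde y_k p_k = p_{\text{IS}}$, and I need $\mathbb{E}[-\log p_{\text{IS}}]$. The cleanest route is to write $\log p_{\text{IS}} = \log\big(p_{\text{IS}}/\!\sum_{l\in\mathcal{S}_j}p_l\big) + \log\big(\sum_{l\in\mathcal{S}_j}p_l\big)$ where $\mathcal{S}_j$ is the (unique) block containing $\text{IS}$. The second summand's expectation is already computed above as $\psi(\beta_0)-\psi(\beta_j)$. For the first summand I would invoke the neutrality/decomposition structure of the GDD: conditioned on the block sums $\big(\sum_{l\in\mathcal{S}_1}p_l,\dots,\sum_{l\in\mathcal{S}_\eta}p_l\big)$ (which follow a Dirichlet$(\beta_1,\dots,\beta_\eta)$), the within-block normalized vector $\{p_l/\sum_{m\in\mathcal{S}_j}p_m\}_{l\in\mathcal{S}_j}$ is an independent Dirichlet$(\{\alpha_l\}_{l\in\mathcal{S}_j})$ — this is exactly the product structure making $Z$ factor as it does. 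Therefore $\mathbb{E}\big[\log(p_{\text{IS}}/\sum_{l\in\mathcal{S}_j}p_l)\big] = \psi(\alpha_{\text{IS}})-\psi(\alpha_{\mathcal{S}_j})$ by the standard Dirichlet log-moment. Combining, $\mathbb{E}[-\log p_{\text{IS}}] = \big(\psi(\beta_0)-\psi(\alpha_{\text{IS}})\big) - \big(\psi(\beta_j)-\psi(\alpha_{\mathcal{S}_j})\big)$; writing the block-selection via the indicator $\mathbbm{1}(\text{IS}\in\mathcal{S}_j)$ and summing over $j$ (only one term survives) yields the second bracketed expression. Attaching the two indicators $\mathbbm{1}(\|\tilde{\mathbf{y}}\|_1>1)$ and $\mathbbm{1}(\|\tilde{\mathbf{y}}\|_1=1)$ and restoring the superscript $(i)$ completes the derivation.

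The main obstacle I anticipate is justifying the within-block conditional independence / stick-breaking decomposition of the GDD rigorously — i.e., showing that $Z$ factoring as $\big[\prod_j B(\{\alpha_l\}_{l\in\mathcal{S}_j})\big]B(\{\beta_j\})$ really does correspond to the hierarchical sampling "draw block masses from Dirichlet$(\beta_1,\dots,\beta_\eta)$, then split each block by an independent Dirichlet$(\{\alpha_l\}_{l\in\mathcal{S}_j})$." This can be verified by an explicit change of variables from $\mathbf{p}\in\Delta_K$ to the block totals together with the within-block fractions, checking that the Jacobian and the reparametrized density separate into the claimed product; alternatively, for the composite term alone one can bypass this entirely and argue purely via differentiating $\log Z$ under the integral sign (dominated convergence / analyticity in $c_{\text{IC}}$ on a neighborhood where all parameters are positive), which is the less delicate of the two arguments. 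A secondary minor point is confirming differentiation under the integral sign is legitimate, which follows from the smoothness and integrability of the GDD density in its parameters on the open positive orthant.
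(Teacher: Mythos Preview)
Your proposal is correct and yields the same closed form as the paper. For the composite case your argument (differentiate the log-normalizer with respect to $c_{\text{IC}}$) is exactly what the paper does, just phrased without explicitly setting up the exponential-family bookkeeping; the paper writes $\mathtt{GDD}$ in canonical form with sufficient statistics $\log p_k$ and $\log\sum_{l\in\mathcal{S}_j}p_l$, natural parameters $\alpha_k-1$ and $c_j$, and then reads off both $\mathbb{E}[\log p_k]=\partial_{\alpha_k}\log Z$ and $\mathbb{E}[\log\sum_{l\in\mathcal{S}_j}p_l]=\partial_{c_j}\log Z$ as derivatives of the log-partition.

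Where you diverge is the singleton case. The paper handles it uniformly with the composite case: it simply computes $\partial_{\alpha_k}\log Z$ by brute-force differentiation of the explicit $Z=\big[\prod_j B(\{\alpha_l\}_{l\in\mathcal{S}_j})\big]B(\{\beta_j\})$, picking up the four digamma terms directly. You instead split $\log p_{\text{IS}}=\log(p_{\text{IS}}/\sum_{l\in\mathcal{S}_j}p_l)+\log(\sum_{l\in\mathcal{S}_j}p_l)$ and invoke the hierarchical sampling structure of the GDD (block totals $\sim\mathrm{Dir}(\beta_1,\dots,\beta_\eta)$ independent of within-block fractions $\sim\mathrm{Dir}(\{\alpha_l\}_{l\in\mathcal{S}_j})$), then apply the standard Dirichlet log-moment twice. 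Your route is more probabilistically transparent and makes clear \emph{why} the answer decomposes as it does, at the cost of needing to justify the stick-breaking decomposition you flag as the main obstacle; the paper's route avoids that justification entirely (no change of variables, no neutrality) but is purely mechanical. Either argument is complete once the relevant technical point (differentiation under the integral for the paper; the Jacobian change-of-variables for you) is checked, and both are standard.
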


\begin{proof}
The formal formulation of UPCE loss is formulated as follows.

\begin{equation}
\small 
\begin{split}
\footnotesize
    \texttt{UPCE} (\bm{\theta})  &= \mathbb{E}_{\mathbf{p} \sim \mathtt{GDD}(\mathbf{p}|\bm{\alpha}^{(i)}, \mathbf{c}^{(i)})}(-\log \sum_{k=1}^{K} \tilde{y}_k p_{k}) \\
& = \underbrace{\mathbb{E}\Big[-\log \sum_{l:{\tilde{y}}^{(i)}_l=1}p_{l}^{(i)} \Big]}_{{\termOne{\text{Term 1}}}} \mathbbm{1}(\|\tilde{\bf y}^{(i)}\|_1 > 1) + \underbrace{\mathbb{E}\Big[-\log p_{\text{IS}}^{(i)} \Big]}_{\termTwo{\text{Term 2}}}
\mathbbm{1}(\|\mathbf{\tilde{y}}^{(i)}\|_1 = 1)
\end{split}
\end{equation}

We need to get the log expectations \termOne{$\text{Term 1}$} and  \termTwo{$\text{Term 2}$} above to calculate the UPCE loss. The following is to explain how we can derive these two terms.


Given the PDF of GDD $\mathtt{GDD}(\mathbf{p}| \bm{\alpha}, \mathbf{c})$ (Eq.~\ref{eq:GDD}) we can rewrite it in the form of exponential family:
\begin{equation}
p(\mathbf{x} ; \boldsymbol{\gamma})=h(\mathbf{x}) \exp \left(\boldsymbol{\gamma}^{\intercal} u(\mathbf{x})-A(\boldsymbol{\gamma})\right)
\end{equation}
with natural parameters $\bm{\gamma}$, sufficient statistic $u(\mathbf{x})$, and log-partition $A(\boldsymbol{\gamma})$.

Construct the pdf of GDD as exponential family:
\begin{equation}
p(\mathbf{x} ; \boldsymbol{\gamma})=\exp \left(\log{\mathtt{GDD}(\mathbf{p} | \bm{\alpha}, \mathbf{c})} \right).
\end{equation}

The logarithm term can be constructed as:
\begin{eqnarray}
\begin{aligned}
    \log{\mathtt{GDD}(\mathbf{p}|\bm{\alpha},\mathbf{c})} &= \log{\prod_{k=1}^K p_{k}^{\alpha_{k}-1}\prod_{j=1}^{\eta} \Big(\sum_{l\in \mathcal{S}_{j}}p_{l}\Big)^{c_{j}}} - \log{Z} \\
    &= \sum_{k=1}^K \log{p_{k}^{\alpha_{k}-1}} + \sum_{j=1}^{\eta}\log{\Big(\sum_{l\in \mathcal{S}_{j}}p_{l}\Big)^{c_{j}}} - \log{Z} \\
    &= \sum_{k=1}^K (\alpha_{k}-1) \log{p_{k}} + \sum_{j=1}^{\eta} c_j \log{\Big(\sum_{l\in \mathcal{S}_{j}}p_{l}\Big)} - \log{Z}. \\
\end{aligned}
\end{eqnarray}

Note that Gamma function has the transition relation between Beta function:
\begin{eqnarray}
\Gamma(a)\Gamma(b) = B(a, b)\Gamma(a+b),    
\end{eqnarray}
which can be generalized to multiple variables~\citep{pml1Book} as follows,
\begin{eqnarray}
B(a_1, a_2, ..., a_K)= \frac{\prod_{k=1}^{K}\Gamma(a_k)}{\Gamma(\sum_{k=1}^{K}a_k)}
\end{eqnarray}

Therefore, the normalizing constant:
\begin{eqnarray}
\begin{aligned}
    Z &= \bigg[\prod_{j=1}^{\eta} B\left(\left\{\alpha_{l}\right\}_{l\in \mathcal{S}_{j}}\right) \bigg] \cdot B\left(\left\{\beta_j\right\}_{j=1}^{\eta}\right) \\
    &= \bigg[\prod_{j=1}^{\eta} \frac{\prod_{l \in \mathcal{S}_j}\Gamma (\alpha_l)}{\Gamma(\sum_{l\in \mathcal{S}_j}\alpha_l)} \bigg] \cdot \frac{\prod_{j=1}^{\eta}\Gamma (\beta_j)}{\Gamma(\sum_{j=1}^{\eta}\beta_j)}, \\
\end{aligned}
\end{eqnarray}

Now define the log-partition $A(\bm{\alpha}, \mathbf{c})$ as follows:
\begin{eqnarray}
\begin{aligned}
    \log Z
    &= \sum_{j=1}^{\eta} \log \bigg[ \frac{\prod_{l \in \mathcal{S}_j}\Gamma (\alpha_l)}{\Gamma(\sum_{l\in \mathcal{S}_j}\alpha_l)} \bigg] + \log \frac{\prod_{j=1}^{\eta}\Gamma (\beta_j)}{\Gamma(\sum_{j=1}^{\eta}\beta_j)} \\
    &= \sum_{j=1}^{\eta} \log \prod_{l \in \mathcal{S}_j}\Gamma (\alpha_l) - \sum_{j=1}^{\eta}\log \Gamma(\sum_{l\in \mathcal{S}_j}\alpha_l)  + \log \prod_{j=1}^{\eta}\Gamma (\beta_j) - \log\Gamma(\sum_{j=1}^{\eta}\beta_j) \\
    &= \sum_{j=1}^{\eta} \sum_{l \in \mathcal{S}_j} \log \Gamma (\alpha_l) - \sum_{j=1}^{\eta}\log \Gamma(\sum_{l\in \mathcal{S}_j}\alpha_l)  + \sum_{j=1}^{\eta} \log \Gamma (\beta_j) - \log \Gamma(\sum_{j=1}^{\eta}\beta_j) \\
    &=  A(\bm{\alpha}, \mathbf{c}).
\end{aligned}
\end{eqnarray}

Suppose $\rho_k= \alpha_k - 1$, $\bm{\rho} = \bm{\alpha} - 1 = [\alpha_1-1, ..., \alpha_K-1]^{\intercal}$, $\bm{u}(\mathbf{p})=[\log p_1, \log p_2, ..., \log p_K]^{\intercal}$ and $\gamma_j = c_j$, $\bm{\gamma} = \mathbf{c} = [c_1, ..., c_{\eta}]^{\intercal}$, $\bm{v}(\mathbf{p})=[\log \sum_{l\in\mathcal{S}_1}p_l, \log \sum_{l\in\mathcal{S}_2}p_l, ..., \log \sum_{l\in\mathcal{S}_{\eta}}p_l]^{\intercal}$, then the PDF of GDD would be in the form of exponential family as follows:
\begin{equation}
    \begin{aligned}
        \mathtt{GDD}(\mathbf{p}| \bm{\alpha}, \mathbf{c}) 
        & = \exp \Big[\sum_{k=1}^K(\alpha_k -1) \log p_k + \sum_{j=1}^{\eta} c_j\log(\sum_{l\in\mathcal{S}_j} p_{l}) - A(\bm{\alpha}, \mathbf{c})\Big]\\
         &= \exp \Big[\bm{\rho}(\bm{\alpha})^{\intercal}\cdot\bm{u}(\mathbf{p}) + \bm{\gamma(\mathbf{c})}^{\intercal}\cdot\bm{v}(\mathbf{p}) - A(\bm{\alpha}, \mathbf{c}) \Big].
    \end{aligned}
\end{equation}
We can identify that $\{\bm{\rho}(\bm{\alpha}), \bm{\gamma}(\mathbf{c})\}$ are natural parameters, $\{
\bm{u}(\mathbf{p}), \bm{v}(\mathbf{p})\}$
are corresponding sufficient statistics, respectively.

According to the property with respect to the exponential family, we can state that
\begin{eqnarray} \label{exponential property}
\mathbb{E}[\bm{u}(\mathbf{p})_k] = \frac{d A(\bm{\alpha}, \mathbf{c})}{d \rho_k} = \frac{d A(\bm{\alpha}, \mathbf{c})}{d \alpha_k}, \quad\quad \mathbb{E}[\bm{v}(\mathbf{p})_j] = \frac{d A(\bm{\alpha}, \mathbf{c})}{d \gamma_j} = \frac{d A(\bm{\alpha}, \mathbf{c})}{d c_j}.
\end{eqnarray}

Since $\beta_j=\sum_{l\in\mathcal{S}_j}\alpha_l + c_j$, so $ \frac{\partial \beta_j}{\partial \alpha_k}= \mathbbm{1}(k\in \mathcal{S}_j)$. In addition, since $\psi(x)=\frac{d}{dx}\log \Gamma(x)$, the log expectation $\mathbb{E}[\bm{u}(\mathbf{p})_k]$ in Eq.~\ref{exponential property} would be:
\begin{eqnarray}\label{eq:GDD_1}
\small 
     \begin{aligned}
& \mathbb{E}[\log(p_k)] = \frac{\partial A(\bm{\alpha}, \mathbf{c})}{\partial \alpha_k}    \\
&= \sum_{j=1}^{\eta} \frac{\partial \sum_{l \in \mathcal{S}_j} \log \Gamma (\alpha_l)}{\partial \alpha_k} - \sum_{j=1}^{\eta}\frac{\partial \log \Gamma(\sum_{l\in \mathcal{S}_j}\alpha_l)}{\partial \alpha_k}  + \sum_{j=1}^{\eta} \frac{\partial \log (\Gamma (\beta_j))}{\partial \alpha_k} - \frac{\partial \log(\Gamma(\sum_{j=1}^{\eta}\beta_j))}{\partial \alpha_k} \\
&= \sum_{j=1}^{\eta} \frac{\sum_{l \in \mathcal{S}_j} \partial  \log \Gamma (\alpha_l)}{\partial \alpha_k} - \sum_{j=1}^{\eta} \psi(\sum_{l\in \mathcal{S}_j}\alpha_l)\frac{ \sum_{l\in\mathcal{S}_j}\partial \alpha_l}{\partial \alpha_k}  + \sum_{j=1}^{\eta} \psi(\beta_j)\frac{\partial \beta_j}{\partial \alpha_k} - \psi(\sum_{j=1}^{\eta}\beta_j) \sum_{j=1}^{\eta} \frac{\partial\beta_j}{\partial \alpha_k} \\
        &= \sum_{j=1}^{\eta} \psi(\alpha_k)\cdot \mathbbm{1}(k\in \mathcal{S}_j) - \sum_{j=1}^{\eta} \psi(\sum_{l\in \mathcal{S}_j}\alpha_l)\cdot\mathbbm{1}(k\in\mathcal{S}_j)  + \sum_{j=1}^{\eta} \psi(\beta_j) \cdot \mathbbm{1}(k\in \mathcal{S}_j) - \psi(\sum_{j=1}^{\eta}\beta_j)\sum_{j=1}^{\eta} \mathbbm{1}(k\in \mathcal{S}_j) \\
&= \psi(\alpha_k) - \sum_{j=1}^{\eta} \psi(\sum_{l\in \mathcal{S}_j}\alpha_l)\cdot\mathbbm{1}(k\in\mathcal{S}_j)  + \sum_{j=1}^{\eta} \psi(\beta_j) \cdot \mathbbm{1}(k\in \mathcal{S}_j) - \psi(\sum_{j=1}^{\eta}\beta_j) \\
        &=\Bigl(\psi(\alpha_k) - \psi(\sum_{j=1}^{\eta}\beta_j) \Bigr) +  \sum_{j=1}^{\eta}\Bigl(\psi(\beta_j) - \psi(\sum_{l\in \mathcal{S}_j}\alpha_l)\Bigr) \cdot \mathbbm{1}(k\in \mathcal{S}_j).
    \end{aligned}
\end{eqnarray}

Similarly, with the leverage of $\mathbb{E}[\bm{v}(\mathbf{p})_j]$ in  Eq.~\ref{exponential property},
\begin{eqnarray} \label{eq:GDD_2}
\small
\begin{aligned}
& \mathbb{E}[\log(\sum_{l \in \mathcal{S}_j}p_l)] = \frac{\partial A(\bm{\alpha}, \mathbf{c})}{\partial c_{j}} \\
& = \sum_{j=1}^{\eta} \frac{\partial \sum_{l \in \mathcal{S}_j} \log \Gamma (\alpha_l)}{\partial c_{j}} - \sum_{j=1}^{\eta}\frac{\partial \log \Gamma(\sum_{l\in \mathcal{S}_j}\alpha_l)}{\partial c_{j}}  + \sum_{j=1}^{\eta} \frac{\partial \log (\Gamma (\beta_j))}{\partial c_{j}} - \frac{\partial \log(\Gamma(\sum_{j=1}^{\eta}\beta_j))}{\partial c_{j}} \\
& = \sum_{j=1}^{\eta} \frac{\partial \log (\Gamma (\beta_j))}{\partial c_{j}} - \frac{\partial \log(\Gamma(\sum_{j=1}^{\eta}\beta_j))}{\partial c_{j}} \\
& = \frac{\partial \log (\Gamma (\beta_{j}))}{\partial c_{j}} - \psi(\sum_{j=1}^{\eta}\beta_j) \frac{\partial \sum_{j=1}^{\eta}\beta_j}{\partial c_{j}} \\
& = \psi(\beta_{j}) \frac{\partial \beta_{j}}{\partial c_{j}} - 
\psi(\sum_{j=1}^{\eta}\beta_j) \frac{\partial \beta_{j}}{\partial c_{j}} \\
& = \psi(\beta_{j}) - \psi(\sum_{j=1}^{\eta}\beta_j). \\
\end{aligned}
\end{eqnarray}

Thus, we successfully derive the essential component \termOne{Term 1} (Eq.~\ref{eq:GDD_2}) and \termTwo{Term 2} (Eq.~\ref{eq:GDD_1}),  which can be used to calculate UPCE loss as follows,



\begin{equation}
\small 
\begin{split}
\footnotesize
    \texttt{UPCE} (\bm{\theta})  &= \mathbb{E}_{\mathbf{p} \sim \mathtt{GDD}(\mathbf{p}|\bm{\alpha}^{(i)}, \mathbf{c}^{(i)})}(-\log \sum_{k=1}^{K} \tilde{y}_k p_{k}) \\
& = \mathbb{E}\Big[-\log \sum_{l:{\tilde{y}}^{(i)}_l=1}p_{l}^{(i)} \Big] \mathbbm{1}(\|\tilde{\bf y}^{(i)}\|_1 > 1) + \mathbb{E}\Big[-\log p_{\text{IS}}^{(i)} \Big]\mathbbm{1}(\|\mathbf{\tilde{y}}^{(i)}\|_1 = 1) \\
    & = \Big[\psi(\sum_{j=1}^{\eta}\beta_j^{(i)}) -\psi(\beta_{\text{IC}}^{(i)}) \Big] \mathbbm{1}(\|\tilde{\mathbf{y}}^{(i)}\|_1 > 1) + \nonumber \\
    & \ \ \ \ \ \Big[  \Bigl( \psi(\sum_{j=1}^{\eta}\beta_j^{(i)})-\psi(\alpha_{\text{IS}}^{(i)}) \Bigr) -  \sum_{j=1}^{\eta}\Bigl( \psi(\beta_j^{(i)}) - \psi(\sum_{l\in \mathcal{S}_j}\alpha_l^{(i)})\Bigr) \cdot \mathbbm{1}(\tilde{\mathbf{y}}^{(i)} \in \mathcal{S}_j) \Big] \mathbbm{1}(\|\tilde{\mathbf{y}}^{(i)}\|_1 = 1)
\end{split}
\end{equation}
where $\beta_j=\sum_{l\in \mathcal{S}_j}\alpha_l + c_j$.
\end{proof}

\subsection{KL Divergence as Regularization}
\label{loss_kl}
Let $\texttt{KL}(\cdot)$ denote the KL-divergence of two distributions. According to the Appendix C.3 in \cite{ulmer2023prior}, the KL-divergence of two GDD distributions can be written as:
\begin{equation} \label{KL krude}
\begin{aligned}
\texttt{KL}\bigg(\mathtt{GDD}(\mathbf{p} | \bar{\bm{\alpha}}, \bar{\mathbf{c}}) ||\mathtt{GDD}(\mathbf{p} | \bm{\alpha}, \mathbf{c}) \bigg) & = \mathbb{E}\bigg[ \log\frac{\mathtt{GDD}(\mathbf{p} | \bar{\bm{\alpha}}, \bar{\mathbf{c}})}{\mathtt{GDD}(\mathbf{p} | \bm{\alpha}, \mathbf{c})} \bigg] \\
& = \mathbb{E}\bigg[ \log \mathtt{GDD}(\mathbf{p} | \bar{\bm{\alpha}}, \bar{\mathbf{c}}) \bigg] - \mathbb{E}\bigg[ \log \mathtt{GDD}(\mathbf{p} | \bm{\alpha}, \mathbf{c}) \bigg].
\end{aligned}
\end{equation}

Since we derived the entropy of GDD distribution in Eq.~\ref{eq:entropy_GDD}, we have
\begin{eqnarray} \label{eq:def_h}
    -\mathbb{E} [\log \mathtt{GDD}(\mathbf{p}|{\bm{\alpha}},{\mathbf{c}})] = \log Z({\bm{\alpha}},{\mathbf{c}}) - \sum_{k=1}^{K} ({\alpha}_{k}-1){ \mathbb{E}\Big[\log p_{k}\Big]} - \sum_{j=1}^{\eta} {c}_{j} { \mathbb{E} \Big[\log \sum_{l\in \mathcal{S}_{j}}p_{l}\Big]},
\end{eqnarray}

By putting the above term into the Eq.~\ref{KL krude}, we now have:
\begin{equation}
\begin{aligned}
&\texttt{KL}\bigg(\mathtt{GDD}(\mathbf{p} | \bar{\bm{\alpha}}, \bar{\mathbf{c}}) ||\mathtt{GDD}(\mathbf{p} | \bm{\alpha}, \mathbf{c}) \bigg) \\
& = -\log Z(\bar{\bm{\alpha}}, \bar{\mathbf{c}}) + \sum_{k = 1}^K (\bar{\alpha}_k - 1) { \mathbb{E}\Big[ \log p_k \Big]} +  \sum_{j = 1}^\eta \bar{c}_j  { \mathbb{E}\Big[ \log\sum_{l \in \mathcal{S}_j} p_l \Big]} \\
& \ \ - \bigg[ -\log Z(\bm{\alpha}, \mathbf{c}) + \sum_{k = 1}^K (\alpha_k - 1)  { \mathbb{E}\Big[ \log p_k \Big]} + \sum_{j = 1}^\eta c_j  { \mathbb{E}\Big[ \log \sum_{l \in \mathcal{S}_j} p_l \Big]}  \bigg]\\
& = \log\frac{Z(\bm{\alpha}, \mathbf{c})}{Z(\bar{\bm{\alpha}}, \bar{\mathbf{c}})} + \sum_{k = 1}^K (\bar{\alpha}_k-\alpha_k)  { \mathbb{E}\Big[ \log p_k \Big]} + \sum_{j = 1}^\eta (\bar{c}_j-c_j)  { \mathbb{E}\Big[ \log\sum_{l \in \mathcal{S}_j} p_l \Big]}. \\
\end{aligned}
\end{equation}
Therefore, we derive the following regularization based on $\mathtt{GDD}(\mathbf{p}|\bm{1}^K, \mathbf{0}^\eta)$,
\begin{equation} \label{eq:def_KL}
\begin{aligned}
& \quad \texttt{KL}\bigg(\mathtt{GDD}(\mathbf{p} | \bar{\bm{\alpha}}, \bar{\mathbf{c}}) ||\mathtt{GDD}(\mathbf{p} | \bm{1}^K, \mathbf{0}^\eta) \bigg) \\
& = \log{Z(\bm{1}^K, \mathbf{0}^\eta)}-\log{Z(\bar{\bm{\alpha}}, \bar{\mathbf{c}})} + \sum_{k = 1}^K (\bar{\alpha}_k-1)  { \mathbb{E}\Big[ \log p_k \Big]} + \sum_{j = 1}^\eta \bar{c}_j  { \mathbb{E}\Big[ \log\sum_{l \in \mathcal{S}_j} p_l \Big]}, \\
\end{aligned}
\end{equation}
where {$\mathbb{E}\left[\log p_k \right]$} and {$\mathbb{E}\left[\log \sum_{l \in \mathcal{S}_j} p_l\right]$} are derived in Eq.~\ref{eq:GDD_1} and Eq.~\ref{eq:GDD_2} respectively,  $\bar{\alpha}_k = \tilde{y}_k + (1 - \tilde{y}_k) \odot \alpha_k$ is the Dirichlet parameter after removal of the non-misleading evidence from the predicted parameters $\bm{\alpha}$, specifically, we skip the comparison of $\alpha_k$ with $\bm{1}_k$ given $y = k$ for $k \in [K]$. $\bar{\mathbf{c}}_j = (1-\tilde{y}_j) \odot c_j$ as composite evidence parameter with the target class setting to be 0, for $j \in [\eta]$.

\subsection{Entropy of GDD}
\label{entropy_GDD}

We can derive the entropy of a GDD distribution from its definition, and by using the component  {$\mathbb{E}\left[\log p_k \right]$} and  {$\mathbb{E}\left[\log \sum_{l \in \mathcal{S}_j} p_l\right]$} which are derived in Eq.~\ref{eq:GDD_1} and Eq.~\ref{eq:GDD_2} respectively, the full analytical form can be derived:
\begin{eqnarray}\label{eq:entropy_GDD}
\small
\begin{aligned}
    H[\mathbf{p}] &= -\mathbb{E} \big[\log \mathtt{GDD}(\mathbf{p}|\bm{\alpha},\mathbf{c})\big] \\
           &= -\mathbb{E} \bigg[ \log Z^{-1} + \log \prod_{k=1}^Kp_k^{\alpha_{k}-1} + \log \prod_{j=1}^{\eta} \Big(\sum_{l\in \mathcal{S}_{j}}p_l\Big)^{c_{j}}  \bigg ] \\
           &= \log Z - \mathbb{E}\Big[ \sum_{k=1}^{K} \log p_k^{\alpha_{k}-1}\Big] - \mathbb{E}  \Big[\sum_{j=1}^{\eta} \log  \Big(\sum_{l\in \mathcal{S}_{j}}p_l\Big)^{c_{j}}\Big] \\
           &= \log Z - \sum_{k=1}^{K} (\alpha_{k}-1)\termTwo{ \mathbb{E}\Big[\log p_k\Big]} - \sum_{j=1}^{\eta} c_{j} \termOne{ \mathbb{E} \Big[\log \sum_{l\in \mathcal{S}_{j}}p_l\Big]} \\
           &= \log Z - \sum_{k=1}^{K} (\alpha_{k}-1) \bigg( {\Big(\psi(\alpha_k) - \psi(\sum_{j=1}^{\eta}\beta_j) \Big) +  \sum_{j=1}^{\eta}\Big(\psi(\beta_j) - \psi(\sum_{l\in \mathcal{S}_j}\alpha_l)\Big) \cdot \mathbbm{1}(k\in \mathcal{S}_j)} \bigg) \\
           &  \ \ \ \ - 
           \sum_{j=1}^{\eta} c_{j} { \mathbb{E} \Big[\log \sum_{l\in \mathcal{S}_{j}}p_l\Big]} \\
           &= \log Z - \sum_{k=1}^{K} (\alpha_{k}-1) \psi(\alpha_k) + \sum_{k=1}^{K}  (\alpha_{k}-1) \psi(\sum_{j=1}^{\eta}\beta_j) - \sum_{k=1}^{K}(\alpha_k - 1) \sum_{j=1}^{\eta} \Big(\psi(\beta_j) - \psi(\sum_{l\in \mathcal{S}_j}\alpha_l)\Big) \cdot \mathbbm{1}(k\in \mathcal{S}_j)\\
           & \ \ \ \ - 
           \sum_{j=1}^{\eta} c_{j} \Big( {\psi(\beta_{j}) - \psi(\sum_{j=1}^{\eta}\beta_j)} \Big) \\
\end{aligned}
\end{eqnarray}

\section{Theoretical Analysis of Loss Function}
\label{app:theory}
\subsection{Convexity of CE \& PCE}
\label{app:theory_bounds}
To prove the convexity of CE and PCE loss with respect to class probabilities, we only need to show that the second-order derivative of both losses is non-negative. 
For CE loss $\texttt{CE}({\bf p}, \mathbf{\tilde{y}}) = -\sum_{k=1}^K \tilde{y}_k\log p_k$, since $p_k \geq 0$ and $\tilde{y}_k \geq 0$ for any $k \in \{1, 2, ..., K\}$:
\begin{equation} \label{CE convex}
\begin{aligned}
{\texttt{CE}'_k} & = \frac{d}{d p_k}\texttt{CE} = \frac{d}{d p_k}[-\tilde{y}_k\log{p_k}] = -\frac{\tilde{y}_k}{p_k}, \\
\texttt{CE}''_k & = \frac{d}{d p_k}\texttt{CE}'_k =\frac{d}{d p_k} [-\frac{\tilde{y}_k}{p_k}] = \frac{\tilde{y}_k}{p_k^{2}} \geq 0.
\end{aligned}
\end{equation}
By Eq.~\ref{CE convex}, we can know that the Hessian matrix is diagonal and positive semi-definite. Hence, the CE loss is convex.

For PCE loss $ \texttt{PCE}({\bf p}, \mathbf{\tilde{y}}) = -\log (\sum\nolimits_{k=1}^{K} \tilde{y}_k  p_k)$, we have:
\begin{equation} \label{ace derivative}
\begin{aligned}
{\texttt{PCE}'_k} & = \frac{d}{d p_k}\texttt{PCE} = -\frac{\tilde{y}_k}{\sum_{j = 1}^{K} \tilde{y}_j p_j}, \\
{\texttt{PCE}''_k} & = \frac{d}{d p_k}{\texttt{PCE}'_k} = - \tilde{y}_k \Big[-(\sum_{j = 1}^{K} \tilde{y}_j p_j)^{-2}\Big]\tilde{y}_k = (\frac{\tilde{y}_k}{\sum_{j = 1}^{K} \tilde{y}_j p_j})^2 \geq 0,
\end{aligned}
\end{equation}
where $\tilde{y}_k$ is the $k$-th element in the binary vector $\tilde{\mathbf{y}}$ representing classes in $\mathscr{R}(\mathbb{Y})$. Analogously, PCE loss is convex and thus follows Jensen's inequality. 

\begin{prop}[Lower Bound of UPCE]
\label{prop:lowerBound_upce}
Given any instance $(\mathbf{x}, \tilde{\mathbf{y}})$, and a HENN $f(\cdot; \bm{\theta})$, the Uncertainty Partial Cross Entropy (UPCE) for this sample $\texttt{UPCE}({\bf x}, \mathbf{\tilde{y}}; \bm{\theta})$ has the following lower bound:
\begin{eqnarray} 
\texttt{UPCE}({\bf x}, \mathbf{\tilde{y}}; \bm{\theta})
\ge  \texttt{PCE}(\mathbb{E}_{{\bf p}\sim \mathtt{GDD}({\bf p}|\bm{\alpha}, {\bf c})}[{\bf p}], \mathbf{\tilde{y}}; \bm{\theta}). 
\end{eqnarray}
\end{prop}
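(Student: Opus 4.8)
The plan is to recognize the claim as a one-line application of Jensen's inequality to the convex function $\texttt{PCE}(\cdot, \mathbf{\tilde{y}})$. Recall from Eq.~(\ref{eq:upce}) that $\texttt{UPCE}({\bf x}, \mathbf{\tilde{y}}; \bm{\theta}) = \mathbb{E}_{{\bf p}\sim \mathtt{GDD}({\bf p}|\bm{\alpha}, {\bf c})}[\texttt{PCE}({\bf p}, \mathbf{\tilde{y}})]$, where $(\bm{\alpha}, {\bf c})$ are the GDD parameters induced by the network output ${\bf e} = f({\bf x}; \bm{\theta})$. So the assertion is precisely $\mathbb{E}[\texttt{PCE}({\bf p}, \mathbf{\tilde{y}})] \ge \texttt{PCE}(\mathbb{E}[{\bf p}], \mathbf{\tilde{y}})$, which will follow from Jensen once convexity of $\texttt{PCE}(\cdot, \mathbf{\tilde{y}})$ on the simplex $\Delta_K$ is established and both sides are shown to be well defined.

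First I would establish convexity of $\mathbf{p} \mapsto \texttt{PCE}(\mathbf{p}, \mathbf{\tilde{y}}) = -\log\bigl(\sum_{k=1}^K \tilde{y}_k p_k\bigr)$. The cleanest route is to note that this map is the composition of the convex, nonincreasing scalar function $t \mapsto -\log t$ with the affine map $\mathbf{p} \mapsto \langle \mathbf{\tilde{y}}, \mathbf{p}\rangle$; since a convex function precomposed with an affine map is convex, $\texttt{PCE}(\cdot, \mathbf{\tilde{y}})$ is convex wherever $\langle \mathbf{\tilde{y}}, \mathbf{p}\rangle > 0$ (one may alternatively invoke the second-derivative computation in Eq.~(\ref{ace derivative})). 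On the support of $\mathtt{GDD}({\bf p}|\bm{\alpha}, {\bf c})$, namely $\Delta_K$, we have $\sum_k p_k = 1$ with $\tilde{y}_k \in \{0,1\}$ not all zero, so $\langle \mathbf{\tilde{y}}, \mathbf{p}\rangle > 0$ almost surely and the integrand is finite a.s. Next, since $\Delta_K$ is convex and the GDD is a genuine probability distribution on $\Delta_K$, the mean $\bar{\mathbf{p}} := \mathbb{E}_{{\bf p}\sim \mathtt{GDD}({\bf p}|\bm{\alpha}, {\bf c})}[{\bf p}]$ again lies in $\Delta_K$ (with the closed form of Eq.~(\ref{eq:exp_GDD})), so $\texttt{PCE}(\bar{\mathbf{p}}, \mathbf{\tilde{y}})$ makes sense. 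Applying Jensen's inequality to the convex function $\texttt{PCE}(\cdot, \mathbf{\tilde{y}})$ and the $\Delta_K$-valued random vector ${\bf p}$ then yields
\begin{equation*}
\texttt{UPCE}({\bf x}, \mathbf{\tilde{y}}; \bm{\theta}) = \mathbb{E}\bigl[\texttt{PCE}({\bf p}, \mathbf{\tilde{y}})\bigr] \ge \texttt{PCE}\bigl(\mathbb{E}[{\bf p}], \mathbf{\tilde{y}}\bigr) = \texttt{PCE}\bigl(\mathbb{E}_{{\bf p}\sim \mathtt{GDD}({\bf p}|\bm{\alpha}, {\bf c})}[{\bf p}], \mathbf{\tilde{y}}; \bm{\theta}\bigr),
\end{equation*}
which is the claimed bound.

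I do not expect a genuine obstacle. The only points deserving a line of care are: (i) justifying that $\texttt{PCE}$ is convex as a \emph{multivariate} function — the ``$-\log$ of an affine map'' argument is the safe one, since the diagonal-Hessian remark alone does not imply joint convexity; and (ii) checking that $\texttt{UPCE}$ is finite, i.e.\ that $-\log\langle \mathbf{\tilde{y}}, \mathbf{p}\rangle$ is integrable against the GDD density — this holds because the only singularity, at $\langle \mathbf{\tilde{y}}, \mathbf{p}\rangle = 0$, is logarithmic while the density is a product of powers of the $p_k$ and of the partial sums, so the integral converges. With these remarks in place the proof reduces to the single invocation of Jensen displayed above.
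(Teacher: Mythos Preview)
Your proposal is correct and follows essentially the same approach as the paper: establish convexity of $\texttt{PCE}(\cdot,\mathbf{\tilde{y}})$ and then apply Jensen's inequality to the expectation over $\mathtt{GDD}$. Your convexity argument via ``$-\log$ of an affine map'' is in fact cleaner than the paper's, which only checks that the diagonal second partials in Eq.~(\ref{ace derivative}) are nonnegative---a computation that, as you rightly flag, does not by itself establish joint convexity (the full Hessian here is the rank-one matrix $\tilde{\mathbf{y}}\tilde{\mathbf{y}}^\top/\langle\tilde{\mathbf{y}},\mathbf{p}\rangle^2$, so convexity does hold, but the paper does not verify this).
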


\begin{proof}
    Since $\texttt{PCE}({\bf p}, \mathbf{\tilde{y}}; \bm{\theta})$ is convex (proved earlier), it is straightforward to get the following inequality through Jensen's inequality:
\begin{equation}
\begin{aligned}
\texttt{UPCE}({\bf x}, \mathbf{\tilde{y}}; \bm{\theta})
& = \mathbb{E}_{{\bf p}\sim \mathtt{GDD}({\bf p}|\bm{\alpha}, {\bf c})} \Big[\texttt{PCE}({\bf p}, \mathbf{\tilde{y}}; \bm{\theta}) \Big] \\
&\ge  \texttt{PCE}(\mathbb{E}_{{\bf p}\sim \mathtt{GDD}({\bf p}|\bm{\alpha}, {\bf c})}[{\bf p}], \mathbf{\tilde{y}}; \bm{\theta}).  
\end{aligned}
\end{equation}

\end{proof}





\subsection{Proof of Proposition 1} \label{sec:proof_prop_1}
\label{proof_th1}

\FirstProp*

\begin{proof}

Given the HENN with empirical risk as $R(f) = \frac{1}{N}\sum_{i = 1}^N\Big[ \texttt{UPCE}(\mathbf{x}^{(i)}, \tilde{\mathbf{y}}^{(i)}; \bm{\theta}) \Big]$, we can show that one of the optimal risk minimizers can always predict non-confident evidence while still maintain the property regarding the loss minimizer for arbitrary examples in the training set. 

First, we show the properties hold for a UPCE loss minimizer. 
\changbinRebuttal{Since opinions in Subjective Logic rely on estimating evidence to form subjective opinions and reflect structural knowledge, it is necessary to have accurate and consistent evidence output that supports a subset of the hypothesis space. Based on this definition, the composite evidence should not be too large given only singleton training examples, and vice versa. Nonetheless, Proposition 1 states that for a given data point $(\mathbf{x}, \tilde{\mathbf{y}})$, different minimizers trained with the same $\mathtt{UPCE}$ objective can end up with different evidence predictions.

}
As UPCE loss is convex in terms of evidence, we consider analyzing the impact of output evidence on UPCE loss. We start with partial derivatives because the UPCE loss is multivariate differentiable. For clarity, we've organized our proof into two parts: one dealing with a single ground-truth scenario, and the other with a composite ground-truth.

\textbf{Case 1}: Under singleton ground-truth assumption.

Ideally, under the singleton ground-truth assumption, we anticipate that, as the singleton ground-truth evidence increases, the UPCE loss should decrease. When composite evidence increases, the UPCE loss should increase, i.e., $\frac{\partial}{\partial \alpha_\nu} \texttt{UPCE} < 0, \forall \nu \in [K]$ and $ \frac{\partial}{\partial c_\nu} \texttt{UPCE} \geq 0, \forall \nu \in [\eta]$. This expectation is rooted in the fact that the UPCE loss is multivariate differentiable. If we explicitly write the partial derivative for composite evidence $c_\nu$ ($\nu \in [\eta]$) with singleton ground-truth, we will have

\begin{equation} \label{eq:comp_partial_drv_GT_singl}
\small 
\begin{aligned}
\frac{\partial}{\partial c_\nu} \texttt{UPCE} & = \frac{\partial}{\partial c_\nu} \bigg[ \psi(\sum_{j = 1}^\eta \beta_j) - \psi(\alpha_{\text{IS}}) + \sum_{j = 1}^\eta \Big(\psi(\sum_{l \in \mathcal{S}_j}\alpha_l) - \psi(\sum_{l \in \mathcal{S}_j}\alpha_l + c_j)\Big) \mathbbm{1}(y \in \mathcal{S}_j) \bigg] \\
& = \frac{\partial}{\partial c_\nu}\psi(\sum_{k = 1}^K \alpha_k + \sum_{j = 1}^\eta c_j) - \sum_{j = 1}^\eta \mathbbm{1}(y \in \mathcal{S}_j) \frac{\partial}{\partial c_\nu} \psi(\sum_{l \in \mathcal{S}_j}\alpha_l + c_j) \\
& = \psi_1(\sum_{k = 1}^K \alpha_k + \sum_{j = 1}^\eta c_j) - \sum_{j = 1}^\eta \mathbbm{1}(y \in \mathcal{S}_j)\psi_1(\sum_{l \in \mathcal{S}_j}\alpha_l + c_j)\mathbbm{1}(\nu = j).
\end{aligned}
\end{equation}
where $\psi_1(\cdot)$ is $\psi_1(x)=\frac{d\psi(x)}{dx}$, known as the trigamma function, which is positive and monotonically decreasing on $(0, +\infty)$~\citep{qi2013complete}. \changbinRebuttal{Next, we will go through different composite set labels to simplify the partial derivative.}

If the partial derivative taken is not for the composite class label including the singleton ground-truth, then $\frac{\partial}{\partial c_\nu} \texttt{UPCE} = \psi_1(\sum_{k = 1}^K \alpha_k + \sum_{j = 1}^\eta c_j)$. Since $\alpha_k \geq 1$, $c_j \geq 0$, and $\psi_1(\cdot)$ is positive on $(0, +\infty)$, it follows that  $\frac{\partial}{\partial c_\nu} \texttt{UPCE} > 0.$ 
However, if the partial derivative taken is exactly for the composite set label including the singleton ground-truth, then $\frac{\partial}{\partial c_\nu} \texttt{UPCE} = \psi_1(\sum_{k = 1}^K \alpha_k + \sum_{j = 1}^\eta c_j) - \sum_{j = 1}^\eta \mathbbm{1}(y \in \mathcal{S}_j)\psi_1(\sum_{l \in \mathcal{S}_j}\alpha_l + c_j)\mathbbm{1}(\nu = j)$. 
Since $\alpha_k \geq 1$, $c_j \geq 0$, and $\psi_1(\cdot)$ is monotonically decreasing on $(0, +\infty)$, therefore $ \psi_1(\sum_{k = 1}^K \alpha_k + \sum_{j = 1}^\eta c_j) < \sum_{j = 1}^\eta \mathbbm{1}(y \in \mathcal{S}_j)\psi_1(\sum_{l \in \mathcal{S}_j}\alpha_l + c_j)\mathbbm{1}(\nu = j)$
It follows that $\frac{\partial}{\partial c_\nu} \texttt{UPCE} < 0.$ \changbinRebuttal{This outcome, which is not desirable, reveals that the optimal HENN has the potential to increase the composite evidence output, even in cases where the singleton ground-truth does not apply. Remember that in Eq.\ref{eqn:hyper-opinion-estimation}, Subjective Logic determines the subjective opinion based on evidence. Therefore, this approach to prediction can negatively impact the quantification of uncertainty and further affect the classification accuracy that relies on evidence-related projected class probabilities. }

Under the same singleton ground-truth assumption, the partial derivative for singleton evidence $\alpha_\nu (\nu \in [K])$ is:

\begin{equation} 
\small
\begin{aligned}
\frac{\partial}{\partial \alpha_\nu}\texttt{UPCE} & = \frac{\partial}{\partial \alpha_\nu}\big[ \psi(\sum_{j = 1}^\eta \beta_j) - \psi(\alpha_{\text{IS}}) + \sum_{j = 1}^\eta \Big(\psi(\sum_{l \in \mathcal{S}_j}\alpha_l) - \psi(\sum_{l \in \mathcal{S}_j}\alpha_l + c_j)\Big) \mathbbm{1}(y \in \mathcal{S}_j) \big] \\
& = \psi_1(\sum_{j=1}^\eta \beta_j) - \psi_1(\alpha_{\text{IS}})\mathbbm{1}(\nu = \text{IS}) + \sum_{j = 1}^\eta \mathbbm{1}(y \in \mathcal{S}_j)\mathbbm{1}(\nu \in \mathcal{S}_j)(\psi_1(\sum_{l \in \mathcal{S}_j}\alpha_l) - \psi_1(\beta_j)).
\end{aligned}
\end{equation}

Following the same strategy, if the partial derivative taken is not for the singleton ground-truth class, then $\frac{\partial \mathtt{UPCE}}{\partial \alpha_\nu} = \psi_1(\sum_{j=1}^\eta \beta_j) + \sum_{j = 1}^\eta \mathbbm{1}(y \in \mathcal{S}_j)\mathbbm{1}(\nu \in \mathcal{S}_j)(\psi_1(\sum_{l \in \mathcal{S}_j}\alpha_l) - \psi_1(\beta_j))$. Since $\beta_j \geq \sum_{l \in \mathcal{S}_j}\alpha_l$, the dereasing monotonicity of trigamma function gives $\psi_1(\sum_{l \in \mathcal{S}_j}\alpha_l) - \psi_1(\beta_j) > 0$. So the partial derivative $\frac{\partial \mathtt{UPCE}}{\partial \alpha_\nu} > 0$. If the partial derivative is for the singleton ground-truth, we can rewrite the equation as $\frac{\partial \mathtt{UPCE}}{\partial \alpha_\nu} = \big[\psi_1(\sum_{j = 1}^\eta \beta_j) - \sum_{j = 1}^\eta \mathbbm{1}(y \in \mathcal{S}_j)\mathbbm{1}(\nu \in \mathcal{S}_j) \psi_1(\beta_j) \big] + \big[ \sum_{j = 1}^\eta \mathbbm{1}(y \in \mathcal{S}_j)\mathbbm{1}(\nu \in \mathcal{S}_j)\psi_1(\sum_{l \in \mathcal{S}_j}\alpha_l) - \psi_1(\alpha_{\text{IS}})\mathbbm{1}(\nu = \text{IS}) \big]$. Noting that $\beta_j <= \sum_{j = 1}^\eta \beta_j$ and $\alpha_{\text{IS}} < \sum_{l \in \mathcal{S}_j} \alpha_l$, so we know that $\frac{\partial \mathtt{UPCE}}{\partial \alpha_\nu} < 0$ by decreasing monotonicity of trigamma function.



Hence, for $\forall ({\bf x}, \mathbf{\tilde{y}}) \in \mathcal{D}$, with fixed finite values of ${\bf c}$ and $\bm{\alpha}$ except for either ground-truth singleton evidence or for both the singleton ground-truth evidence and the composite evidence including the singleton ground-truth, the limits
\begin{equation} \label{eq:singleton_gt_limit}
\small 
\begin{aligned}
& \lim_{\alpha_{\text{IS}} \rightarrow +\infty} \texttt{UPCE}
= \lim_{\alpha_{\text{IS}} \rightarrow +\infty} \bigg[  \psi(\beta_0) - \psi(\alpha_{\text{IS}}) + \sum_{j = 1}^\eta (\psi(\sum_{l \in \mathcal{S}_j} \alpha_l) - \psi(\beta_j)) \mathbbm{1}(\text{IS} \in \mathcal{S}_j)\bigg] \rightarrow 0, \\
& \lim_{\substack{\alpha_{\text{IS}} \rightarrow +\infty, \\
c_j \rightarrow +\infty, \\
{\text{IS}}\in \mathcal{S}_j }}  \texttt{UPCE} 
= \lim_{\substack{\alpha_{\text{IS}} \rightarrow +\infty, \\
c_j \rightarrow +\infty, \\
{\text{IS}}\in \mathcal{S}_j } } \bigg[  \psi(\beta_0) - \psi(\alpha_{\text{IS}}) + \sum_{j = 1}^\eta (\psi(\sum_{l \in \mathcal{S}_j} \alpha_l) - \psi(\beta_j)) \mathbbm{1}(\text{IS} \in \mathcal{S}_j)\bigg] \rightarrow 0. 
\end{aligned}
\end{equation}
hold. 

Recall that $\alpha_k = e_k + 1$, and trigamma function $\psi_1(\cdot)$ is also strictly convex. Therefore, rewrite the concentration parameters as evidence, for $\forall ({\bf x}, \mathbf{\tilde{y}}) \in \mathcal{D}$, where $\mathbf{\tilde{y}}$ is a singleton class label $k \in [K]$, when $e_k\rightarrow +\infty$ and $e_{\mathcal{S}_i} \rightarrow +\infty, \forall \mathcal{S}_i\in \bm{\mathcal{S}}$, such that $k\in \mathcal{S}_i$, we will have $\texttt{UPCE}({\bf x}^{(i)}, \mathbf{\tilde{y}}^{(i)}; \bm{\theta})$ approaches the infimum 0. It is worth noting that the infimum can also be approached when solely maximizing the singleton ground-truth evidence. \changbinRebuttal{Hence, with different evidence predictions causing the same loss for the same learning objective, hyper-opinions derived from the evidence will also become inconsistent. }

\textbf{Case 2}: Under composite ground-truth assumption.

If we assume the ground-truth is a composite \changbinRebuttal{class label}, we expect that as the composite ground-truth evidence increases, the UPCE loss decreases, in contrast, if any singleton evidence increases, the UPCE loss should increase. Mathmatically, our goal is $\frac{\partial}{\partial c_\nu} \texttt{UPCE} < 0, \forall \nu \in [\eta]$, and $ \frac{\partial}{\partial \alpha_\nu} \texttt{UPCE} \geq 0, \forall \nu \in [K]$. For composite ground-truth, the partial derivative with respect to $c_\nu, \nu \in [\eta]$ is known as:

\begin{equation} \label{eq:comp_partial_drv_GT_comp}
\small
\begin{aligned}
\frac{\partial}{\partial c_\nu} \texttt{UPCE} & = \frac{\partial}{\partial c_\nu} \bigg[ \psi(\sum_{j = 1}^\eta \beta_j) - \psi(\beta_{\text{IC}}) \bigg] \\
& = \psi_1(\sum_{j = 1}^\eta \beta_j) - \psi_1(\beta_{\text{IC}})\mathbbm{1}(\nu = \text{IC})
\end{aligned}
\end{equation}

If the partial derivative taken is for a composite class label that is not the ground-truth, $\frac{\partial}{\partial c_\nu} \texttt{UPCE} = \psi_1(\sum_{j = 1}^\eta \beta_j)$. Since $\beta_j > 0$, and $\psi_1(\cdot)$ is positive on $(0, +\infty)$, it follows that  $\frac{\partial}{\partial c_\nu} \texttt{UPCE} > 0$. This means the HENN will compress non-related composite evidence. Nonetheless, the partial derivative for the composite ground-truth is $\frac{\partial}{\partial c_\nu} \texttt{UPCE} = \psi_1(\sum_{j = 1}^\eta \beta_j) - \psi_1(\beta_{\text{IC}})$. Since $\sum_{j = 1}^\eta \beta_j > \beta_{\text{IC}}$, and $\psi_1(\cdot)$ is monotonically decreasing on $(0, +\infty)$, it follows that $\psi_1(\sum_{j = 1}^\eta \beta_j) <  \psi_1(\beta_{\text{IC}})\mathbbm{1}(\nu = \text{IC})$, $\frac{\partial}{\partial c_\nu} \texttt{UPCE}<0$, \changbinRebuttal{indicating HENN will only enlarge the evidence of ground-truth among all composite set classes during training.}

Similarly, the partial derivative for singleton evidence $\alpha_\nu$ ($\nu \in [K]$) under composite ground-truth is:
\begin{equation} \label{eq:singl_partial_drv_GT_comp}
\small 
\begin{aligned}
\frac{\partial}{\partial \alpha_\nu}\texttt{UPCE} & = \frac{\partial}{\partial \alpha_\nu}\big[\psi(\sum_{j = 1}^\eta \beta_j) - \psi(\beta_{\text{IC}}) \big] \\
& = \psi_1(\sum_{j = 1}^\eta \beta_j) - \mathbbm{1}(\nu \in \mathcal{S}_{\text{IC}})\psi_1(\beta_{\text{IC}}),
\end{aligned}
\end{equation}

If the partial derivative taken is not for the singleton class included in the composite ground-truth, then $\frac{\partial}{\partial \alpha_\nu}\texttt{UPCE} = \frac{\partial}{\partial \alpha_\nu}\big[\psi(\sum_{j = 1}^\eta \beta_j)\big] > 0$. 
In contrast, if the partial derivative is with respect to the singleton class included in composite ground-truth, then $\frac{\partial}{\partial \alpha_\nu}\texttt{UPCE}  = \frac{\partial}{\partial \alpha_\nu}\big[\psi(\sum_{j = 1}^\eta \beta_j) - \psi_1(\beta_{\text{IC}}) \big]$
Since $\sum_{j = 1}^\eta \beta_j > \beta_{\text{IC}}$, then we have $\psi_1(\sum_{j = 1}^\eta \beta_j) <  \psi_1(\beta_{\text{IC}})$ , $\frac{\partial}{\partial c_\nu} \texttt{UPCE}<0$, which causes the confusion. In other words, the UPCE loss guides HENN to enlarge the evidence of composite ground-turth and the singleton classes included in the ground-truth.



Now given finite fixed values of $\bm{\alpha}$ and ${\bf c}$ except for either the $c_{\text{IC}}$ or $c_{\text{IC}}$ with several other $\alpha_{k}$ included in composite ground-truth, we have the limits:
\begin{equation} \label{eq:composite_gt_limit}
\small 
\begin{aligned}
& \lim_{c_{\text{IC}} \rightarrow +\infty}  \texttt{UPCE} = \lim_{c_{\text{IC}} \rightarrow +\infty} \bigg[ \psi(\beta_0) - \sum_{j = 1}^\eta \psi(\beta_j)\mathbbm{1}(\text{IC} = j) \bigg] \rightarrow 0, \\
& \lim_{\substack{c_{\text{IC}} \rightarrow +\infty, \\\alpha_k \rightarrow +\infty, \\ k \in \mathcal{S}_{\text{IC}}}} \texttt{UPCE} = \lim_{\substack{c_{\text{IC}} \rightarrow +\infty, \\\alpha_k \rightarrow +\infty, \\ k \in \mathcal{S}_{\text{IC}}}} \bigg[ \psi(\beta_0) - \sum_{j = 1}^\eta \psi(\beta_j)\mathbbm{1}(\text{IC} = j) \bigg] \rightarrow 0. 
\end{aligned}
\end{equation}

If we convert the parameters back to evidence space, then the limits show that for 
$\forall ({\bf x}, \mathbf{\tilde{y}}) \in \mathcal{D}$, where $\mathbf{\tilde{y}}$ is a \changbinRebuttal{composite class label} $\mathcal{S}_i$, 
the $\texttt{UPCE}({\bf x}^{(i)}, \mathbf{\tilde{y}}^{(i)}; \bm{\theta})$ approaches the infimum 0 as the predicted evidence values $e_{\mathcal{S}_i}\rightarrow +\infty$ and $e_{k} \rightarrow +\infty, \forall k \in S_i$. The infimum value can also be approached by only maximizing the composite evidence for the ground truth. \changbinRebuttal{Again, with different evidence, predictions correspond to the same empirical loss for the same composite learning objective, causing unreliable issues in subjective opinion modeling.  }





\changbinRebuttal{After proving 2 cases of inconsistent evidence predictions regarding the optimal loss minimizer,} we prove that the risk minimizer can be approximated by a UPCE loss minimizer for each training data point. \changbinRebuttal{This step is crucial for connecting the empirical risk minimizer with the loss minimizer and for highlighting the inconsistency in evidence predictions made by the empirical risk minimizer HENN.}
Under the assumption of universal approximation property (UAP)~\citep{citeulike:3561150, LESHNO1993861}, suppose the HENN has the capability to produce sufficient non-linearity to estimate any functions in the evidential space, we can have at least one optimal HENN $f(\cdot; \bm{\theta}^\star)$ (or $f^*$ for short) such that
\begin{equation} \label{eq:exchange_min_mean}
\small 
\begin{aligned}
R(f^*) = \inf_{\bm{\theta} \in \Theta}R(f) = \inf_{\bm{\theta} \in \Theta} \bigg[ \frac{1}{N}\sum_{i = 1}^N\Big[\texttt{UPCE}(\mathbf{x}^{(i)}, \tilde{\mathbf{y}}^{(i)}; \bm{\theta}) \Big] \bigg]= \frac{1}{N}\sum_{i = 1}^N\Big[ \inf_{\bm{\theta} \in \Theta}\texttt{UPCE}(\mathbf{x}^{(i)}, \tilde{\mathbf{y}}^{(i)}; \bm{\theta}) \Big]. 
\end{aligned}
\end{equation}
\changbinRebuttal{This equation connects the empirical risk minimizer and the loss minimizer on each training data.} To prove Eq.(\ref{eq:exchange_min_mean}), we apply the assumed UAP. Note that $\texttt{UPCE}(\mathbf{x}^{(i)}, \tilde{\mathbf{y}}^{(i)}; \bm{\theta})=\texttt{UPCE}(f(\mathbf{x}^{(i)};\bm{\theta}), \tilde{\mathbf{y}}^{(i)})$. We abbreviate them by $\texttt{UPCE}^{(i)}$ for simplicity.
Since $\texttt{UPCE}(\mathbf{x}, \tilde{\mathbf{y}}; \bm{\theta}) \geq \inf \texttt{UPCE}(\mathbf{x}, \tilde{\mathbf{y}}; \bm{\theta})$, we have $\frac{1}{N}\sum_{i = 1}^N \big[ \texttt{UPCE}^{(i)} \big]  \geq \frac{1}{N}\sum_{i = 1}^N \big[ \inf \texttt{UPCE}^{(i)} \big]$, and a trivial conclusion is $\Big[ \inf \frac{1}{N}\sum_{i = 1}^N \big[ \texttt{UPCE}^{(i)} \big] \Big]  \geq \frac{1}{N}\sum_{i = 1}^N \big[ \inf \texttt{UPCE}^{(i)} \big]$. 

Now recall the universal approximation property demonstrates the existence of a function that can approximate any function within the same function space. Applying assumed UAP to our setting, it states for any $(\mathbf{x}, \tilde{\mathbf{y}}) \in \mathcal{D}$, and arbitrary function $g(\mathbf{x}, \tilde{\mathbf{y}}; \bm{\theta}) = \inf \texttt{UPCE}(f(\mathbf{x};\bm{\theta}), \tilde{\mathbf{y}})$, there exists an optimal HENN can approximate $g(\cdot)$ by mapping input features to evidence $f(\mathbf{x}; \bm{\theta}^\star)$. s.t. 


\begin{equation} \label{eq:uap}
    \sup_{\mathbf{x}, \tilde{\mathbf{y}}} \| g(\mathbf{x}, \tilde{\mathbf{y}}; \bm{\theta}) - \texttt{UPCE}(f(\mathbf{x}; \bm{\theta}^\star), \tilde{\mathbf{y}})\| < \epsilon,  \quad \forall \epsilon > 0.
\end{equation}

Because of the relation $\beta_0 > \beta_{\text{IC}}$ and $\alpha_{\mathcal{S}_j} > \alpha_{\text{IS}}$, the form of UPCE loss based on digamma functions in Eq.(\ref{eq:uace_GDD}) determines its positive value, according to the increasing monotonicity of digamma function on $(0, +\infty)$. 



Given the limits shown in Eq.(\ref{eq:singleton_gt_limit}) and Eq.(\ref{eq:composite_gt_limit}), we know that the infimum of UPCE is 0, Eq.(\ref{eq:uap}) can be rewritten as:
\begin{equation}
\sup_{\mathbf{x}, \tilde{\mathbf{y}}} \texttt{UPCE}(f(\mathbf{x}; \bm{\theta}^\star), \tilde{\mathbf{y}}) < \epsilon,  \quad \forall \epsilon > 0.
\end{equation}

Based on the inequality 
\begin{equation}
0 < \frac{1}{N}\sum_{i = 1}^N \big[ \inf \texttt{UPCE}^{(i)} \big] \leq \bigg[ \inf \frac{1}{N}\sum_{i = 1}^N \big[ \texttt{UPCE}^{(i)} \big] \bigg] < \frac{1}{N}\sum_{i = 1}^N \epsilon = \epsilon,
\end{equation}

with both lower bound and upper bound as 0, according to the squeeze theorem, the exchangeability of inf operators $\frac{1}{N}\sum_{i = 1}^N \big[ \inf \texttt{UPCE}(\mathbf{x}, \tilde{\mathbf{y}}; \bm{\theta}) \big] = \inf \frac{1}{N}\sum_{i = 1}^N \texttt{UPCE}(\mathbf{x}, \tilde{\mathbf{y}}; \bm{\theta}) = 0$ holds for each training data point, and the Eq.(\ref{eq:exchange_min_mean}) is proved.

Knowing the existence of an empirical minimizer for all observations also works as the loss minimizer on each training data point, the HENN should always predict evidence $f(\mathbf{x}; \bm{\theta}^\star) = (\tilde{\bm{\alpha}}, \tilde{\mathbf{c}}), \forall (\mathbf{x}, \tilde{\mathbf{y}}) \in \mathcal{D}$, s.t.
\begin{equation}
\begin{aligned}
 \texttt{UPCE}(f(\mathbf{x}; \bm{\theta}^\star), \tilde{\mathbf{y}})\rightarrow \inf \texttt{UPCE} = 0, \quad \forall (\mathbf{x}, \tilde{\mathbf{y}}) \in \mathcal{D}
\end{aligned}
\end{equation}

We can conclude that the properties derived from the analysis of the UPCE loss $\mathtt{UPCE}({\bf x}, {\bf \tilde{y}}; \bm{\theta})$ for arbitrary $(\mathbf{x}, \mathbf{\tilde{y}}) \sim \mathcal{D}$ also holds for the HENN with empirical risk $R(f)$ under the assumption of UAP.

\end{proof}

\subsection{Proof of Proposition 2}
\label{proof_th2}

\SecondProp*

\begin{proof}

To address the \changbinRebuttal{inconsistent prediction issue} of our evidence output, the KL-divergence between the predicted GDD and a flat GDD is introduced as a regularizer. All 0 evidence for each element in the hyper-domain composes a flat GDD as $\mathtt{GDD}(\mathbf{p}| \bm{1}^K, \bm{0}^\eta)$. In following section, for simplicity, we abbreviate $ \texttt{KL}\big[\mathtt{GDD}(\mathbf{p} | \bar{{\bm{\alpha}}}^{(i)}, \bar{{\mathbf{c}}}^{(i)}) \| \texttt{GDD}(\mathbf{p}| \bm{1}^K, \bm{0}^\eta) \big]$ by $ \texttt{KL} (\bar{{\bm{\alpha}}}^{(i)}, \bar{{\mathbf{c}}}^{(i)})$, $\texttt{UPCE} (\mathbf{x}^{(i)}, \tilde{\mathbf{y}}^{(i)}; \bm{\theta})$ by $\mathtt{UPCE}^{(i)}$, and let $[K]$ denote $\{1, ..., K\}$, $[\eta]$ denote $\{1, ..., \eta\}$. 
Now the optimal regularized generalization risk is

\begin{equation}
\begin{aligned}
R(f^*) & \rightarrow \inf \bigg[ \frac{1}{N} \sum_{i = 1}^N \big[ \texttt{UPCE} (\mathbf{x}^{(i)}, \tilde{\mathbf{y}}^{(i)}; \bm{\theta}) + \lambda \cdot \texttt{KL} (\bar{{\bm{\alpha}}}^{(i)}, \bar{{\mathbf{c}}}^{(i)}) \big] \bigg] \\
& = \inf \bigg[ \frac{1}{N} \sum_{i = 1}^N \big[ \texttt{UPCE}^{(i)} + \lambda \cdot \texttt{KL} (\bar{{\bm{\alpha}}}^{(i)}, \bar{{\mathbf{c}}}^{(i)}) \big] \bigg] \\
& = \inf \bigg[ \frac{1}{N} \sum_{i = 1}^N \texttt{UPCE}^{(i)} + \lambda \cdot \Big[  \frac{1}{N} \sum_{i = 1}^N\texttt{KL} (\bar{{\bm{\alpha}}}^{(i)}, \bar{{\mathbf{c}}}^{(i)}) \Big] \bigg].
\end{aligned}
\end{equation}

According to the partial derivatives and the convexity of UPCE loss proved in section \ref{sec:proof_prop_1}, we already have two special limits for singleton ground truth as shown in Eq.(\ref{eq:singleton_gt_limit}). Correspondingly, to make UPCE loss approach its infimum value with composite ground-truth, there are also two special limits mentioned in Eq.(\ref{eq:composite_gt_limit}). Multiple choices to minimize the UPCE loss imply different combinations of $\bm{\alpha}$ and ${\bf c}$ can become the loss minimizer and \changbinRebuttal{output} by HENN. 

Consider the KL-divergence between predicted GDD and flat GDD
\begin{equation}
\begin{aligned}
\texttt{KL} (\bar{{\bm{\alpha}}}^{(i)}, \bar{{\mathbf{c}}}^{(i)}) & = \int_{\Delta_K} \mathtt{GDD}(\mathbf{p} | \bar{{\bm{\alpha}}}^{(i)}, \bar{{\mathbf{c}}}^{(i)})\log\frac{\mathtt{GDD}(\mathbf{p} | \bar{{\bm{\alpha}}}^{(i)}, \bar{{\mathbf{c}}}^{(i)})}{\mathtt{GDD}(\mathbf{p}| \bm{1}^K, \bm{0}^\eta)} d\mathbf{p}.
\end{aligned}
\end{equation}

It is straightforward to have its minimizer when the value of $\log\frac{\mathtt{GDD}(\mathbf{p} | \bar{{\bm{\alpha}}}^{(i)}, \bar{{\mathbf{c}}}^{(i)})}{\mathtt{GDD}(\mathbf{p}| \bm{1}^K, \bm{0}^\eta)}$ is $0$. This indicates that we aim to minimize the difference between $\bar{{\bm{\alpha}}}^{(i)} $ and $\bm{1}^K$, as well as between $ \bar{{\mathbf{c}}}^{(i)} $ and $\bm{0}^\eta$. Predicting flat GDD except for the evidence of the ground-truth to make the KL-divergence reach the minimum value of 0. Therefore, 

\begin{equation}
\small 
\begin{aligned}
& \arg\min \texttt{KL} (\bar{{\bm{\alpha}}}^{(i)}, \bar{\mathbf{c}}^{(i)}) \\
& = \big\{(\bm{\alpha}, \mathbf{c}): \alpha_k = 1, k \neq \text{IS}, k \in [K], \mathbf{c} = \bm{0}^\eta \big\} \cup \big\{(\bm{\alpha}, \mathbf{c}): \bm{\alpha} = \bm{1}^K, c_j = 0, j \neq \text{IC}, j \in [\eta] \big\}.
\end{aligned}
\end{equation}

\changbinRebuttal{Note that the feasible region of the output evidence for minimizers of UPCE loss and the KL-divergence overlaps}, which illustrates that both infimums can be approached simultaneously. Specifically, for singleton ground-truth, the intersection of feasible evidence between KL-divergence minimizer and UPCE loss minimizer is $\{(\bm{\alpha}, {\bf c}): \alpha_k = 1, \alpha_{\text{IS}} \rightarrow +\infty, k \neq \text{IS}, k \in [K], \mathbf{c} = \bm{0}^\eta\}$. In contrast, the intersection for composite ground-truth can be written as $\{(\bm{\alpha}, {\bf c}): \bm{\alpha} = \bm{1}^K, c_j = 0, c_{\text{IC}} \rightarrow +\infty, j \neq \text{IC}, j \in [\eta] \big\}$.

\changbinRebuttal{The overlap of feasible evidence towards the lower bound of the UPCE loss, along with its regularizer, also enables the application of the Uniform Approximation Property (UAP) to the regularizer, with $\inf \big[ \frac{1}{N} \sum_{i=1}^N \texttt{Reg}\big] = \frac{1}{N} \sum_{i=1}^N \big[ \inf \texttt{Reg} \big]$.} Based on the assumed UAP, there exists configuration $\bm{\theta}'$ such that HENN is an empirical UPCE loss minimizer for each training data point. Within the feasible evidence region for minimizing the UPCE loss with $\bm{\theta}'$, the learning objective is improved when considering the overlap in evidence outputs.
This suggests the presence of an optimal configuration $\bm{\theta}^{\star}$ within the feasible range of $\bm{\theta}'$ that can attain the minimal KL-divergence at every training data point without hurting the optimality for empirical UPCE risk. By focusing on learning $\bm{\theta}^{\star}$, we finally can get the optimal regularized HENN given UAP assumption holds.

Therefore, we can move the infimum operator into the empirical risk, 
\begin{equation}
\begin{aligned}
R(f^*) & \rightarrow \inf \Bigg[ \frac{1}{N} \sum_{i = 1}^N \texttt{UPCE}^{(i)} + \lambda \cdot \bigg[  \frac{1}{N} \sum_{i = 1}^N\texttt{KL} (\bar{{\bm{\alpha}}}^{(i)}, \bar{{\mathbf{c}}}^{(i)}) \bigg] \Bigg]\\
& = \frac{1}{N} \sum_{i = 1}^N \bigg[ \inf \texttt{UPCE}^{(i)} + \lambda \cdot \inf \texttt{KL} (\bar{{\bm{\alpha}}}^{(i)}, \bar{{\mathbf{c}}}^{(i)})  \bigg], \\
\end{aligned}
\end{equation}

As proved in section \ref{proof_th1}, based on the assumption of UAP, there exists a regularized loss minimizer for each data point, which also works as the empirical regularized minimizer for $\mathcal{D}=\{(\mathbf{x}^{(i)}, \tilde{\mathbf{y}}^{(i)})\}_{i=1}^N$. By replacing the empirical risk minimizer with the regularized loss minimizer. We focus on loss minimizer that produces:
\begin{equation}
\begin{aligned}
\texttt{UPCE} + \lambda \cdot \texttt{Reg} & \rightarrow \inf\Big[ \texttt{UPCE} + \lambda \cdot \texttt{Reg}  \Big] = \inf \texttt{UPCE} + \lambda \cdot \inf \texttt{Reg}
\end{aligned}
\end{equation}

Clearly, optimal regularized HENN $f(\mathbf{x}; \bm{\theta}) = (\tilde{\bm{\alpha}}, \tilde{\mathbf{c}})$ will take the intersection of the feasible space for approaching minimal UPCE loss and regularizer, that is, $(\tilde{\bm{\alpha}}, \tilde{\mathbf{c}}) = \big\{(\bm{\alpha}, \mathbf{c}): \alpha_k = 1, k \neq \text{IS}, k \in [K], \alpha_{\text{IS}} \rightarrow +\infty, \mathbf{c} = \bm{0}^\eta\big\} \cup  \big\{(\bm{\alpha}, \mathbf{c}): \bm{\alpha} = \bm{1}^K, c_j = 0, j \neq \text{IC}, j \in [\eta], c_{\text{IC}} \rightarrow +\infty \big\}$. Convert parameter space back to evidence space, then we can say for $\forall ({\bf x}, \mathbf{\tilde{y}}) \in \mathcal{D}$ where $\mathbf{\tilde{y}}$ is a singleton class label $k \in [K]$, the predicted evidence has the form of $e_k\rightarrow +\infty$, $e_{t} \rightarrow 0, \forall t\in \bm{\mathcal{S}} \cup [K] \setminus k$. For $\forall ({\bf x}, \mathbf{\tilde{y}}) \in \mathcal{D}$, where $\mathbf{\tilde{y}}$ denotes a \changbinRebuttal{composite class label} $\mathcal{S}_i$, the predicted evidence should be $e_{\mathcal{S}_i}\rightarrow +\infty$ and $e_{t} \rightarrow 0, \forall t\in \bm{\mathcal{S}} \cup [K] \setminus \mathcal{S}_i$.

\end{proof}

\begin{table*}[t!]
\small
    \centering
    \caption{Dataset Statistic.}\label{tab:data}
    \begin{tabular}{r|cccc}
      \toprule 
      \bfseries Dataset & \bfseries CIFAR100& \bfseries tinyImageNet& \bfseries Living17& \bfseries Nonliving26\\
      \midrule 
      Image Resolution    & 32$\times$32 &64$\times$64& 224$\times$224 & 224$\times$224 \\
      \# superclasses     & 20  & 29   & 17     & 26 \\
      \# subclasses       & 100 & 200  & 68     & 104\\
      Training set size   & 45k & 90k  & 79.56k & 119.5k \\
      Validation set size & 5k  & 10k  & 8.84k  & 13.3k \\
      Test set size       & 10k & 10k  & 3.4k   & 5.2k\\
      \# SELECTED composite classes & \{20,15,10\} & \{20,15,10\} & \{15,10\} & \{20,15,10\}  \\
      \bottomrule 
    \end{tabular}
\end{table*}

\begin{algorithm}[t]
\small
\caption{Pseudo-code of HENN (one epoch)}
\begin{algorithmic}[1]
\label{algo_HENN}

\REQUIRE Training dataset $\mathcal{D}=\{({\bf x}^{(i)}, \tilde{\bf y}^{(i)})\}|_{i=1}^N$; HENN model $f(\cdot, \bm{\theta})$; tradeoff coefficient $\lambda$; learning rate $\gamma$; the number of sampling data $N$; Batch size: $|B|$; 

\STATE Initialize model parameters ${\bm{\theta}}$.
\FOR{\textit{iter} = 1, 2, ... ,}
\STATE Sample a mini-batch $B$ from $\mathcal{D}$
\STATE Generate the evidence vector ${\bf e}^{(i)}|_{i=1}^{|B|}$ (${\bf e}\in \mathbb{R}^{|B|\times\kappa}$): ${\bf e}^{(i)} = f({\bf x}^{(i)}, \bm{\theta})$
\FOR{each $({\bf x}^{(i)}, \tilde{\bf y}^{(i)}) \in B$} 
\item {\color{azure(colorwheel)} \texttt{//based on Grouped Dirichlet Distribution (GDD)}}
\STATE Get the UPCE loss for this example $\texttt{UPCE}^{(i)}(\bm{\theta})$ via Eq.~\ref{eq:uace_GDD}
\STATE Get the entropy regularization for this example $\texttt{Reg}^{(i)}(\bm{\theta})$ via Eq.~\ref{eq:kl_reg_main_paper}
\STATE Get the loss for this example: $\mathcal{L}^{(i)}(\bm{\theta}) = \texttt{UPCE}^{(i)}(\theta) + \lambda \texttt{Reg}^{(i)}(\bm{\theta})$ \\
\ENDFOR
\STATE Get the loss $\mathcal{L}$ for all examples in this batch B: $ \mathcal{L} (\bm{\theta}) = \frac{1}{|B|} \sum_{i=1}^{|B|} \mathcal{L}^{(i)}(\bm{\theta})$.
\STATE Update model parameters $\boldsymbol{\theta}$ via gradient descent $\bm{\theta}' = \bm{\theta} - \gamma \nabla \mathcal{L}(\bm{\theta})$
\ENDFOR
\end{algorithmic}
\end{algorithm}

\section{Relations with aleatoric and epistemic uncertainties}
Epistemic and aleatoric uncertainties are two broad categories used to classify existing predictive uncertainty measures. Epistemic uncertainty is due to a lack of evidence or knowledge in the training data -- it is a \textit{known unknown}. It is \textit{reducible} by collecting more data.  
In comparison, aleatoric uncertainty is due to the inherent complexity of the data (e.g., wrong labels, incomplete or partial labels, and other data randomness) -- it is a \textit{unknown unknown}. It is \textit{irreducible} by collecting more data (e.g., the stochasticity of a dice roll cannot be reduced by observing more rolls), assuming the same measurement precision in the collected data~\citep{Gal2016Uncertainty}. The aforementioned evidential uncertainties, including vacuity, vagueness, and dissonance, and other uncertainty measures, such as model uncertainty (mutual information between model parameters and the predicted class probabilities),  data uncertainty (entropy of the predicted class probabilities), and confidence (the largest predicted class probability) can be classified to epistemic and aleatoric uncertainties based on whether they can be reduced by collecting more data. In particular, the vacuity and model uncertainty fall into the category of epistemic uncertainty, and the dissonance and vagueness belong to the category of aleatoric uncertainty. The dissonance is irreducible by collecting more conflicting evidence. The vagueness is irreducible when we use the same measurement precision (sensor and annotator) to collect extra training data due to the invariant underlying distribution for getting composite labels. A recent work~\citep{shi2020multifaceted} demonstrates that the entropy of the predicted class probabilities can be decomposed into two distinct sources of uncertainty: vacuity and dissonance. 
As confidence is correlated with this entropy, both data uncertainty and  confidence may involve a mixture of epistemic and aleatoric uncertainties. 

\subsection{Example about evidence}
\fengRebuttal{In medical diagnostics, the presence of 24 pieces of composite evidence could suggest that there are approximately 24 similar cases resulting in diseases ${2, 3}$ based on the current observation. This implies that the cases are identified as having either disease 2 or 3, but without specific information to distinguish between them. Conversely, 3 instances of class 1 evidence indicate that 3 similar cases have been identified as disease 1. In such scenarios, doctors might not have a clear preference between diseases 2 and 3, while maintaining a conflicting opinion between disease 1 and $\{2, 3\}$ for this observation. }

\subsection{Dissonance in Hyper-opinion}
Given a hyper-opinion with non-zero belief masses, the dissonance measure can be estimated as: 
\begin{equation} 
\label{eqn:dissonance}
diss(\omega) = \sum_{\mathcal{S}\in \mathscr{R}(\mathbb{Y})}\left(\frac{b_{\mathcal{S}} \sum_{\mathcal{S}^\prime\in \mathscr{R}(\mathbb{Y}), \mathcal{S}^\prime \neq \mathcal{S}} \operatorname{d}(\mathcal{S}\triangle \mathcal{S}') b_{\mathcal{S}^\prime}  \operatorname{Bal}\left(b_{\mathcal{S}^\prime}, b_{\mathcal{S}}\right)}{\sum_{\mathcal{S}^\prime\in \mathscr{R}(\mathbb{Y}), \mathcal{S}^\prime \neq \mathcal{S}} \operatorname{d}(\mathcal{S}\triangle \mathcal{S}') b_{\mathcal{S}^\prime}}\right)
\end{equation}
where $\operatorname{Bal}(\mathcal{S}^\prime, \mathcal{S})= 1 - |b_{\mathcal{S}^\prime} - b_{\mathcal{S}}|/(b_{\mathcal{S}^\prime} + b_{\mathcal{S}})$, and $\operatorname{d}(\mathcal{S}\triangle \mathcal{S}') $ is the size of the symmetric difference between $\mathcal{S}$ and $\mathcal{S}'$~\citep{josang2018uncertainty}.

\section{Reproducibility}
\label{app:reproduce}

\subsection{Dataset}
Table~\ref{tab:data} shows detailed statistics for four datasets we used. In particular, tinyImageNet has 29 superclasses because we keep all superclasses which have 2-3 subclasses only.

We use
\textbf{CIFAR100}~\citep{krizhevsky2009learning}, \textbf{tinyImageNet}~\citep{feitiny}, \textbf{Living17}~\citep{santurkar2021breeds}, and \textbf{Nonliving26}~\citep{santurkar2021breeds} in our experiments.
CIFAR100 has 100 classes containing 600 images each (500 for training and 100 for testing, and the image size is 32$\times$32). 
\changbinRebuttal{The 100 classes in this dataset are divided into 20 disjoint superclasses, each with 5 unique subclasses. }Note that we compose \changbinRebuttal{composite class labels} within the same superclass.
Dataset tinyImageNet has 200 classes containing 550 images each (500 for training and 50 for testing, and the image size is 64$\times$64). 
\changbinRebuttal{We generate the hierarchy information of tinyImageNet and generate superclasses according to the existing ImageNet class hierarchy -  WordNet~\citep{miller1995wordnet}. 
In addition, it usually can be challenging to distinguish between different classes due to their similar visual features. While WordNet is a hierarchy based on semantic relationships between words, rather than visual similarities. Therefore,  
Living17 and Nonliving26 are considered because their class hierarchy is generated based on visual and semantic similarities. Both of them are subsets of ImageNet dataset~\citep{5206848} with an image size 224$\times$224. Refer to Table 5 and 6 in their paper for more information.} 

We split the original training set into a training and a validation set according to the ratio 9:1. Therefore, the number of images per class will be: 450/50/50 for training/validation/test set for CIFAR100, similarly for other datasets.

\subsection{Dataset Preprocessing}
For each dataset, the first step is to select vague images. To achieve that, first, we select $M$ superclasses randomly from all superclass candidates as SELECTED composite classes in our experiments. \changbinRebuttal{For each SELECTED composite class, {2 or more} subclasses belonging to this superclass will be selected randomly as components of the composite class label. Given the designed composite class labels, we can further select a fraction of images under each of the singleton classes included in the domain of composite classes $\mathscr{C}(\mathbb{Y})$. The selected examples are therefore expected to be converted to composite examples by applying Gaussian-blurring and label replacement to introduce vagueness. When selecting images to blur, for each selected singleton class, we balanced the number of singleton images remaining and the number of composite examples converted.} Please check our code for implementation~\footnote{Our code: \url{https://github.com/Hugo101/HyperEvidentialNN}}.

\changbinRebuttal{The selected vague examples will be blurred by Gaussian Blurring operation.} To apply the Gaussian blur operation, there are two parameters to set: \texttt{kernel\_size} and variance \texttt{sigma}. We use three different \texttt{kernel\_size}s {($3 \times 3, 5 \times 5, 7 \times 7$)}, and \texttt{sigma} is determined by the default relation between them in PyTorch: $\texttt{sigma} = 0.3*((\texttt{kernel\_size} - 1)*0.5-1)+0.8$~\footnote{\url{https://pytorch.org/vision/main/generated/torchvision.transforms.functional.gaussian_blur.html}}.

\changbinRebuttal{We used 2 methods for data augmentation following a typical computer vision setting. First, each image is applied to a random horizontal flip with the flipping probability of $0.5$. After that, a random corp is introduced for each image with a size of $32 \times 32$ and padding of 4. 
Then, resize images to 224$\times$224 because the pretrained model is trained by ImageNet~\citep{5206848} which is 224$\times$224, we need to match the input size for model predictions. 
}We apply regular data augmentation approaches and normalization to the data. Data augmentation approaches are only applied to the training set. For validation and test sets, we only use resize and normalization.

\subsection{Implementation}
\label{app:implementation}
\changbinRebuttal{\textbf{Baselines.} DNN and ENN cannot predict set directly. In practice, it is necessary to set a threshold to make set prediction for DNN and ENN. The prediction set should consist of all classes with softmax probabilities larger than or equal to the pre-defined threshold. 

In addition, DNN and ENN are only able to deal with singleton class labeled examples and cannot deal with composite class label during training. Note that there are vague images with composite class labels during training. To make baselines can handle these examples, 
and to avoid removing training examples, we duplicate composite examples and provide them singleton class labels which are from the subclasses of composite class labels. This ensures that all classes remain exclusive. For example, assuming there is an image $x$ with the composite class label {A,B} during training, we duplicate $x$ and take image $x$ with the singleton class label A and the same image with the singleton class label B as input for model training.}

\textbf{HENN:} Pseudo-code of HENN is shown in Algorithm~\ref{algo_HENN}.

\subsection{Hyperparameters tuning}
\label{app:hyper}
We list all related methods and their corresponding hyperparameter settings below. For our method and all other baselines, we adopt Adam~\citep{kingma2014adam} as optimizer with parameters $\beta_1=0.9$, $\beta_2=0.999$, weight decay is 0, $\epsilon=1e-8$ provided in~\citep{kingma2014adam}. The number of epochs for all experiments is set to 100. Other hyperparameters used in this paper mainly are learning rate and weight of entropy regularizer. Grid search is leveraged to determine the best hyperparameters based on a held-out validation set for each specific experiment. Specifically,
(1) \textbf{DNN}. the learning rate is chosen from \{1e-5, 1e-4, 1e-3\}; the cutoff is chosen from \{0, 0.05, 0.1, 0.15, 0.2, 0.25, 0.3, 0.35, 0.4, 0.45, 0.5\}.
(2) \textbf{ENN}. the learning rate is chosen from \{1e-5, 1e-4, 1e-3\}; the weight of entropy regularizer $\lambda$ is chosen from \{1, 1e-1, 1e-2, 1e-3, 1e-4, 1e-5\}. the cutoff is chosen from \texttt{[i for i in range(0, 0.02, 0.001)]}.
(3) \textbf{E-CNN}. the learning rate is chosen from \{1e-5, 1e-4, 1e-3\}; the optimizer is Nadam.
(4) \textbf{RAPS}. $k_{reg}$ is chosen from \{1, 2, 5, 10, 50\}; $\lambda$ is chosen from \{0, 1e-4, 1e-3, 0.01, 0.02, 0.05, 0.2, 0.5,  0.7, 1\}; $\alpha$ is chosen from \{0.1, 0.2, 0.3, 0.4\}.
(5) \textbf{PiCO}. learning rate is chosen from \{1e-5, 1e-4, 1e-3\}.
(6) \textbf{HENN}. the learning rate is chosen from \{1e-5, 1e-4, 1e-3\} and the weight of regularizer $\lambda$ is chosen from \{1, 1e-1, 1e-2, 1e-3, 1e-4, 1e-5\}.

A fixed number of epochs is given, and the highest validation accuracy is used to determine the best epoch.

For HENN, validation accuracy means the classification accuracy including the additional composite class labels on hyperdomain, such as 215-class classification in tinyImageNet dataset. Even though we report multiple metrics, such as OverJS, CompJS, and Acc, we use validation accuracy to select the model. 
We use the Best validation accuracy to evaluate and determine which combination of hyperparameters to use.

For DNN, there are two sets of hyperparameters. The first set includes the hyperparameter of general DNN: learning rate (we only tune this hyperparmeter for now). The second set includes the hyperparameter used to generate set prediction: the cutoff on class probability. For example, if there are only three classes and the prediction of the DNN for one test image is: \{0.6, 0.3, 0.1\}. If the cutoff is 0.3, then the set is: \{class 1, class 2\}. 

For the first set, use the accuracy on the validation set to tune. Note that it is always 100-class classification after using duplicates for vague examples. Each duplicated image has its own class label. For example, for one training/validation image that has a set label: {class 1, class 3}. We will create two duplicates of this image labeled class 1 and class 3, respectively. 
For the second set, use the overJS on the validation set to tune. Here, we will replace duplicates with the images with vague labels in the validation set, in order to calculate the overJS.

\section{Additional Experimental Results}
\label{app:additional_exp}

\subsection{Additional Results}
Table~\ref{tab:cifar_tiny_ker3_app}, \ref{tab:cifar_tiny_ker5}, \ref{tab:cifar_tiny_ker7} show composite and singleton prediction results for different Gaussian kernel size 3$\times$3, 5$\times$5, 7$\times$7 for CIFAR100 and tinyImageNet dataset, and Table~\ref{tab:living17}, \ref{tab:nonliving26}, \ref{tab:breeds_living_noliving_ker7} show composite and singleton prediction results for living17 and nonliving26 dataset,  which represents consistent observation as in main paper.

\begin{table*}[!ht]
\scriptsize
    \centering
    \caption{Results (\%) based on Gaussian kernel size: 3$\times$3 on CIFAR100 and tinyImageNet. {\small (The average and 95\% confidence interval of three runs are provided.)}}
    \label{tab:cifar_tiny_ker3_app}
    \begin{tabular}{r|l|ccc|ccc}
      \toprule 
      & & \multicolumn{3}{c|}{\textbf{CIFAR100}}& \multicolumn{3}{c}{\textbf{tinyImageNet}} \\
      $M$ &  Methods & OverJS & CompJS  & Acc& OverJS & CompJS  & Acc \\
      \midrule 
         &  DNN~\citep{tan2019efficientnet}        & \textbf{86.8}{\tiny $\pm$0.36} &	68.6{\tiny $\pm$1.42}  &  84.3{\tiny $\pm$0.51}  & 83.4{\tiny $\pm$0.38} &	66.9{\tiny $\pm$0.93} &		79.8{\tiny $\pm$0.32} \\
         &  ENN~\citep{sensoy2018evidential}       & 84.4{\tiny $\pm$0.28} &	42.3{\tiny $\pm$1.23}  &  84.8{\tiny $\pm$0.22}  & 75.9{\tiny $\pm$0.31} &	63.5{\tiny $\pm$1.26} &		80.7{\tiny $\pm$0.27} \\
     10  &  E-CNN~\citep{tong2021evidential}      &  38.5{\tiny $\pm$0.74} &	34.2{\tiny $\pm$2.63}  &  73.2{\tiny $\pm$0.92}  & 33.4{\tiny $\pm$0.83} & 31.1{\tiny $\pm$2.38}  &  68.2{\tiny $\pm$0.92} \\
         &  RAPS~\citep{angelopoulos2020sets}      & 81.5{\tiny $\pm$0.33} &	51.1{\tiny $\pm$1.41}  &	 84.3{\tiny $\pm$0.51}    & 73.1{\tiny $\pm$0.37} & 43.6{\tiny $\pm$0.96}  &  79.8{\tiny $\pm$0.32} \\
         & {PiCO}~\citep{wang2022pico}      & 59.6{\tiny $\pm$0.38} & 28.3{\tiny $\pm$4.41} &  63.6{\tiny $\pm$0.48} & 57.2{\tiny $\pm$0.39} & 35.6{\tiny $\pm$3.53} &  64.3{\tiny $\pm$0.63} \\
         \rowcolor{lgray} \cellcolor{white} &  HENN (ours) & \underline{86.5}{\tiny $\pm$0.47} &	\textbf{90.4}{\tiny $\pm$3.63} &	\textbf{86.5}{\tiny $\pm$0.53} & \textbf{84.4}{\tiny $\pm$0.44}	& \textbf{93.4}{\tiny $\pm$2.57} &	\textbf{82.5}{\tiny $\pm$0.72}  \\
          \midrule
          
         & DNN~\citep{tan2019efficientnet}        & 86.6{\tiny $\pm$0.35} & 71.6{\tiny $\pm$1.43} & 	82.2{\tiny $\pm$0.39}   & 84.3{\tiny $\pm$0.43} & 67.3{\tiny $\pm$1.43} &   79.5{\tiny $\pm$0.35}  \\
         & ENN~\citep{sensoy2018evidential}       & 84.2{\tiny $\pm$0.27} & 47.8{\tiny $\pm$1.25} & 	83.8{\tiny $\pm$0.37}   & 83.5{\tiny $\pm$0.20} & 60.7{\tiny $\pm$1.14} & 	81.2{\tiny $\pm$0.26}  \\
    15   & E-CNN~\citep{tong2021evidential}     & 33.2{\tiny $\pm$0.74} & 31.3{\tiny $\pm$3.43}  &   68.6{\tiny $\pm$0.93}   & 32.5{\tiny $\pm$0.83} & 33.3{\tiny $\pm$3.52} &   68.4{\tiny $\pm$0.95} \\
         & RAPS~\citep{angelopoulos2020sets}      & 81.5{\tiny $\pm$0.36} & 54.1{\tiny $\pm$1.44} &   82.2{\tiny $\pm$0.39}  & 68.1{\tiny $\pm$0.44} & 45.6{\tiny $\pm$1.52} &  79.5{\tiny $\pm$0.35}  \\
         & {PiCO}~\citep{wang2022pico}      & 58.4{\tiny $\pm$0.74} & 25.5{\tiny $\pm$4.32} &  61.3{\tiny $\pm$0.50} & 56.8{\tiny $\pm$0.38} & 35.3{\tiny $\pm$3.53} &  64.6{\tiny $\pm$0.64} \\
        \rowcolor{lgray} \cellcolor{white}  & HENN (ours) & \textbf{86.8}{\tiny $\pm$0.28} &	\textbf{90.1}{\tiny $\pm$4.36} &	\textbf{85.8}{\tiny $\pm$0.19}  & \textbf{84.6}{\tiny $\pm$0.45}	& \textbf{90.6}{\tiny $\pm$2.61} &	\textbf{81.6}{\tiny $\pm$0.71}   \\
          \midrule
          
         & DNN~\citep{tan2019efficientnet}        & 86.8{\tiny $\pm$0.35} & 75.4{\tiny $\pm$1.65} &	80.3{\tiny $\pm$0.35}  & 84.0{\tiny $\pm$0.33} &	57.9{\tiny $\pm$1.06} 	& 81.5{\tiny $\pm$0.36}  \\
         & ENN~\citep{sensoy2018evidential}       & 83.3{\tiny $\pm$0.23} & 53.7{\tiny $\pm$1.14} &	81.9{\tiny $\pm$0.19}  & 57.4{\tiny $\pm$0.29} &	41.9{\tiny $\pm$1.11}    & 58.9{\tiny $\pm$0.36} \\
          & E-CNN~\citep{tong2021evidential}    & 28.6{\tiny $\pm$0.78} &	23.7{\tiny $\pm$3.25} &	73.6{\tiny $\pm$0.87} & 23.3{\tiny $\pm$0.86} &   22.4{\tiny $\pm$2.51}    & 67.8{\tiny $\pm$0.97}\\
      20 & RAPS~\citep{angelopoulos2020sets}      & 80.5{\tiny $\pm$0.35} & 56.7{\tiny $\pm$1.54} &  	80.3{\tiny $\pm$0.35}  & 76.1{\tiny $\pm$0.42} &   41.1{\tiny $\pm$1.47} &  81.5{\tiny $\pm$0.36} \\
         & PiCO~\citep{wang2022pico}  & 57.5{\tiny $\pm$0.71} &	29.1{\tiny $\pm$4.45} &	61.9{\tiny $\pm$0.56}  & 57.5{\tiny $\pm$0.41} &	39.6{\tiny $\pm$3.66}	& 	65.3{\tiny $\pm$0.71}  \\
         \rowcolor{lgray} \cellcolor{white} & HENN (ours) & \textbf{86.7}{\tiny $\pm$0.17} &	\textbf{90.2}{\tiny $\pm$1.36} &	\textbf{86.3}{\tiny $\pm$0.34}  & \textbf{84.9}{\tiny $\pm$0.40} &	\textbf{90.7}{\tiny $\pm$2.87}	&  \textbf{81.7}{\tiny $\pm$0.69}   \\
      \bottomrule 
    \end{tabular}
\end{table*}

\subsection{Model-Agnostic Property}
\label{app:model_agnostic}
Table~\ref{tab:otherBackbone_app} shows model agnostic performance (\%) on $M$=10, and kernel size: 5$\times$5 on CIFAR100, including confidence interval for three different runs. 
It demonstrates different methods' performance based on ResNet50 and VGG16 on CIFAR100. HENN outperforms other approaches, for example, the Acc of HENN surpasses that of DNN by 2\% for CIFAR100. The consistent observation is demonstrated based on different backbones, which validates the model agnostic property of our proposed approach. 
\begin{table*}[!ht]
\scriptsize
\vspace{-1mm}
    \centering
    \caption{Model agnoistic performance (\%) on $M$=10, and kernel size: 5$\times$5 on CIFAR100. {(\small The average and 95\% confidence interval of three runs are provided.)}}
    \label{tab:otherBackbone_app}
    \begin{tabular}{l|ccc|ccc}
      \toprule 
      & \multicolumn{3}{c|}{\textbf{ResNet50}~\citep{He2015DeepRL}}& \multicolumn{3}{c}{\textbf{VGG16}~\citep{simonyan2014very}} \\
    Methods & OverallJS & CompJS & Acc & OverallJS & CompJS & Acc \\
      \midrule 
        DNN   & 82.0{\tiny $\pm$0.26} &	56.7{\tiny $\pm$1.29} &		80.6{\tiny $\pm$0.21} &  77.6{\tiny $\pm$0.32} &	53.3{\tiny $\pm$1.35} &	75.2{\tiny $\pm$0.38}\\
        ENN~\citep{sensoy2018evidential}   & 80.1{\tiny $\pm$0.28} &	46.7{\tiny $\pm$1.32} &	 	80.9{\tiny $\pm$0.25} &  74.6{\tiny $\pm$0.33} &	42.7{\tiny $\pm$1.41}  &	76.2{\tiny $\pm$0.43}\\
        RAPS~\citep{angelopoulos2020sets}  & 71.8{\tiny $\pm$0.26} &   40.1{\tiny $\pm$1.31} &	  	80.6{\tiny $\pm$0.21}  & 66.4{\tiny $\pm$0.34} & 35.5{\tiny $\pm$1.38}   & 75.2{\tiny $\pm$0.38} \\
        \rowcolor{lgray} HENN (ours)  & \textbf{82.9}{\tiny $\pm$0.34}	&   \textbf{85.7}{\tiny $\pm$2.41} &    \textbf{81.1}{\tiny $\pm$0.32} &  \textbf{78.4}{\tiny $\pm$0.37} &	\textbf{78.5}{\tiny $\pm$2.83} &	\textbf{77.7}{\tiny $\pm$0.47} \\
      \bottomrule 
    \end{tabular}
\end{table*}

\begin{table*}[!ht]
\scriptsize
    \centering
    \caption{Results (\%) of BREEDS-living17 based on two Gaussian kernel sizes. {(\small The average and 95\% confidence interval of three runs are provided.)}}
    \label{tab:living17}
    \begin{tabular}{r|l|ccc|ccc}
      \toprule 
      & & \multicolumn{3}{c|}{Gaussian kernel size: 3$\times$3}& \multicolumn{3}{c}{Gaussian kernel size: 5$\times$5} \\
      $M$ &  Methods & OverJS & CompJS & Acc& OverJS & CompJS & Acc \\
      \midrule 
         &  DNN~\citep{tan2019efficientnet}        & 88.1{\tiny $\pm$0.28} &	81.0{\tiny $\pm$1.74}  &	83.3{\tiny $\pm$0.29}  & 88.4{\tiny $\pm$0.33} &	80.4{\tiny $\pm$0.78}  &	83.2{\tiny $\pm$0.43}   \\
         &  ENN~\citep{sensoy2018evidential}         & 88.0{\tiny $\pm$0.19} &	72.3{\tiny $\pm$0.41}  &	84.5{\tiny $\pm$0.12}  & 88.0{\tiny $\pm$0.16} &	70.9{\tiny $\pm$1.07}  &	84.6{\tiny $\pm$0.01}   \\
      10 &  E-CNN~\citep{tong2021evidential}     & 30.5{\tiny $\pm$0.67} & 36.8{\tiny $\pm$1.34}  &	65.7{\tiny $\pm$0.86} &  30.4{\tiny $\pm$1.34} &	35.8{\tiny $\pm$0.88}  &	65.7{\tiny $\pm$0.42} \\
         &  RAPS~\citep{angelopoulos2020sets}      & 86.4{\tiny $\pm$0.27} &  61.3{\tiny $\pm$1.56}  & 83.3{\tiny $\pm$0.29}  & 85.8{\tiny $\pm$0.33} &	   60.7{\tiny $\pm$0.89} &     83.2{\tiny $\pm$0.43}   \\
         \rowcolor{lgray} \cellcolor{white}  &  HENN (ours)      & \textbf{88.8}{\tiny $\pm$0.39} &	\textbf{96.5}{\tiny $\pm$0.72}  &	\textbf{85.6}{\tiny $\pm$1.24} &  \textbf{88.7}{\tiny $\pm$0.35} &    \textbf{96.9}{\tiny $\pm$0.81} &	\textbf{85.9}{\tiny $\pm$0.33}    \\
          \midrule
         &  DNN~\citep{tan2019efficientnet}         & 88.1{\tiny $\pm$0.39} &	84.8{\tiny $\pm$1.62}  &	80.2{\tiny $\pm$0.34} &  88.4{\tiny $\pm$0.23} &	84.5{\tiny $\pm$1.08} &	80.6{\tiny $\pm$0.48}  \\
         &  ENN~\citep{sensoy2018evidential}          & 88.0{\tiny $\pm$0.03} &	78.3{\tiny $\pm$0.65}  &	82.4{\tiny $\pm$0.36} &  87.8{\tiny $\pm$0.23} &	75.4{\tiny $\pm$2.38} &	84.7{\tiny $\pm$1.60}  \\
      15 &  E-CNN~\citep{tong2021evidential}      & 31.6{\tiny $\pm$1.45} &  37.3{\tiny $\pm$1.58}  & 65.5{\tiny $\pm$0.82} &  33.3{\tiny $\pm$1.21} &	   35.1{\tiny $\pm$0.91}  &  64.8{\tiny $\pm$1.12} \\
         &  RAPS~\citep{angelopoulos2020sets}       & 85.5{\tiny $\pm$0.35} &  66.5{\tiny $\pm$0.72}  & 80.2{\tiny $\pm$0.34} &  85.9{\tiny $\pm$0.42} &    67.6{\tiny $\pm$0.62} &  80.6{\tiny $\pm$0.48}  \\
        \rowcolor{lgray} \cellcolor{white} &  HENN (ours)      & \textbf{88.8}{\tiny $\pm$0.17} &	\textbf{96.6}{\tiny $\pm$0.65} &	\textbf{85.7}{\tiny $\pm$1.27} &  \textbf{88.9}{\tiny $\pm$0.14} &    \textbf{97.5}{\tiny $\pm$0.49} &	\textbf{85.4}{\tiny $\pm$1.78}    \\
      \bottomrule 
    \end{tabular}
\end{table*}

\begin{table*}[!ht]
\scriptsize
    \centering
    \caption{Results (\%) of BREEDS-nonliving26 based on two different Gaussian kernel sizes. {(\small The average and 95\% confidence interval of three runs are provided.)}}
    \label{tab:nonliving26}
    \begin{tabular}{r|l|ccc|ccc}
      \toprule 
      & & \multicolumn{3}{c|}{Gaussian kernel size: 3$\times$3}& \multicolumn{3}{c}{Gaussian kernel size: 5$\times$5} \\
      $M$ &  Methods & OverJS & CompJS &  Acc& OverJS & CompJS &  Acc \\
      \midrule 
         &  DNN~\citep{tan2019efficientnet}        & 85.6{\tiny $\pm$0.32} &	62.0{\tiny $\pm$0.35}  &	82.9{\tiny $\pm$0.33} & 86.0{\tiny $\pm$0.26} &	64.0{\tiny $\pm$1.60} &	83.0{\tiny $\pm$0.12}   \\
         &  ENN~\citep{sensoy2018evidential}         & 85.0{\tiny $\pm$0.49} &	52.9{\tiny $\pm$2.74}  &	84.5{\tiny $\pm$0.43} & 85.0{\tiny $\pm$0.48} &	52.8{\tiny $\pm$3.79} &	84.2{\tiny $\pm$0.76}   \\
      10 &  E-CNN~\citep{tong2021evidential}     & 28.3{\tiny $\pm$0.68} &        35.8{\tiny $\pm$4.23}  & 60.6{\tiny $\pm$0.97} & 29.6{\tiny $\pm$0.74} & 37.1{\tiny $\pm$3.93} & 60.8{\tiny $\pm$0.76}   \\
         &  RAPS~\citep{angelopoulos2020sets}      & 82.7{\tiny $\pm$0.36} &  46.3{\tiny $\pm$1.01} &  82.9{\tiny $\pm$0.33} & 83.4{\tiny $\pm$0.42} & 49.5{\tiny $\pm$1.43}  & 83.0{\tiny $\pm$0.12}   \\
        \rowcolor{lgray} \cellcolor{white}  &  HENN (ours)      & \textbf{86.9}{\tiny $\pm$0.13} &	\textbf{96.8}{\tiny $\pm$0.57} &	\textbf{85.4}{\tiny $\pm$0.35} & \textbf{87.0}{\tiny $\pm$0.12} &	\textbf{96.2}{\tiny $\pm$1.70} &	\textbf{85.3}{\tiny $\pm$0.39}  \\
          \midrule
          
         &  DNN~\citep{tan2019efficientnet}        & 85.6{\tiny $\pm$0.39} &	68.9{\tiny $\pm$0.30} &	81.5{\tiny $\pm$0.16} & 85.5{\tiny $\pm$0.50} &	67.3{\tiny $\pm$3.41} &	81.4{\tiny $\pm$0.55}   \\
         &  ENN~\citep{sensoy2018evidential}         & 85.4{\tiny $\pm$0.26} &	62.6{\tiny $\pm$1.59} &	82.9{\tiny $\pm$0.21} & 85.3{\tiny $\pm$0.08} &	61.9{\tiny $\pm$1.31} &	83.2{\tiny $\pm$0.35}   \\
      15 &  E-CNN~\citep{tong2021evidential}     & 29.8{\tiny $\pm$1.22} &   35.1{\tiny $\pm$4.41} &  60.1{\tiny $\pm$0.87} & 28.9{\tiny $\pm$0.73} & 35.1{\tiny $\pm$4.67} &  60.3{\tiny $\pm$0.84}   \\
         &  RAPS~\citep{angelopoulos2020sets}      & 83.8{\tiny $\pm$0.42} &   56.1{\tiny $\pm$0.28} &  81.5{\tiny $\pm$0.16} & 83.7{\tiny $\pm$0.43} & 55.9{\tiny $\pm$0.59} &  81.4{\tiny $\pm$0.55}   \\
        \rowcolor{lgray} \cellcolor{white}  &  HENN (ours)      & \textbf{86.9}{\tiny $\pm$0.03} &	\textbf{96.2}{\tiny $\pm$1.14} &	\textbf{84.1}{\tiny $\pm$0.30} & \textbf{86.9}{\tiny $\pm$0.21} &	\textbf{95.6}{\tiny $\pm$1.09} &	\textbf{84.8}{\tiny $\pm$0.40}  \\
          \midrule
         &  DNN~\citep{tan2019efficientnet}        & 86.7{\tiny $\pm$0.34} &	74.5{\tiny $\pm$0.32} &		80.3{\tiny $\pm$0.15} & 86.5{\tiny $\pm$0.42} &	76.2{\tiny $\pm$0.41} &	79.8{\tiny $\pm$0.23}  \\
         &  ENN~\citep{sensoy2018evidential}         & 85.9{\tiny $\pm$0.43} &	68.3{\tiny $\pm$2.13} &		81.7{\tiny $\pm$0.35} & 86.0{\tiny $\pm$0.49} &	67.9{\tiny $\pm$2.44} &	82.2{\tiny $\pm$0.73}  \\
    20   &  E-CNN~\citep{tong2021evidential}     & 29.8{\tiny $\pm$0.92} &   35.1{\tiny $\pm$2.43} &  60.5{\tiny $\pm$0.81} & 28.6{\tiny $\pm$0.75} & 36.8{\tiny $\pm$3.46}  & 60.9{\tiny $\pm$0.65}  \\
         &  RAPS~\citep{angelopoulos2020sets}      & 82.4{\tiny $\pm$0.41} &   57.7{\tiny $\pm$0.32} &  80.3{\tiny $\pm$0.15} & 84.1{\tiny $\pm$0.23} & 57.8{\tiny $\pm$0.45} &  79.8{\tiny $\pm$0.23}  \\
        \rowcolor{lgray} \cellcolor{white} &  HENN (ours)     & \textbf{87.4}{\tiny $\pm$0.22} &	\textbf{94.5}{\tiny $\pm$0.46} &	\textbf{85.5}{\tiny $\pm$0.41} & \textbf{87.5}{\tiny $\pm$0.17} &	\textbf{94.5}{\tiny $\pm$1.00} &	\textbf{85.3}{\tiny $\pm$0.45}   \\
      \bottomrule 
    \end{tabular}
\end{table*}

\begin{table*}[!ht]
\scriptsize
    \centering
    \caption{Results (\%) of Gaussian kernel size: 7$\times$7 on Living17 and Nonliving26. {(\small The average and 95\% confidence interval of three runs are provided.)}}\label{tab:breeds_living_noliving_ker7}
    \begin{tabular}{r|l|ccc|ccc}
      \toprule 
      & & \multicolumn{3}{c|}{\textbf{Living17}}& \multicolumn{3}{c}{\textbf{Nonliving26}} \\
      $M$ & Methods & OverJS & CompJS  & Acc & OverJS & CompJS  & Acc \\
      \midrule 
  
         &  DNN~\citep{tan2019efficientnet}          & 88.4{\tiny $\pm$0.24} &  79.0{\tiny $\pm$1.05} &	83.3{\tiny $\pm$0.49} & 85.8{\tiny $\pm$0.34} &	63.8{\tiny $\pm$1.48} &	82.9{\tiny $\pm$0.19} \\
         &  ENN~\citep{sensoy2018evidential}         & 87.9{\tiny $\pm$0.23} &  71.0{\tiny $\pm$0.99} &	84.4{\tiny $\pm$0.23} & 85.3{\tiny $\pm$0.29} &	54.6{\tiny $\pm$1.42} &	84.1{\tiny $\pm$0.33} \\
      10 &  E-CNN~\citep{tong2021evidential}         & 30.4{\tiny $\pm$0.98} &  36.7{\tiny $\pm$1.65} &  65.5{\tiny $\pm$1.87} & 28.2{\tiny $\pm$0.74} & 35.5{\tiny $\pm$1.43} &  59.4{\tiny $\pm$2.91} \\
         &  RAPS~\citep{angelopoulos2020sets}        & 85.9{\tiny $\pm$0.32} &  60.8{\tiny $\pm$1.72} &  83.3{\tiny $\pm$0.49} & 83.5{\tiny $\pm$0.37} & 53.6{\tiny $\pm$1.73} &  82.9{\tiny $\pm$0.19} \\
        
          \rowcolor{lgray} \cellcolor{white}  &  HENN (ours)      & \textbf{88.7}{\tiny $\pm$0.36} &  \textbf{96.0}{\tiny $\pm$0.82} &	\textbf{85.3}{\tiny $\pm$0.28} & \textbf{86.8}{\tiny $\pm$0.07} &	\textbf{95.9}{\tiny $\pm$3.22} &	\textbf{85.0}{\tiny $\pm$0.49} \\
         \midrule
 
         &  DNN~\citep{tan2019efficientnet}          & 88.4{\tiny $\pm$0.35} &  83.2{\tiny $\pm$2.31} &	80.2{\tiny $\pm$0.79} & 85.8{\tiny $\pm$0.11} &	70.6{\tiny $\pm$1.20} &	81.2{\tiny $\pm$0.63} \\
         &  ENN~\citep{sensoy2018evidential}         & 88.1{\tiny $\pm$0.22} &  78.3{\tiny $\pm$0.20} &	82.7{\tiny $\pm$1.00} & 85.4{\tiny $\pm$0.14} &	62.1{\tiny $\pm$0.76} &	83.0{\tiny $\pm$0.57} \\
     15  &  E-CNN~\citep{tong2021evidential}         & 30.5{\tiny $\pm$0.47} &  36.6{\tiny $\pm$1.84} &	65.6{\tiny $\pm$2.99} & 28.1{\tiny $\pm$0.59} & 35.6{\tiny $\pm$1.97} &  60.1{\tiny $\pm$2.86} \\
         &  RAPS~\citep{angelopoulos2020sets}        & 85.7{\tiny $\pm$0.38} &  66.9{\tiny $\pm$1.33} &  80.2{\tiny $\pm$0.79} & 83.7{\tiny $\pm$0.46} & 56.0{\tiny $\pm$0.84} &  81.2{\tiny $\pm$0.63} \\
             \rowcolor{lgray} \cellcolor{white} &  HENN (ours)      & \textbf{88.7}{\tiny $\pm$0.29} &  \textbf{97.1}{\tiny $\pm$0.33}  &	\textbf{84.4}{\tiny $\pm$1.71} & \textbf{86.9}{\tiny $\pm$0.21} &	\textbf{94.8}{\tiny $\pm$1.42} &  \textbf{85.2}{\tiny $\pm$0.55} \\
      \bottomrule 
    \end{tabular}
\end{table*}

\begin{table*}[!ht]
\scriptsize
    \centering
    \caption{Results (\%) of Gaussian kernel size: 5$\times$5 on CIFAR100 and tinyImageNet (based on one run).}
    \label{tab:cifar_tiny_ker5}
    \begin{tabular}{r|l|ccc|ccc}
      \toprule 
      & & \multicolumn{3}{c|}{\textbf{CIFAR100}}& \multicolumn{3}{c}{ \textbf{tinyImageNet}} \\
      $M$ &  Methods & OverJS & CompJS  & Acc& OverJS & CompJS  & Acc \\
      \midrule 
         &  DNN~\citep{tan2019efficientnet}          & \textbf{86.5} & 65.3 &	\underline{83.8}    & 83.9 &	46.0 & 	83.2  \\
         &  ENN~\citep{sensoy2018evidential}         & 83.1 & 58.8 &	\textbf{84.5}    & 54.8 &	43.1 & 	56.1   \\
      10 &  E-CNN~\citep{tong2021evidential}         & 28.5 &	22.4 & 74.2    & 23.4 &    21.3 &  68.2   \\
         &  RAPS~\citep{angelopoulos2020sets}        & 80.5 & 49.8 & \underline{83.8}    & 72.5 &    43.7 &  83.2   \\
           \rowcolor{lgray} \cellcolor{white} &  HENN (ours) & \underline{85.9} &	\textbf{88.7}  &	{83.1}  & \textbf{86.2} &	\textbf{85.0} &	\textbf{83.5}   \\
          \midrule
         &  DNN~\citep{tan2019efficientnet}          & \textbf{86.2}  &	69.9 &	82.4   & 83.2 &	50.4 &	82.3   \\
         &  ENN~\citep{sensoy2018evidential}         & 82.8  &	58.4 &	\textbf{84.5}   & 52.8 &	47.6 &	55.7  \\
    15   &  E-CNN~\citep{tong2021evidential}         & 28.7  &	 23.4 &  70.2   & 23.3 & 21.2 &  68.5  \\
         &  RAPS~\citep{angelopoulos2020sets}        & 80.2  &   52.6 &  82.5   & 75.0 & 43.5 &  82.3   \\
         
         \rowcolor{lgray} \cellcolor{white} &  HENN (ours)      & \underline{86.1} &	\textbf{85.4} &	\underline{84.2}  & \textbf{86.2} &	\textbf{83.3} &	\textbf{82.3}  \\
          \midrule
         &  DNN~\citep{tan2019efficientnet}          & 86.2 &	73.1 &	80.6    & 83.2 &	53.8	& 81.7  \\
         &  ENN~\citep{sensoy2018evidential}         & 82.4 &	65.3 &  82.3    & 57.7 &	21.6  	& 59.1 \\
      20 &  E-CNN~\citep{tong2021evidential}         & 28.6 &	   23.6 &  73.5    & 23.4 &    22.5    & 68.2  \\
         &  RAPS~\citep{angelopoulos2020sets}        & 78.8 &	55.2 &	80.6    & 75.4 &    40.2    & 81.7  \\
         \rowcolor{lgray} \cellcolor{white} &  HENN (ours)      & \textbf{86.7} &	\textbf{82.5} &	\textbf{83.4} & \textbf{85.5} &	\textbf{81.0} &	\textbf{83.1}   \\
      \bottomrule 
    \end{tabular}
\end{table*}

\begin{table*}[!ht]
\scriptsize
    \centering
    \caption{Results (\%) of Gaussian kernel size: 7$\times$7 on CIFAR100 and tinyImageNet (based on one run).}\label{tab:cifar_tiny_ker7}
    \begin{tabular}{r|l|ccc|ccc}
      \toprule 
      & & \multicolumn{3}{c|}{\textbf{CIFAR100}}& \multicolumn{3}{c}{\textbf{tinyImageNet}} \\
      $M$ & Methods & OverJS & CompJS  & Acc & OverJS & CompJS  & Acc \\
      \midrule 
  
         &  DNN~\citep{tan2019efficientnet}          & 86.2 &	62.4 &	83.8 &  83.7 &	44.8 &	83.3  \\
         &  ENN~\citep{sensoy2018evidential}         & 82.4 &	30.5 &	84.8 &  46.2 &	43.4 &	83.3  \\
      10 &  E-CNN~\citep{tong2021evidential}         & 28.5 &    22.4 &  74.2 &  23.6 &  21.8 &  68.0 \\
         &  RAPS~\citep{angelopoulos2020sets}        & 80.0 &    49.3 &  83.8 &  71.5 &  43.9 &  83.3  \\ 
         \rowcolor{lgray} \cellcolor{white} &  HENN (ours)      & \textbf{87.0} &	\textbf{82.7} &	\textbf{85.8} &  \textbf{84.3} &	\textbf{86.9} &	\textbf{83.8} \\
         \midrule
 
         &  DNN~\citep{tan2019efficientnet}          & 85.7 &	64.2 &	82.5 &  83.6 &  52.1 &   82.5 \\
         &  ENN~\citep{sensoy2018evidential}         & 82.5 &	39.6 &	83.6 &  48.0 &  42.3 &   82.4  \\
      15 &  E-CNN~\citep{tong2021evidential}         & 28.7 &    23.4 &  70.2 &  23.5 &  21.9 &   68.2 \\
         &  RAPS~\citep{angelopoulos2020sets}        & 78.3 &    51.6 &  82.5 &  73.8 &	43.5 & 	 82.5\\
         \rowcolor{lgray} \cellcolor{white} &  HENN (ours)      & \textbf{86.4} &	\textbf{79.9} &	\textbf{84.3} &  \textbf{84.1} & \textbf{83.0} &  \textbf{83.5} \\

        \midrule
         &  DNN~\citep{tan2019efficientnet}          & 85.3 &	69.8 &	80.5 &  83.5 & 56.4  & 81.7 \\
         &  ENN~\citep{sensoy2018evidential}         & 81.5 &	44.6 &	\textbf{81.8} &  43.3 & 41.2  & 81.7 \\
      20 &  E-CNN~\citep{tong2021evidential}         & 28.6 &    23.7 &  73.4 &  23.3 & 21.5  & 68.2 \\
         &  RAPS~\citep{angelopoulos2020sets}        & 74.4 &    53.2 &  80.5 &  74.1 &	39.3 & 81.7  \\
         \rowcolor{lgray} \cellcolor{white} &  HENN (ours)      & \textbf{85.5} &	\textbf{81.0} &	\underline{80.7} &  \textbf{83.9} & \textbf{81.2} & \textbf{83.1}  \\
      \bottomrule 
    \end{tabular}
\end{table*}

\subsection{Seperation of Singleton and Composite Examples}
Fig.~\ref{app_fig:roc_cifar_other} and \ref{app_fig:roc_tiny_other} show comprehensive ROC curves for CIFAR100 and tinyImageNet based on different $M$s and different Gaussian kernel sizes, which indicates that \textit{vagueness} is the best indicator compared to other different uncertainty measurements.

\begin{figure*}[t!]
\centering
\begin{subfigure}[t]{0.32\textwidth}
\includegraphics[width=4.3cm]{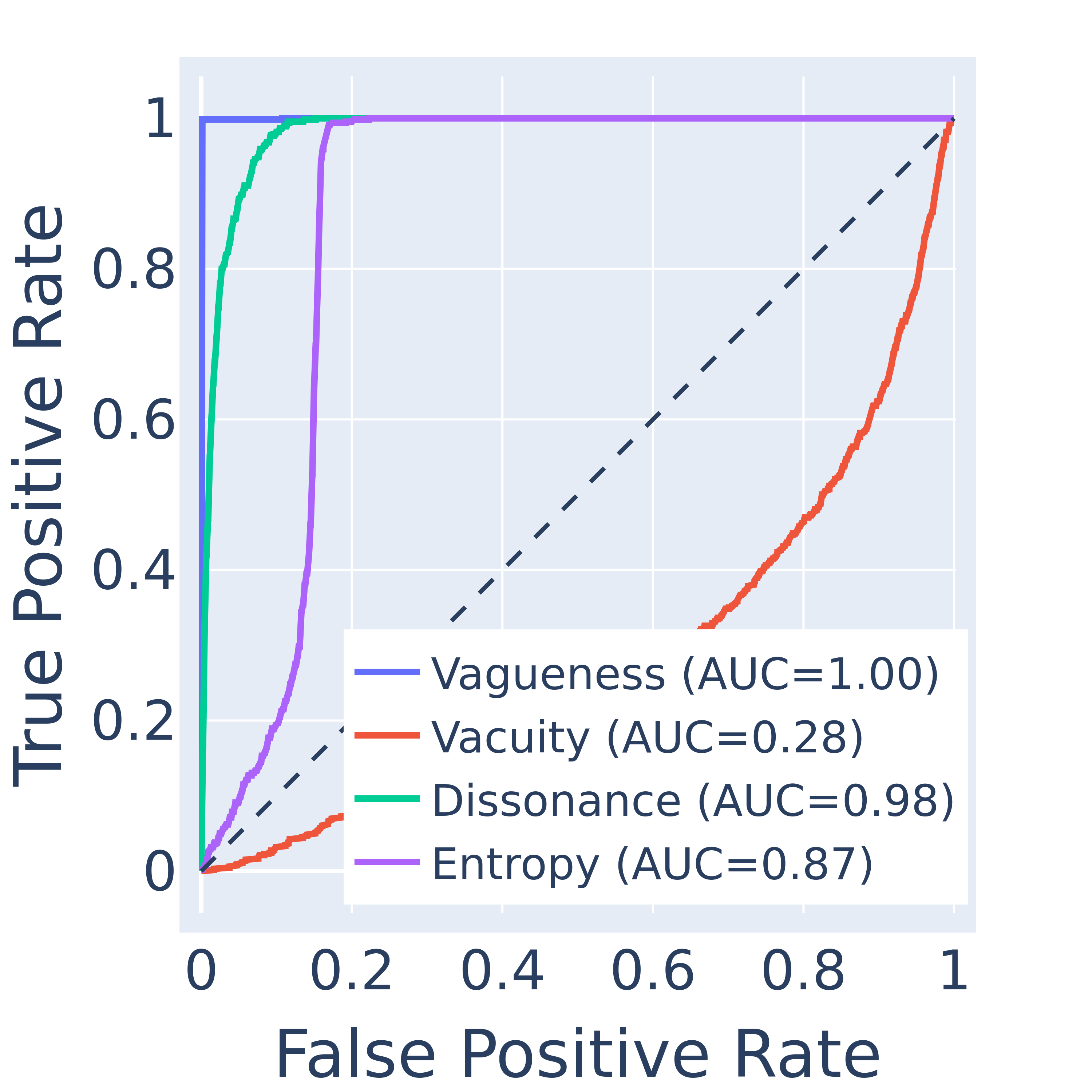}
\caption{$M$=10, Ker=3}
\end{subfigure}
\begin{subfigure}[t]{0.32\textwidth}
\includegraphics[width=4.3cm]{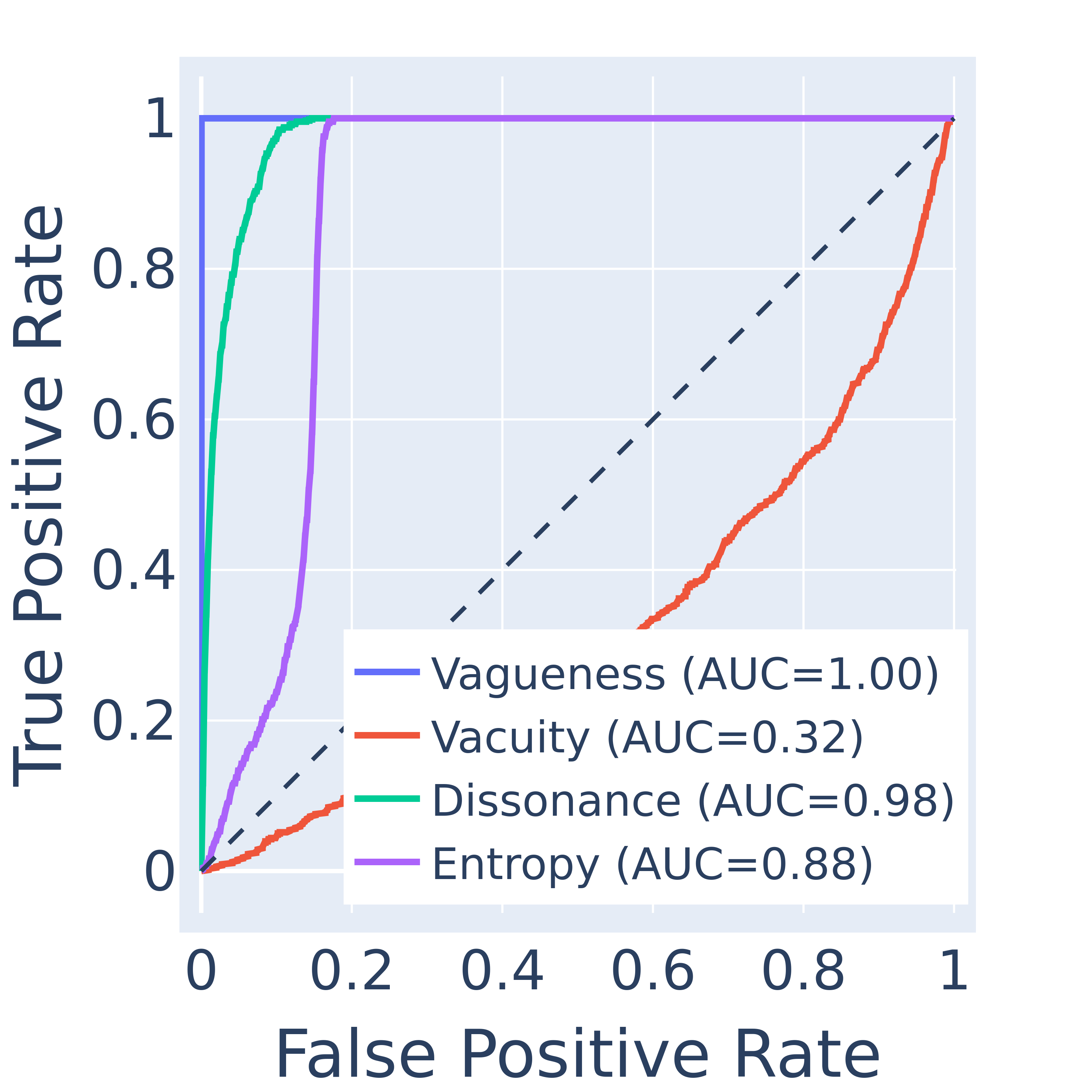}
\caption{$M$=10, Ker=5}    
\end{subfigure}
\begin{subfigure}[t]{0.32\textwidth}
\includegraphics[width=4.3cm]{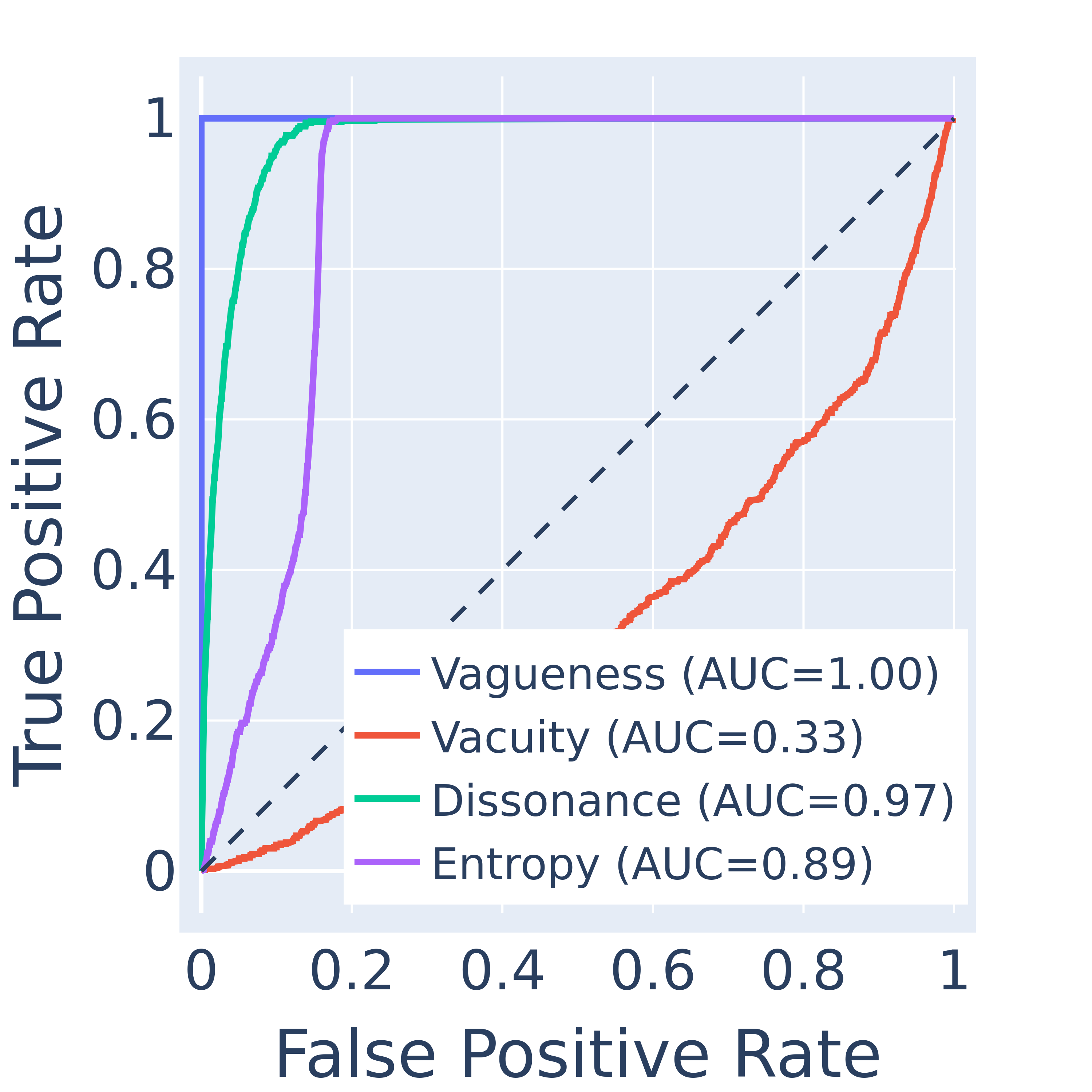}
\caption{$M$=10, Ker=7}
\end{subfigure}
\quad
\begin{subfigure}[t]{0.32\textwidth}
\includegraphics[width=4.3cm]{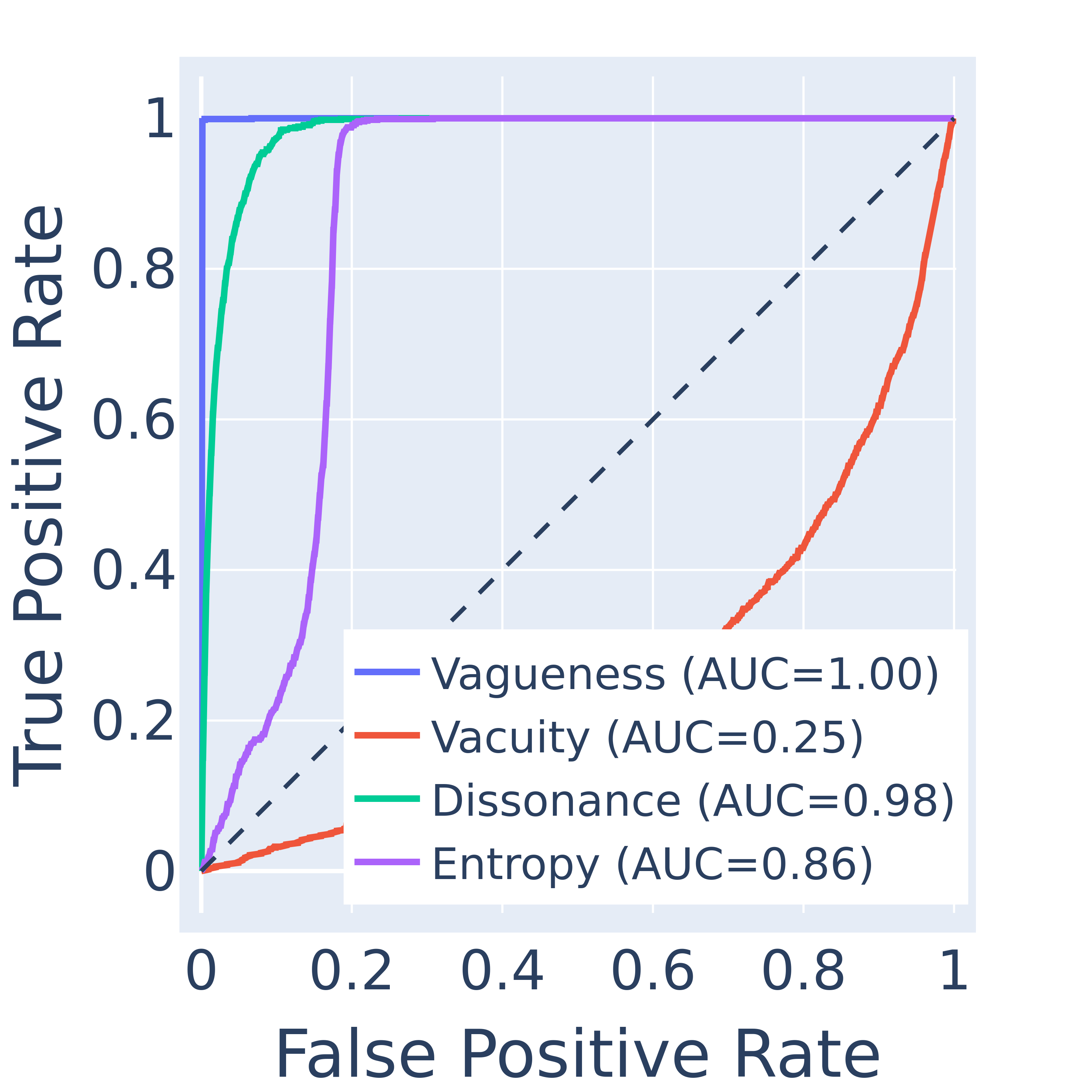}
\caption{$M$=15, Ker=3}
\end{subfigure}
\begin{subfigure}[t]{0.32\textwidth}
\includegraphics[width=4.3cm]{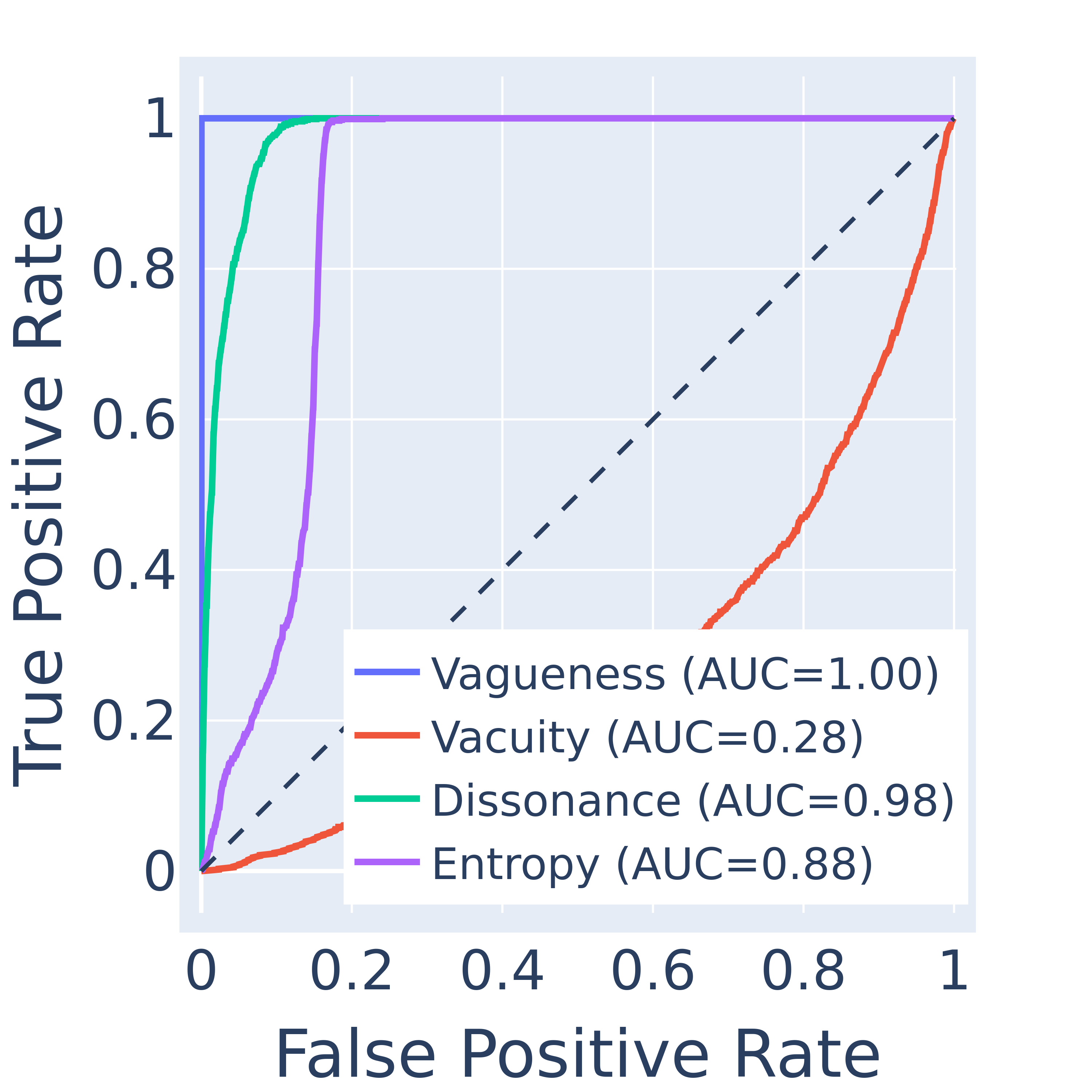}
\caption{$M$=15, Ker=5}
\end{subfigure}
\begin{subfigure}[t]{0.32\textwidth}
\includegraphics[width=4.3cm]{Figures/CIFAR100/15M_KernelSize_7.png}
\caption{$M$=15, Ker=7}    
\end{subfigure}
\quad
\begin{subfigure}[t]{0.32\textwidth}
\includegraphics[width=4.3cm]{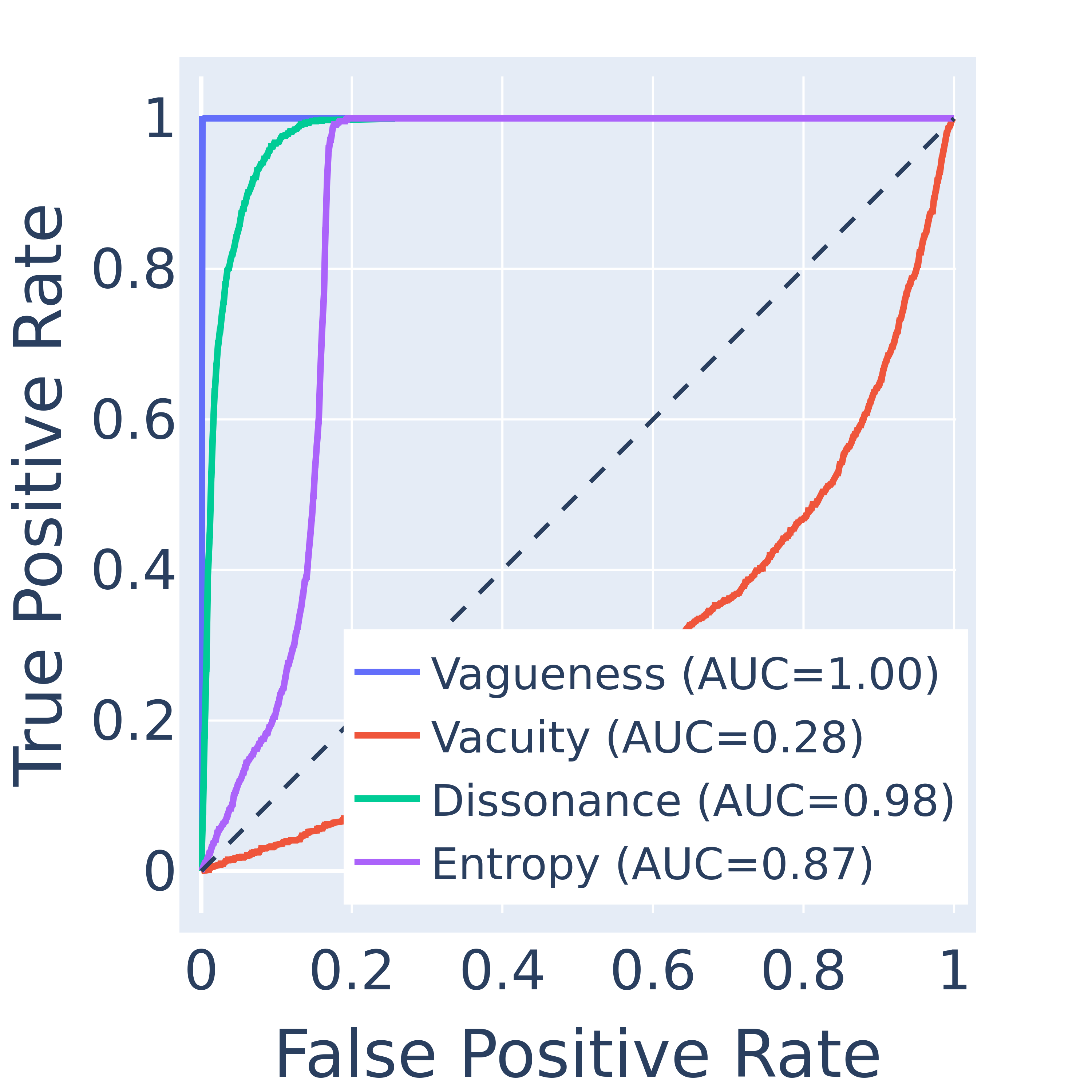}
\caption{$M$=20, Ker=3}
\end{subfigure}
\begin{subfigure}[t]{0.32\textwidth}
\includegraphics[width=4.3cm]{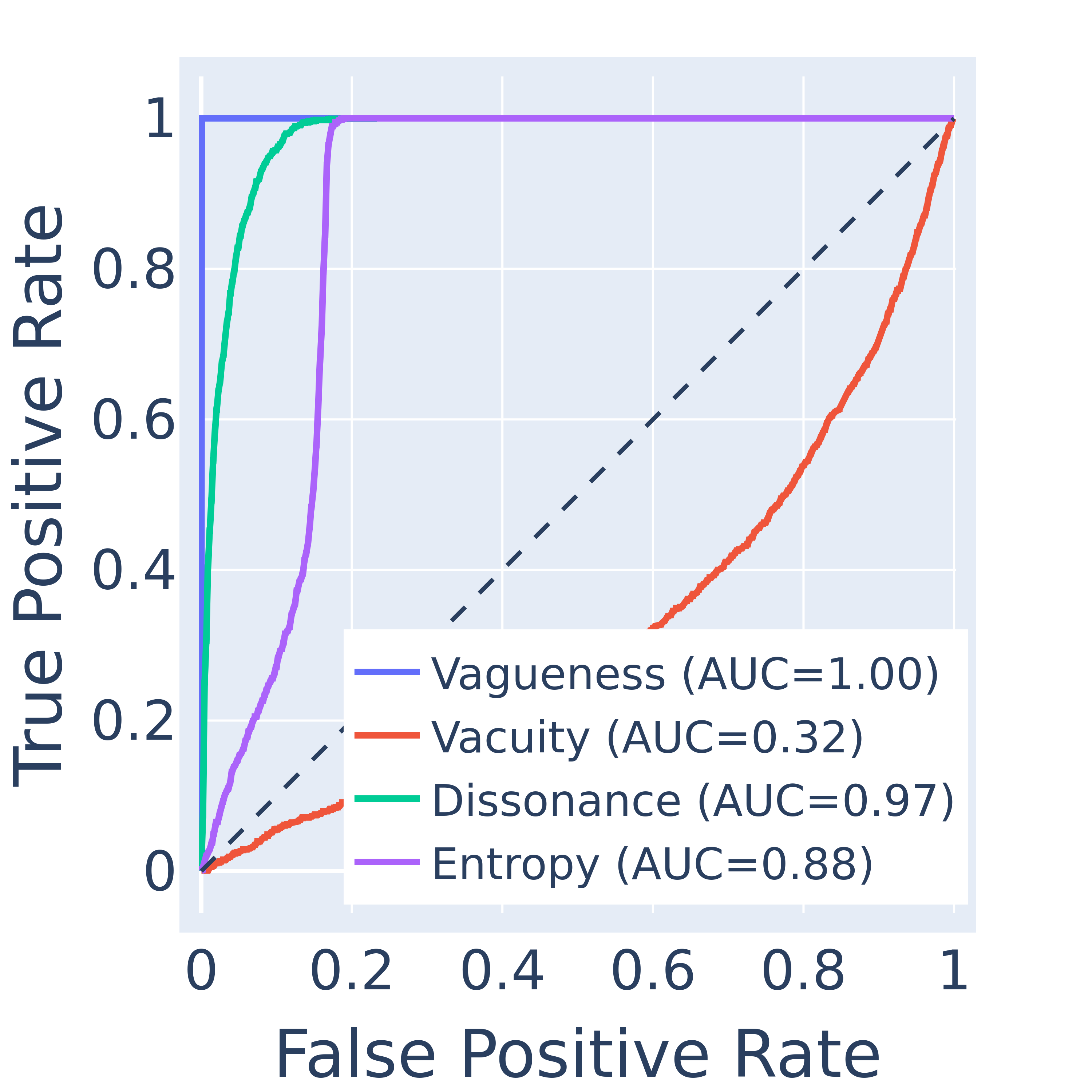}
\caption{$M$=20, Ker=5}
\end{subfigure}
\begin{subfigure}[t]{0.32\textwidth}
\includegraphics[width=4.3cm]{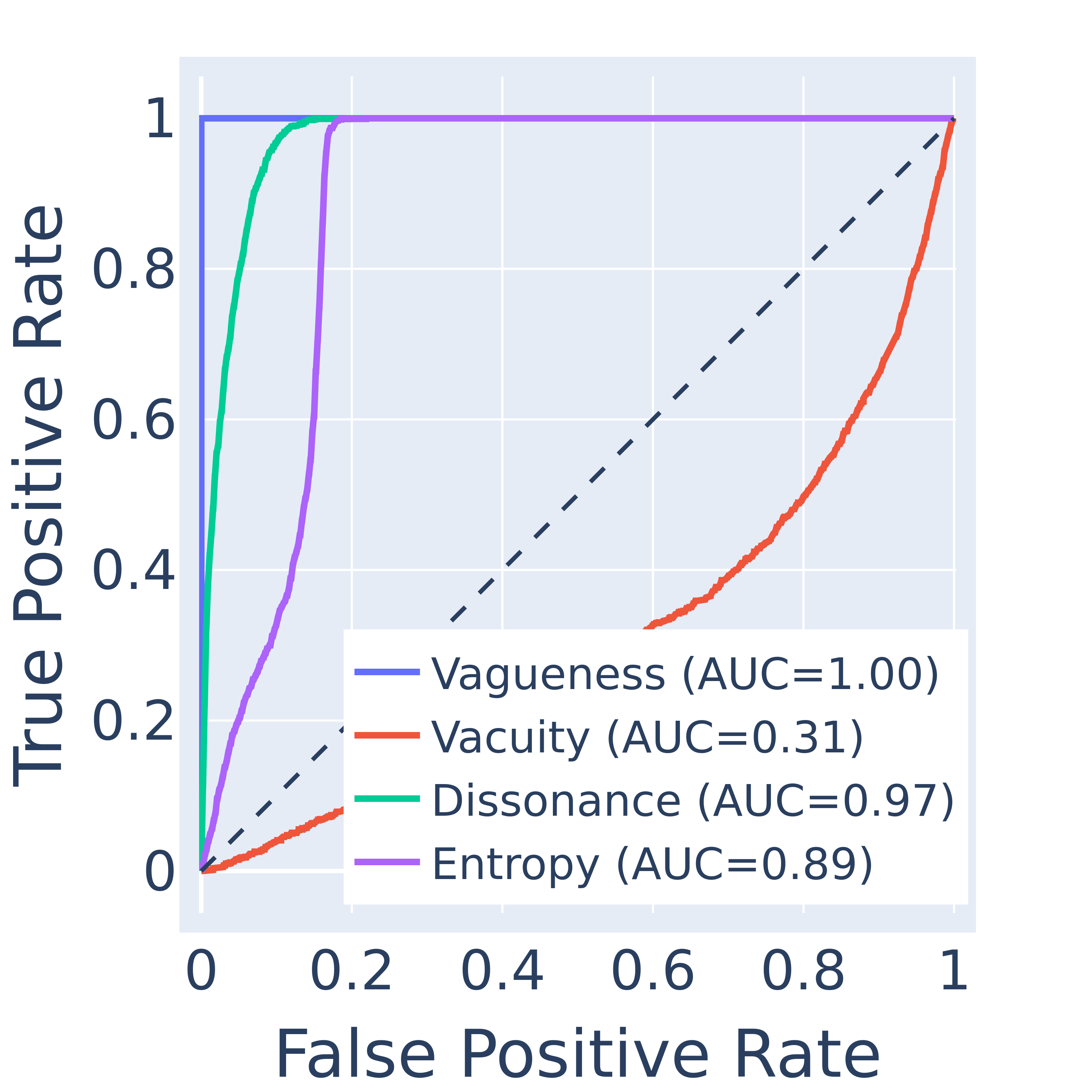}
\caption{$M$=20, Ker=7}
\end{subfigure}

\caption{ROC curves of separating composite examples and singleton examples among different measurements: \textit{vagueness} of HENN, \textit{vacurity} of ENN, \textit{dissonance} of ENN, and \textit{entropy} of DNN on CIFAR100 for different numbers of selected composite classes and kernel sizes ("Ker" represents "kernel size").}
\label{app_fig:roc_cifar_other}
\end{figure*}

\begin{figure*}[t!]
\centering
\begin{subfigure}[t]{0.32\textwidth}
\includegraphics[width=4.3cm]{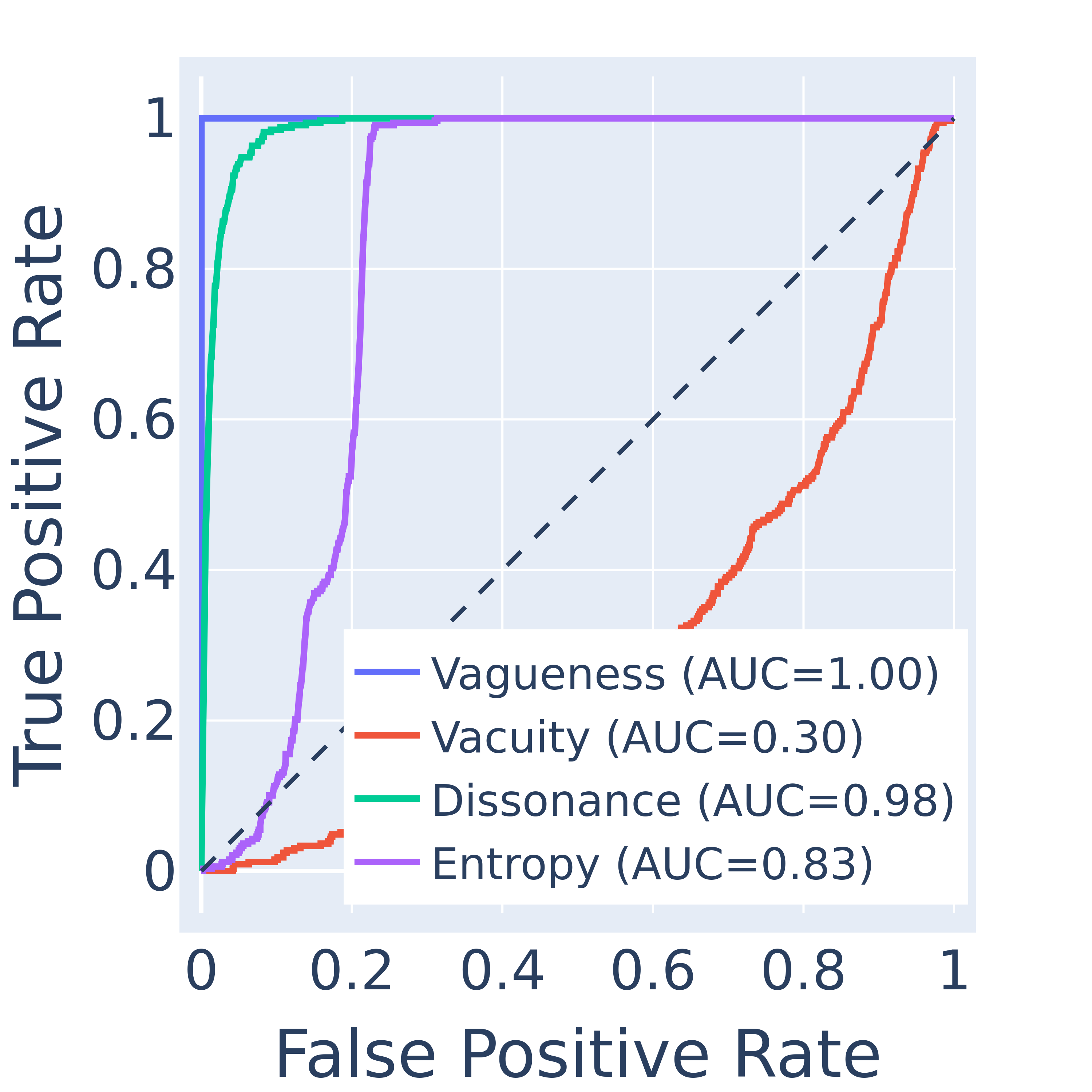}
\caption{$M$=10, Ker=3}
\end{subfigure}
\begin{subfigure}[t]{0.32\textwidth}
\includegraphics[width=4.3cm]{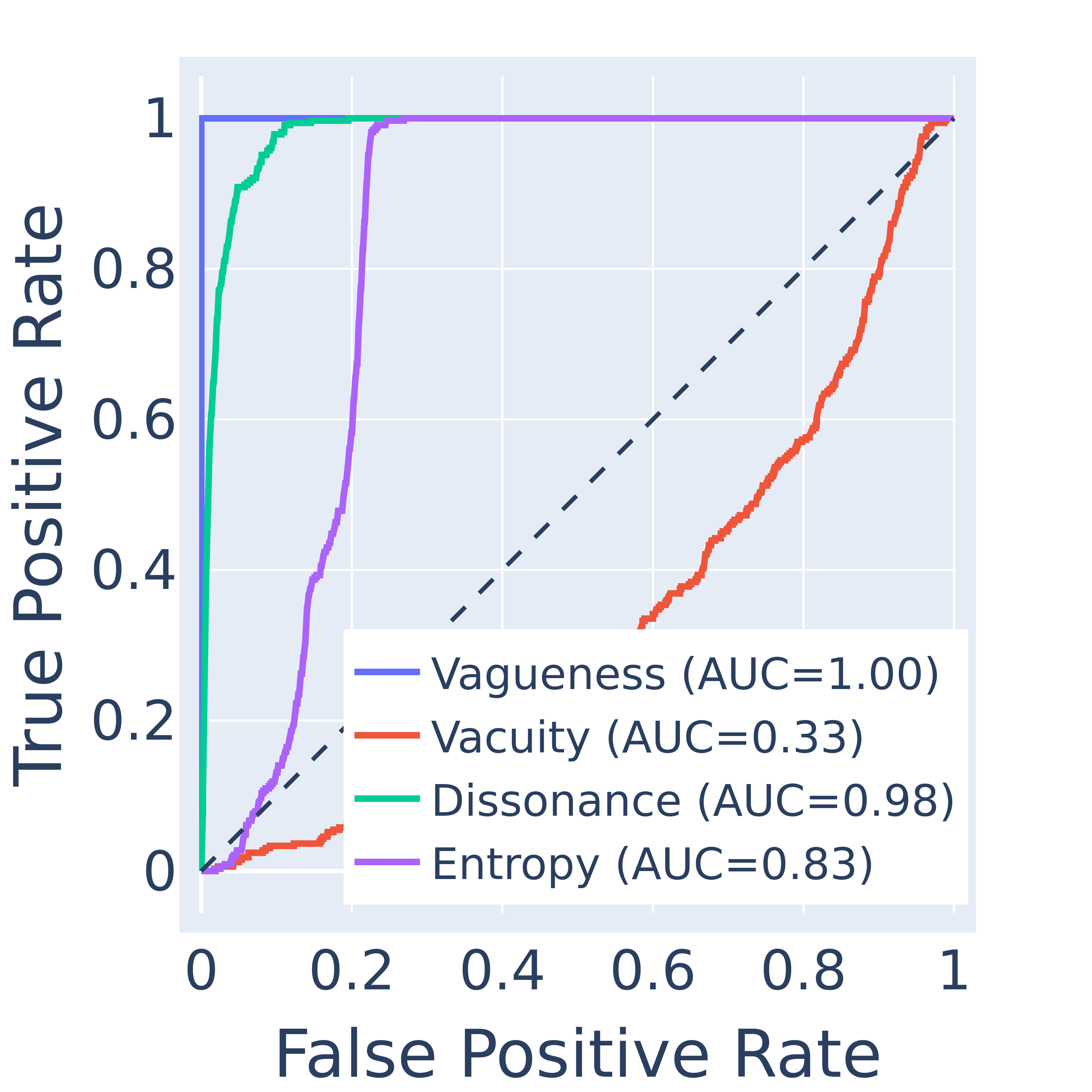}
\caption{$M$=10, Ker=5}    
\end{subfigure}
\begin{subfigure}[t]{0.32\textwidth}
\includegraphics[width=4.3cm]{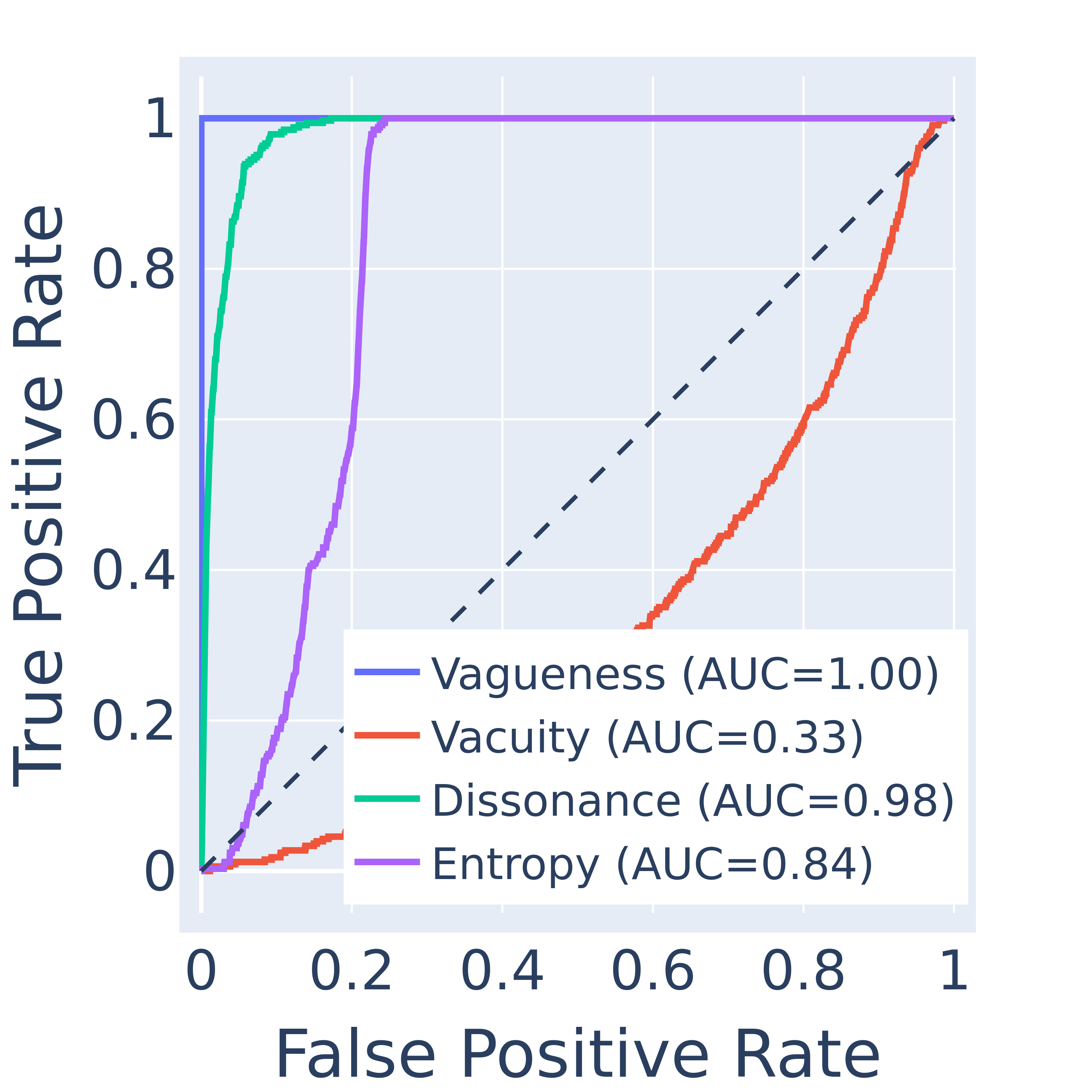}
\caption{$M$=10, Ker=7}
\end{subfigure}
\quad
\begin{subfigure}[t]{0.32\textwidth}
\includegraphics[width=4.3cm]{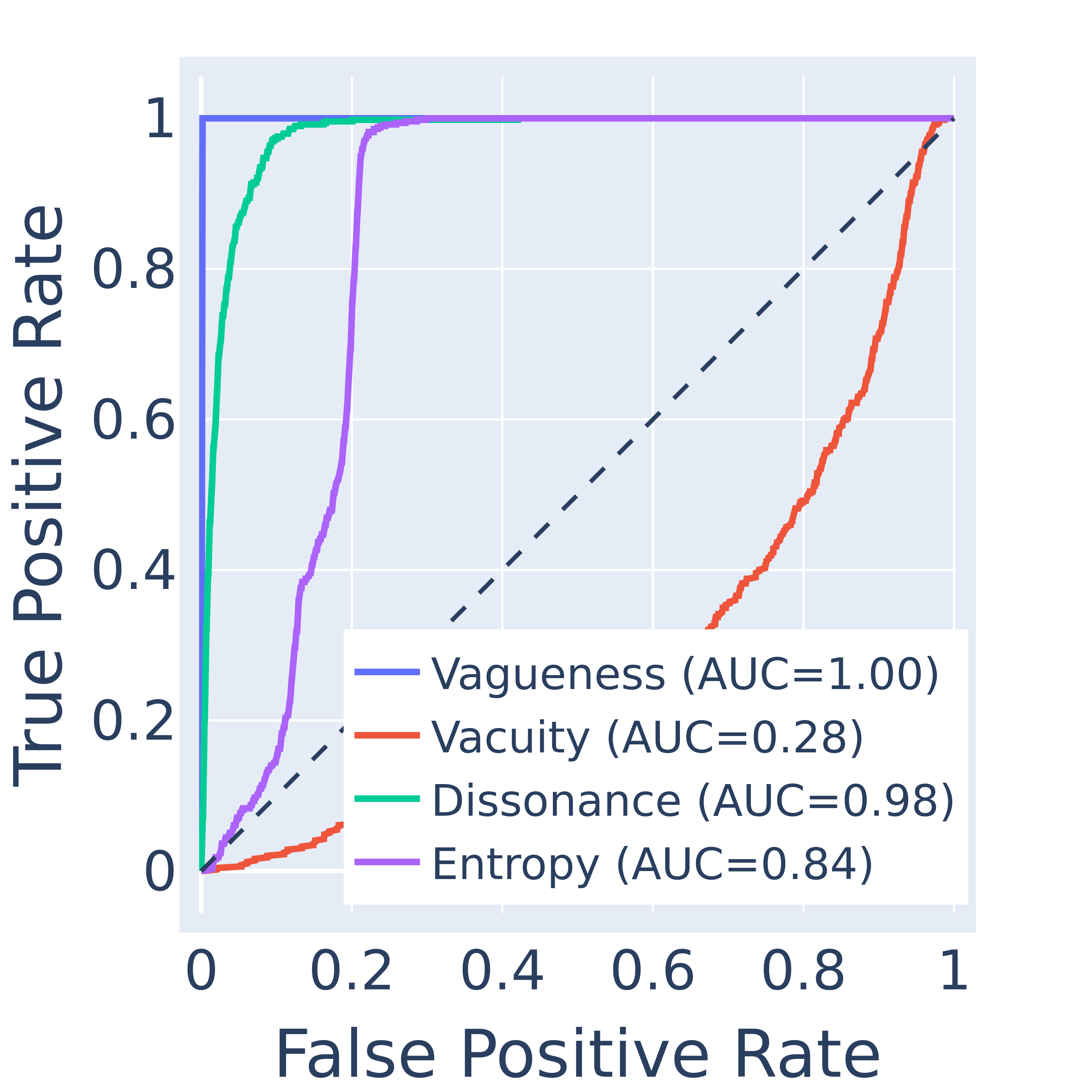}
\caption{$M$=15, Ker=3}
\end{subfigure}
\begin{subfigure}[t]{0.32\textwidth}
\includegraphics[width=4.3cm]{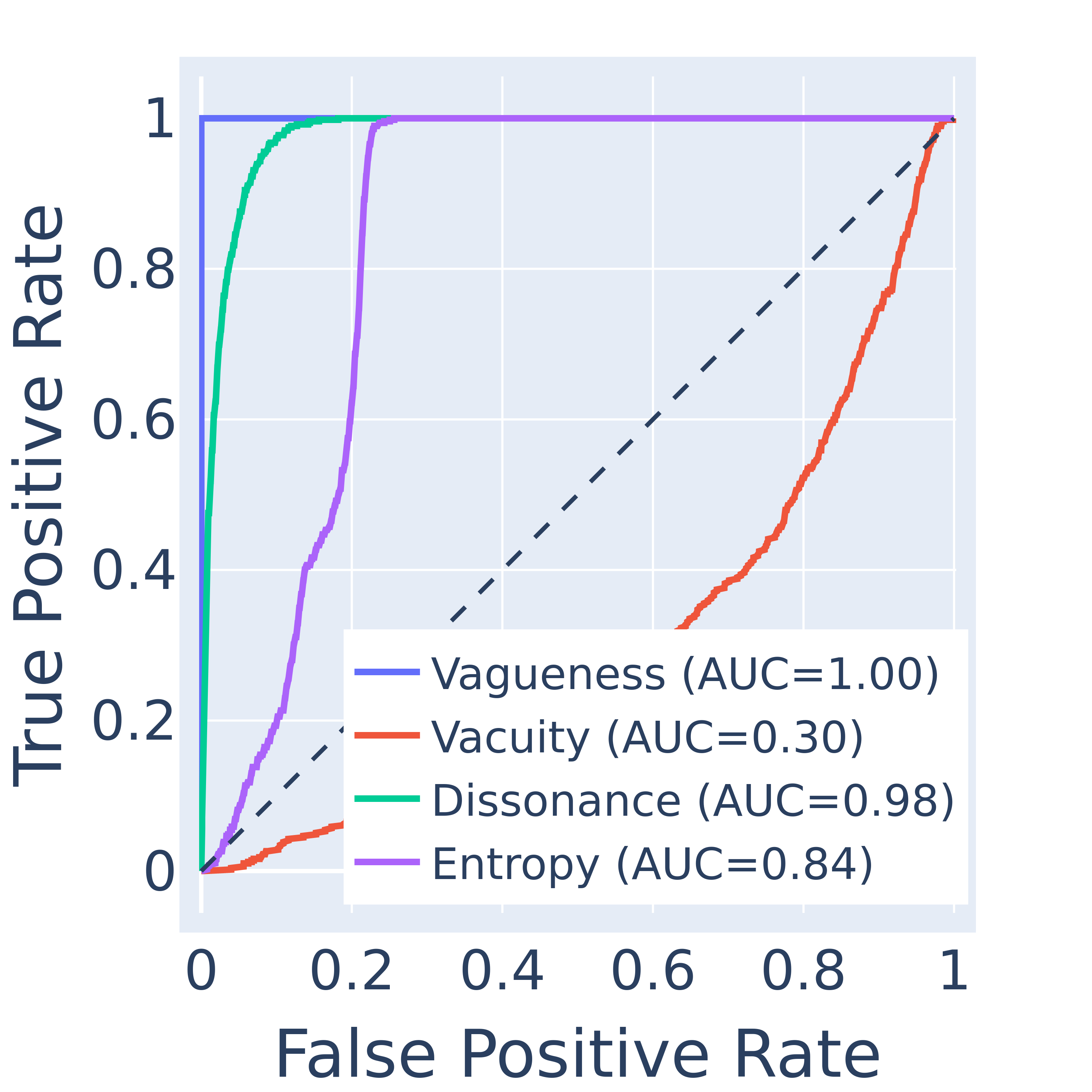}
\caption{$M$=15, Ker=5}
\end{subfigure}
\begin{subfigure}[t]{0.32\textwidth}
\includegraphics[width=4.3cm]{Figures/TinyImageNet/15M_KernelSize_7.png}
\caption{$M$=15, Ker=7}
\end{subfigure}
\quad
\begin{subfigure}[t]{0.32\textwidth}
\includegraphics[width=4.3cm]{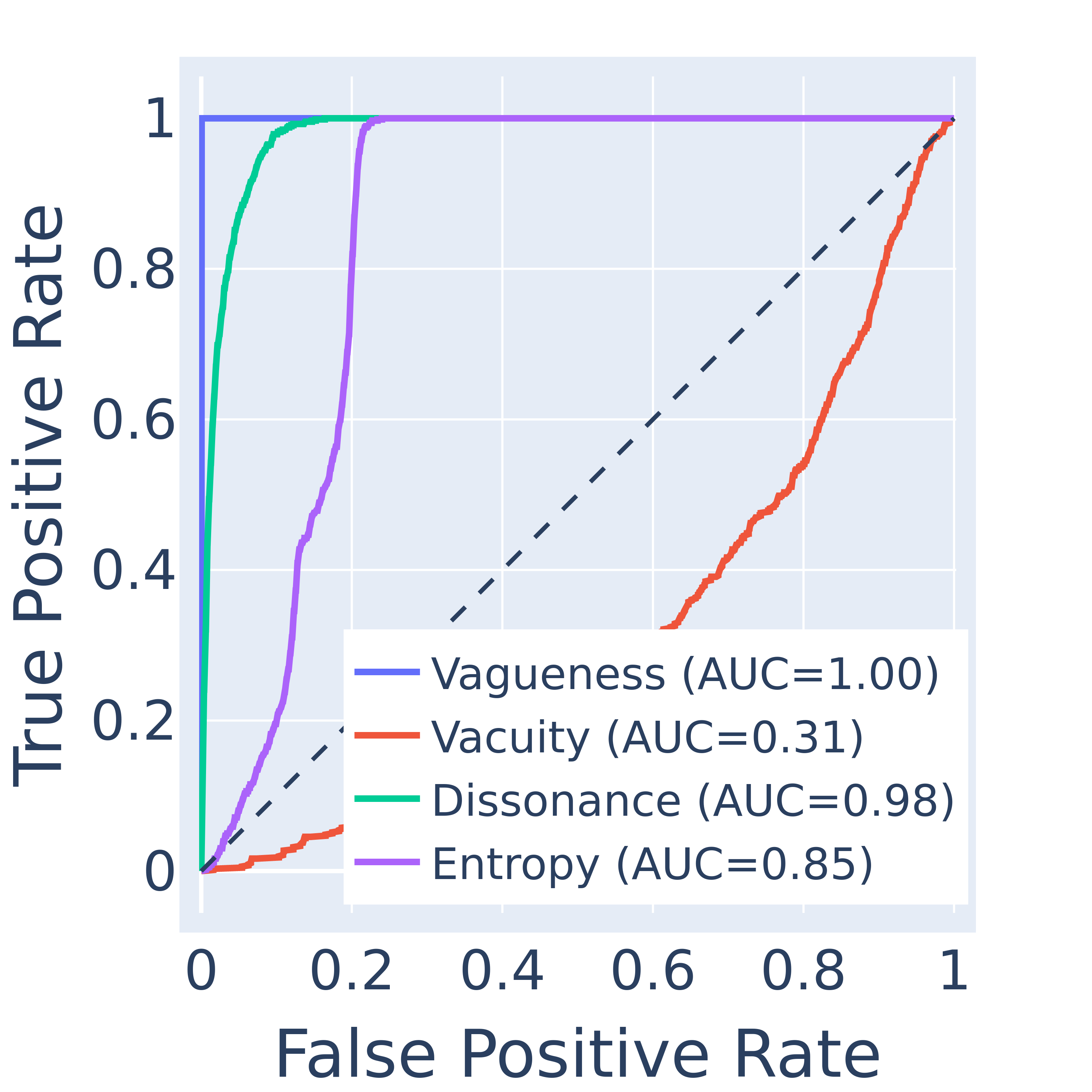}
\caption{$M$=20, Ker=3}
\end{subfigure}
\begin{subfigure}[t]{0.32\textwidth}
\includegraphics[width=4.3cm]{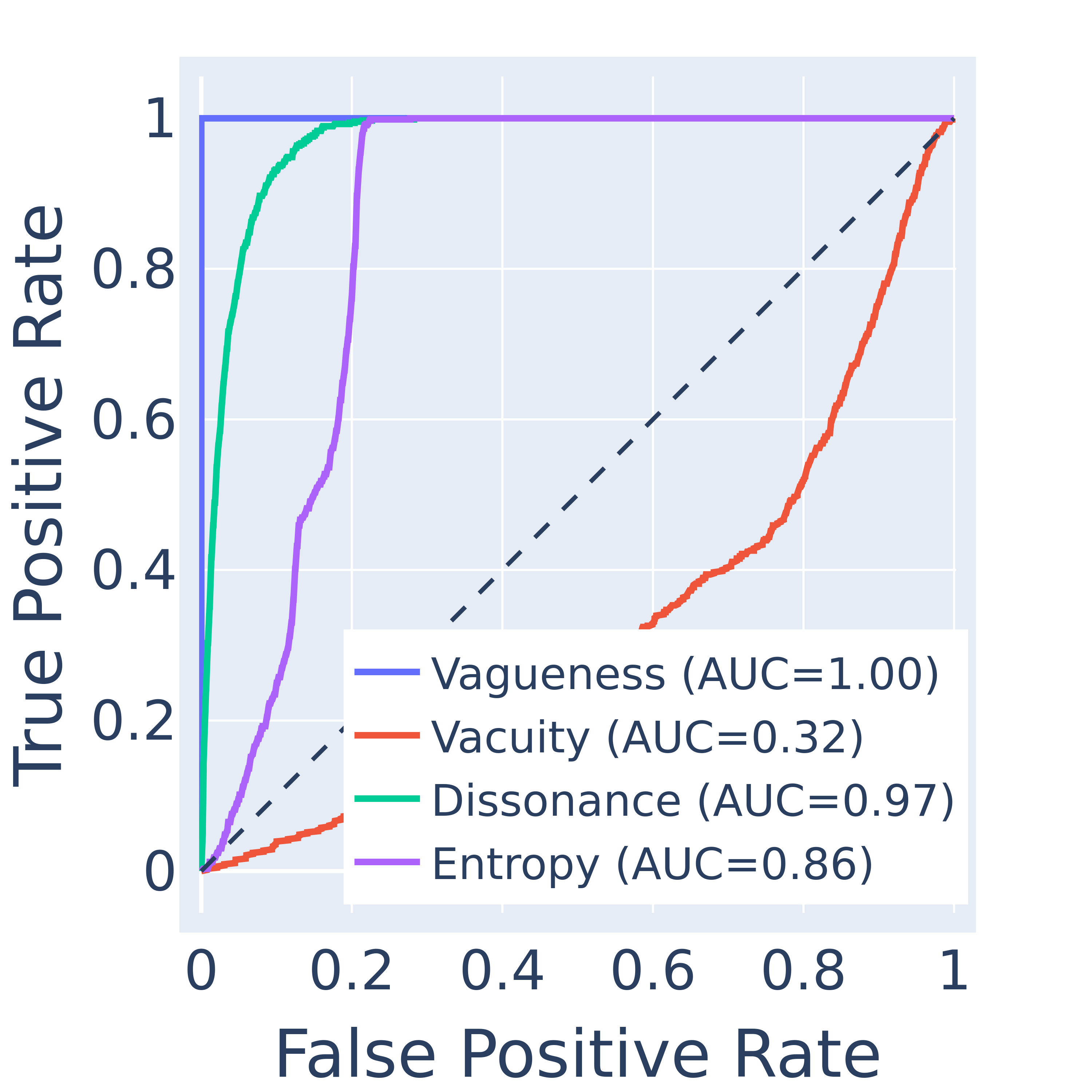}
\caption{$M$=20, Ker=5}
\end{subfigure}
\begin{subfigure}[t]{0.32\textwidth}
\includegraphics[width=4.3cm]{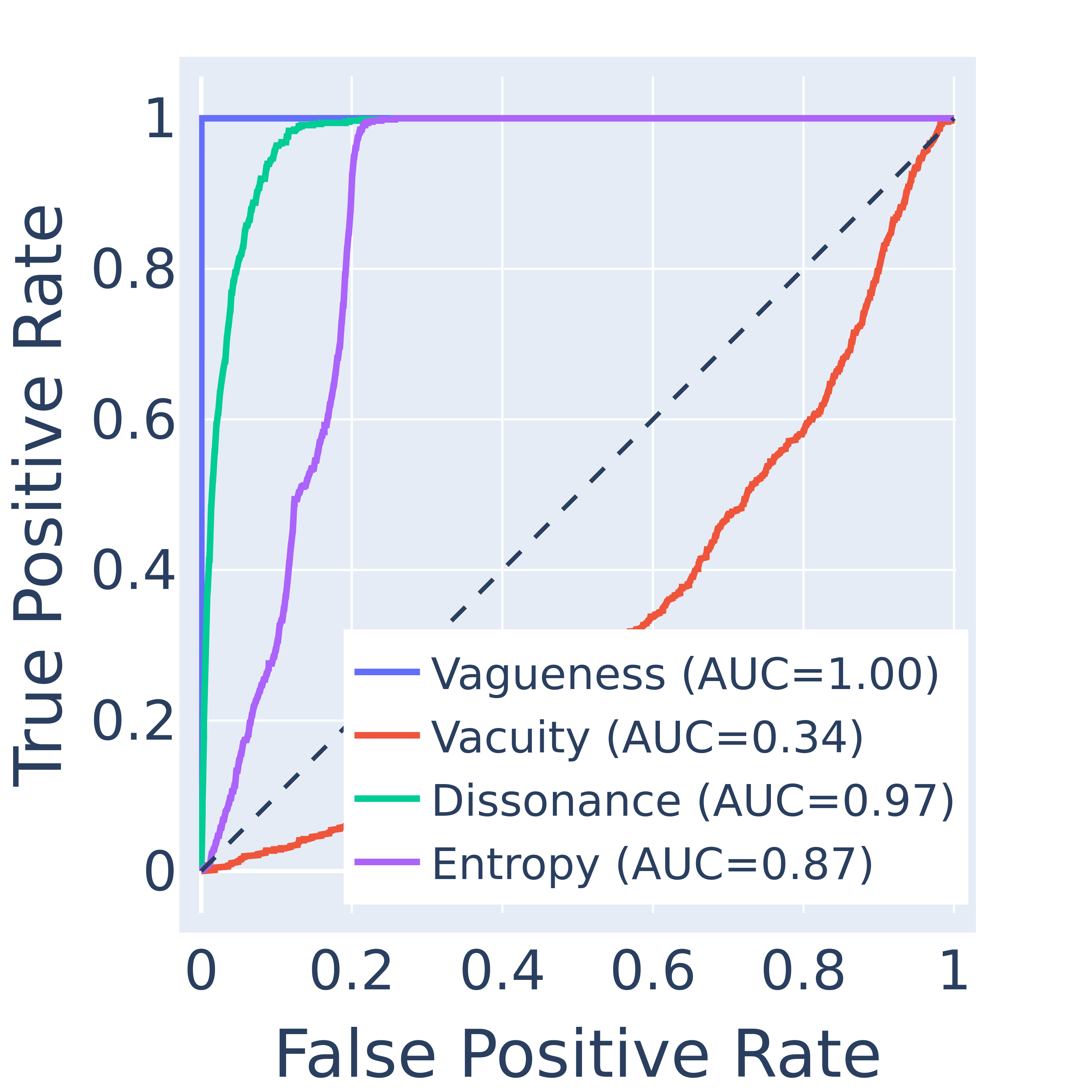}
\caption{$M$=20, Ker=7}
\end{subfigure}

\caption{ROC curves of separating composite examples and singleton examples among different measurements: \textit{vagueness} of HENN, \textit{vacurity} of ENN, \textit{dissonance} of ENN, and \textit{entropy} of DNN on tinyImageNet for different numbers of selected composite classes and kernel sizes. ("Ker" represents "kernel size")}
\label{app_fig:roc_tiny_other}
\end{figure*}

\clearpage
\subsection{Ablation Study on Regularizer}
\label{app:regularizer_coefficient}
To explore the effect of Regularizer for singleton evidence, we try different tradeoff coefficients $\lambda$ for KL regularization in our HENN method. Experiments are conducted on CIFAR100 with a pre-trained EfficientNet-b3 model and a fixed learning rate 1e-5. The results are represented in Table~\ref{tab:effect_reg}. \changbinRebuttal{Without this regularization term, the CompJS is 0, which means that the model does not predict composite prediction, and the Acc is 78.5\%.} 
\changbinRebuttal{The reason is minimizing UPCE loss solely cannot provide sufficient composite evidence output for those composite sets. In this way, the composite examples will have relatively low accuracy compared to singleton ones. If the coefficient $\lambda$ increases, demonstrating a larger preference on flat GDDs instead of UPCE minimizer, the evidence for singleton classes will reduce to be flat as we proved in Proposition 2.} Therefore, the CompJS will no longer be zero since the model tend to replace confusing singleton evidence for composite examples. \changbinRebuttal{$\lambda = 0.1$ gives us the best performance on both the composite prediction metrics (OverJS, CompJS) as well as singleton prediction accuracy (Acc).} This means that HENN can predict composite class labels but also has good \changbinRebuttal{singleton class label} prediction. \changbinRebuttal{This ablation study verifies the importance of the regularization term and shows a fine-tuned tradeoff hyperparameter can provide reliable composite and singleton prediction simultaneously. In addition, the OverallJS remains high across different choices of $\lambda$s, demonstrating the robustness of our method.}

\begin{table*}[!ht]
\small 
    \centering
    \caption{Effect of Regularizer: Different trade-off coefficient $\lambda$ on CIFAR100 with pretrained EfficientNet-b3 model and 1e-5 learning rate.}
\label{tab:effect_reg}
    \begin{tabular}{l|cccc}
      \toprule 
      $\lambda$ & OverJS & CompJS & Acc \\
      \midrule 
         0      & 76.4 &	0.0    &	78.5   \\
         0.01   & 83.6 &	76.3 &	85.1  \\
         0.1    & \textbf{87.9} & \textbf{87.7} & \textbf{85.3}  \\
         1.0    & 81.8 & 72.0 & 79.7   \\ 
    \bottomrule 
    \end{tabular}
\end{table*}

\subsection{Additional Results}


\begin{table*}[!ht]
\scriptsize
    \centering
    \caption{Results (\%) of NAbirds based on the pre-trained EfficientNet-b3 backbone. (\small The average and 95\% confidence interval of three runs are provided based on three runs.)}
    \label{tab:nabirds}
    \begin{tabular}{l|ccc}
      \toprule 
      Methods & OverJS & CompJS  & Acc  \\
      \midrule 
  
    DNN~\citep{tan2019efficientnet}          & 77.38{\tiny $\pm$0.19} & 35.24{\tiny $\pm$3.52}	& 78.04{\tiny $\pm$0.27}  \\
    ENN~\citep{sensoy2018evidential}         & 76.72{\tiny $\pm$0.56} & 37.46{\tiny $\pm$2.39}	& 78.45{\tiny $\pm$0.31}  \\
    \rowcolor{lgray} \cellcolor{white} HENN (ours)      & \textbf{80.01}{\tiny $\pm$0.37} & \textbf{71.42}{\tiny $\pm$1.43}	&\textbf{80.14}{\tiny $\pm$0.35}  \\
      \bottomrule 
    \end{tabular}
\end{table*}

\subsubsection{Experiments on Fine-Grained dataset: NAbirds}
We also conduct experiments on one fine-grained dataset:  NAbirds~\citep{van2015building}. It has 555 different categories of birds and each category has around 50 images for both training and test set. According to the provided class hierarchy information, these 555 subclasses can be divided into 404 groups (superclasses). After filtering out superclasses which has only a single subclass, the same procedure as previous four datasets (TinyImageNet, Living17, Nonliving26, and CIFAR100) is applied to randomly select 10 composite class labels. Tab.~\ref{tab:nabirds} shows results based on the fine-grained dataset NAbirds~\citep{van2015building}. Consistent with previous experiments on four datasets, HENN outperforms DNN and ENN for a large margin in terms of CompJS. And HENN also performs better in terms of OverJS and Acc.

\begin{table*}[!ht]
\scriptsize
    \centering
    \caption{Results (\%) on CIFAR10 based on two backbones. (\small The average and 95\% confidence interval of three runs are provided based on five runs.).}
    \label{tab:cifar10}
    \begin{tabular}{l|ccc|ccc}
      \toprule 
      & \multicolumn{3}{c|}{\textbf{ResNet18}}& \multicolumn{3}{c}{\textbf{EfficientNet-b3}} \\
      Methods & OverJS & CompJS  & Acc & OverJS & CompJS  & Acc \\
      \midrule 
  
    DNN~\citep{tan2019efficientnet}          & 79.73{\tiny $\pm$0.33} &	40.10{\tiny $\pm$7.06} &	82.17{\tiny $\pm$0.54} &  92.53{\tiny $\pm$0.11}  &	 53.59{\tiny $\pm$3.15}  &	 96.49{\tiny $\pm$0.21}  \\
    ENN~\citep{sensoy2018evidential}         & 67.09{\tiny $\pm$0.75} &	46.80{\tiny $\pm$0.06} &	82.75{\tiny $\pm$0.19} & 77.84{\tiny $\pm$3.86}   &  54.83{\tiny $\pm$0.59}	 &	 96.82{\tiny $\pm$0.38} \\
    E-CNN~\citep{tong2021evidential}         &  59.68{\tiny $\pm$0.62} &  31.84{\tiny $\pm$0.81}    &  66.23{\tiny $\pm$1.47} &  63.65{\tiny $\pm$0.93}  &  34.74{\tiny $\pm$2.91} &  68.98{\tiny $\pm$0.72}  \\
    RAPS~\citep{angelopoulos2020sets}        &  62.60{\tiny $\pm$0.46} & 33.80{\tiny $\pm$4.86} & 82.17{\tiny $\pm$0.54}  &  65.70{\tiny $\pm$0.80}  &  39.40{\tiny $\pm$2.29}   &  96.49{\tiny $\pm$0.21}    \\ 
         \rowcolor{lgray} \cellcolor{white} HENN (ours)      & \textbf{80.74}{\tiny $\pm$0.17} &	\textbf{51.44}{\tiny $\pm$1.02} &	\textbf{83.03}{\tiny $\pm$0.14} &  \textbf{93.38}{\tiny $\pm$0.06} &	\textbf{72.87}{\tiny $\pm$1.25} &	\textbf{97.52}{\tiny $\pm$0.04} \\
      \bottomrule 
    \end{tabular}
\end{table*}

\begin{table*}[!ht]
\scriptsize
    \centering
    \caption{Results (\%) on tinyImageNet-20 based on the ResNet18 backbone. (\small The average and 95\% confidence interval of three runs are provided based on five runs.).}
    \label{tab:tinyImageNet-20}
    \begin{tabular}{l|ccc}
      \toprule 
      & \multicolumn{3}{c}{\textbf{ResNet18}} \\
      Methods & OverJS & CompJS  & Acc  \\
      \midrule 
  
    DNN~\citep{tan2019efficientnet}          & 40.03{\tiny $\pm$0.29} & 24.70{\tiny $\pm$1.85}	& 42.20{\tiny $\pm$1.24}  \\
    ENN~\citep{sensoy2018evidential}         & 36.44{\tiny $\pm$1.65} & 22.78{\tiny $\pm$1.48}	& 42.45{\tiny $\pm$1.15}  \\
    \rowcolor{lgray} \cellcolor{white} HENN (ours)      & \textbf{42.43}{\tiny $\pm$0.78} & \textbf{25.32}{\tiny $\pm$1.87}	& \textbf{43.93}{\tiny $\pm$1.23}  \\
      \bottomrule 
    \end{tabular}
\end{table*}

\begin{figure*}[!ht]
\centering
\begin{subfigure}[t]{0.32\textwidth}
    \includegraphics[width=4.3cm]{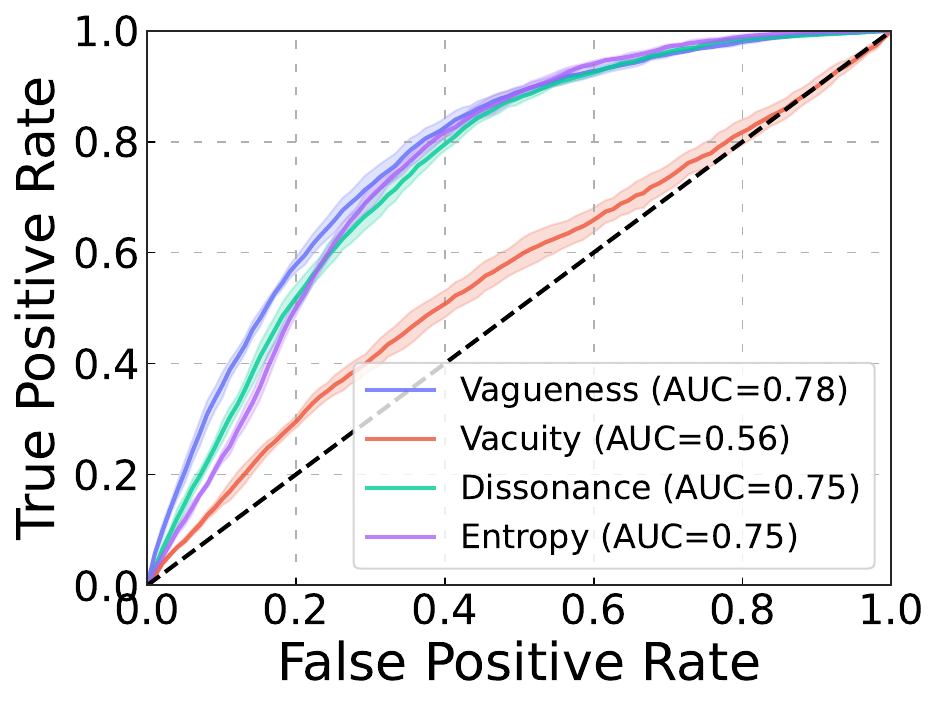}
\caption{CIFAR10 (ResNet18)}
\label{fig:auc_cifar10_1}
\end{subfigure}
\begin{subfigure}[t]{0.32\textwidth}
    \includegraphics[width=4.3cm]{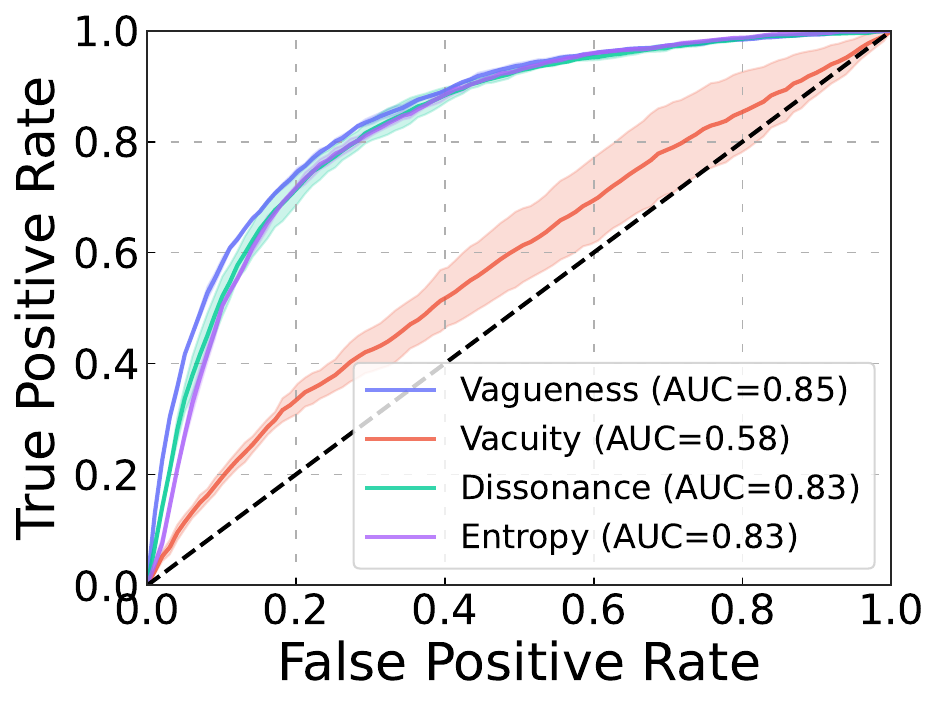}
\caption{CIFAR10 (EfficientNet-b3)}
\label{fig:auc_cifar10_2}
\end{subfigure}
\begin{subfigure}[t]{0.32\textwidth}
    \includegraphics[width=4.3cm]{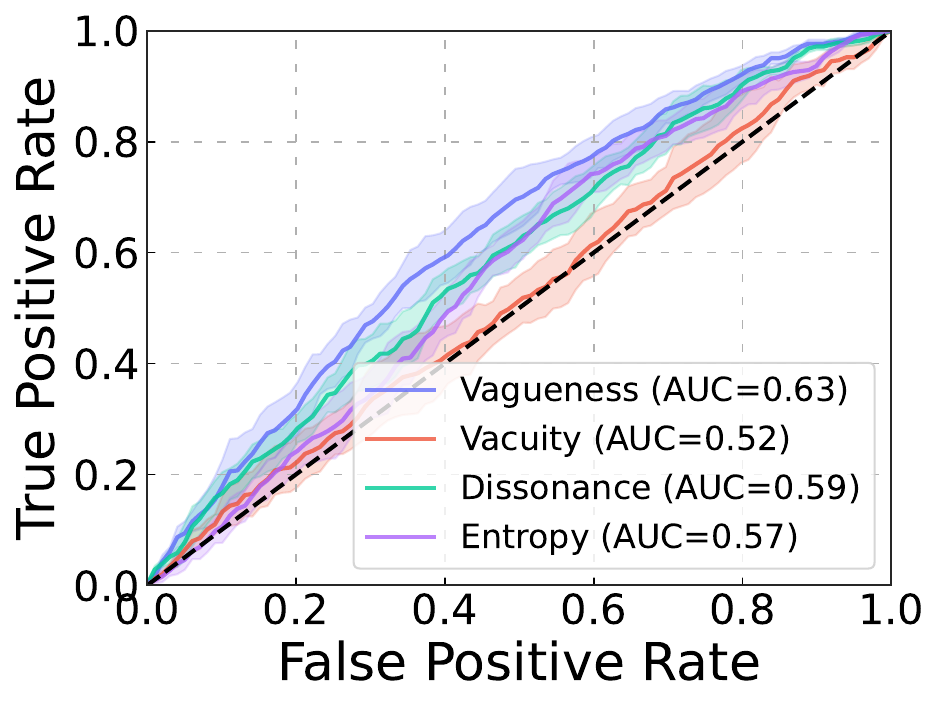}
\caption{tinyImageNet-20 (ResNet18)}
\label{fig:auc_tiny20}
\end{subfigure}

\caption{AUC curves of different uncertainty types: Vagueness, Vacuity, Dissonance, and Entropy for two datasets. (a) CIFAR10 based on ResNet18 training from scratch; (b) CIFAR10 fine-tuned on pre-trained EfficientNet-b3; (c) tinyImageNet-20 based on ResNet18 training from scratch.}
\end{figure*}

\subsubsection{Real-world dataset with composite class labels}
\changbinRebuttal{We admit that the datasets with Gaussian blurring are semi-synthetic. From a sizable pool of applicants, we selected 23 students from our department and tasked them with annotating images in the CIFAR10 dataset and one subset of tinyImageNet (renamed as tinyImageNet-20), categorizing each as either a singleton class or a composite set. This effort successfully resulted in a real-world dataset enriched with human-annotated singleton and composite labels.} Tab.~\ref{tab:cifar10} shows results based on real-world dataset CIFAR10~\cite{krizhevsky2009learning} based on two backbones. HENN outperforms DNN and ENN for a large margin. Fig.~\ref{fig:auc_cifar10_1} and ~\ref{fig:auc_cifar10_2} show AUROC curves and scores for different metrics: vagueness, dissonance, vacuity, and entropy. It demonstrates that vagueness is a good indicator to identify whether the image is singleton-labeled or composite-labeled, indicating HENN's advantage. 

\changbinRebuttal{
\subsubsection{Another Data Corruption}

\begin{table*}[!ht]
\scriptsize
    \centering
    \caption{Results (\%) of BREEDS-Living-17 based on the pre-trained EfficientNet-b3 backbone. (\small The average and 95\% confidence interval of three runs are provided based on three runs).}
    \label{tab:bicubic}
    \begin{tabular}{l|ccc}
      \toprule 
    Methods & OverJS & CompJS  & Acc  \\
      \midrule 
         DNN~\citep{tan2019efficientnet}          & 87.28{\tiny $\pm$0.23} &	74.61{\tiny $\pm$2.57} &	84.35{\tiny $\pm$0.36}   \\
         ENN~\citep{sensoy2018evidential}         & 87.46{\tiny $\pm$0.34} &	69.44{\tiny $\pm$3.25} &	85.38{\tiny $\pm$0.28} 	  \\
         RAPS~\citep{angelopoulos2020sets}        &  85.38{\tiny $\pm$0.32}& 62.10{\tiny $\pm$0.26}     & 84.35{\tiny $\pm$0.36}    \\ 
         \rowcolor{lgray} \cellcolor{white} HENN (ours)      & \textbf{88.09}{\tiny $\pm$0.21} &	\textbf{96.33}{\tiny $\pm$3.54} &	\textbf{86.12}{\tiny $\pm$0.37}  \\
      \bottomrule 
    \end{tabular}
\end{table*}

Besides the Gaussian blurring we used, the Bicubic transformation was also examined as detailed in Tab.{\ref{tab:bicubic}}. The findings from this analysis align consistently with the result of the experiment based on Gaussian blurring.  
}

\changbinRebuttal{
\subsubsection{Case Study On HENN Trained with Exclusive Singleton Class Data}

\begin{table*}[!ht]
\small 
    \centering
    \caption{Case Study: HENN Trained with Exclusive Singleton Class Data.}
\label{tab:case_study_singleton}
    \begin{tabular}{l|ccc}
      \toprule 
      Methods &  ENN & HENN \\
      \midrule 
      Acc(\%) &     85.82 $\pm$ 1.0   &  85.81 $\pm$ 2.4\\
    \bottomrule 
    \end{tabular}
\end{table*}

Tab.{\ref{tab:case_study_singleton}} presents the accuracy results from the ENN and HENN methods trained and evaluated on CIFAR100 with only singleton class data (without Gaussian blurring and label replacement) across 5 trials each. The mean accuracy and the standard deviation are reported. Under a traditional singleton classification setting, HENN still shows comparable performance in terms of accuracy compared to ENN model. A notable advantage of HENN is its ability to quantify an additional type of uncertainty compared to ENN with minimal performance degradation observed even if the training data consists of exclusive singleton ones.

}

\changbinRebuttal{
\subsubsection{Case Study on Evidence Output} \label{case study lambda}
In this section, we show the effect of the regularization coefficient $\lambda$ by demonstrating its impact on the output evidence throughout a case study. To verify our intuitions for Proposition 2, we set experiments to inspect the non-zero ratio of both singleton evidence and composite evidence among all testing examples. A small positive threshold value of $\gamma = 10^{-4}$ is introduced to determine whether the mean singleton or composite evidence is non-zero while adapting to the computation precision in practice. In other words, for each testing data point, we calculate the predicted evidence $f(\mathbf{x}; \bm{\theta}) = (\bm{\alpha}, \mathbf{c})$, and based on the given hyper-domain, it is feasible to get mean evidence in singleton domain $\bar{\bm{\alpha}} = \frac{1}{|\mathbbm{Y}|}\sum_{k = 1}^{|\mathbbm{Y}|}\alpha_k$, and the composite domain $\bar{\mathbf{c}} = \frac{1}{|\mathscr{C}(\mathbbm{Y})|}\sum_{j = 1}^{|\mathscr{C}(\mathbbm{Y})|}c_j$. Define the indicator function of non-zero singleton prediction as 
\begin{equation}
g_{\text{sngl}}(\bar{\bm{\alpha}}) =
\left \{
\begin{aligned}
& 1, &  \text{if }  \bar{\bm{\alpha}} \geq \gamma, \\
& 0, & \text{otherwise}, \\
\end{aligned}
\right.
\quad  \quad g_{\text{comp}}(\bar{\mathbf{c}}) =
\left \{
\begin{aligned}
& 1, &  \text{if }  \bar{\mathbf{c}} \geq \gamma, \\
& 0, & \text{otherwise}, \\
\end{aligned}
\right.
\end{equation}
and the non-zero ratios are $\mathtt{nz}_{\text{sngl}} = \frac{1}{N_{\text{test}}}\sum_{i = 1}^{N_{\text{test}}} g_{\text{sngl}}(\bar{\bm{\alpha}})^{(i)}, \mathtt{nz}_{\text{comp}} = \frac{1}{N_{\text{test}}}\sum_{i = 1}^{N_{\text{test}}} g_{\text{comp}}(\bar{\mathbf{c}})^{(i)}$ by taking the mean of all testing samples. The case study is carried out on CIFAR100 with EfficientNet-b3 backbone with the same setting as in previous sections. By controlling regularization coefficient $\lambda$ at different levels of value, the predicted evidence from HENN is listed in Table \ref{tab:case_study}, indicating that larger regularization can adjust the evidence distribution to be more balanced between singleton and composite parts. Oppositely, a lower regularization coefficient or without any regularization can result in concentrating predictive evidence only on the singleton part.
}
\begin{table*}[!ht]
\small 
    \centering
    \caption{Case Study: Effectiveness of regularization term on evidence distribution.}
\label{tab:case_study}
    \begin{tabular}{l|ccc}
      \toprule 
      $\lambda$ & $\mathtt{nz}_{\text{sngl}}$ & $\mathtt{nz}_{\text{comp}}$ \\
      \midrule 
         0.01   & 71.52\%   & 71.72\% \\
         $10^{-4}$    & 100.0\% & 0.04\%  \\
         $10^{-8}$    & 100.0\% & 0.2\%   \\ 
         0      & 100.0\% &	0.3\%      \\
    \bottomrule 
    \end{tabular}
\end{table*}

\fengRebuttal{
\textbf{Empirical verification of Propositions \ref{prop:limit} and Eq.\ref{eq:kl_reg_main_paper}.} Our case study on CIFAR100 in App.~\ref{case study lambda} demonstrates observations consistent with our propositions: (1) HENN trained based on UPCE and a training set consisting of only singleton class labels predicts non-zero evidence on \changbinRebuttal{composite class labels} for $14.4\%$ of the training samples even that the training set does not have evidence of composite class labels to accumulate, and (2) The HENN trained based on UPCE and a training set consisting of only composite class labels predicts non-zero evidence on singleton class labels for $100.0\%$ of the training samples even that the training set does not have evidence of singleton class labels to accumulate. Our proposed regularization can avoid these unexpected behaviors. The UAP for neural networks has been studied~\citep{ leshno1993multilayer} and recently used in the theoretical analyses of ENN-related paper for graph data~\citep{NEURIPS2023Russell}. }

\end{document}